\pdfoutput=1
\documentclass[final]{article} %

\usepackage[nonatbib]{neurips_2024}

\usepackage{todonotes}

\usepackage{amsthm}
\usepackage{amsmath,amssymb,amsfonts}
\usepackage{xspace}
\usepackage[utf8]{inputenc} %
\usepackage[T1]{fontenc}    %
\usepackage{microtype}
\usepackage[hidelinks]{hyperref}

\usepackage[capitalize,noabbrev]{cleveref}
\usepackage{mathtools} %
\usepackage[shortcuts]{extdash}

\newtheorem{thm}{Theorem}
\newtheorem{definition}[thm]{Definition}
\newtheorem{lemma}[thm]{Lemma}

\newtheorem{claim}[thm]{Claim}

\crefname{claim}{Claim}{Claims}
\newtheorem{assumption}{Assumption}

\newtheorem{remark}{Remark}

\crefname{equation}{equation}{equations}

\usepackage{custom_citations}
\bibliography{references}

\let\etoolboxforlistloop\forlistloop %
\usepackage{autonum}
\let\forlistloop\etoolboxforlistloop %

\setlength{\marginparwidth}{2.8cm}
\usepackage{setspace}

\usepackage{algorithm,algpseudocode}
\algtext*{EndWhile}%
\algtext*{EndIf}%
\algtext*{EndFor}%

\author{%
  David Janz\\
  University of Oxford\thanks{Work carried out while David Janz was a postdoc with Csaba Szepesv\'ari at the University of Alberta.}\\
  \texttt{david.janz@stats.ox.ac.uk} \\
  \And
  Alexander E. Litvak \\
    University of Alberta\\
  \texttt{alitvak@ualberta.ca}\\
  \And
  Csaba Szepesv\'ari\\
  University of Alberta\\
 \texttt{szepesva@ualberta.ca}
}

\title{Ensemble sampling for linear bandits:\\small ensembles suffice}

\newcommand{\norm}[1]{\|#1\|}
\renewcommand{\phi}{\varphi}

\newcommand{\cH}{\mathcal{H}}
\newcommand{\cN}{\mathcal{N}}
\newcommand{\cE}{\mathcal{E}}
\newcommand{\cF}{\mathcal{F}}

\newcommand{\cU}{\mathcal{U}}
\newcommand{\cA}{\mathcal{A}}
\newcommand{\cX}{\mathcal{X}}

\newcommand{\Beta}[2]{\mathrm{Beta}(#1, #2)}

\newcommand{\Rd}{\mathbb{R}^d}
\newcommand{\FF}{\mathbb{F}}

\newcommand{\N}{\mathbb{N}}
\newcommand{\Np}{\mathbb{N}^+}
\newcommand{\R}{\mathbb{R}}
\newcommand{\E}{\mathbb{E}}
\renewcommand{\P}{\mathbb{P}}
\newcommand{\1}[1]{\mathbf{1}[#1]}

\newcommand{\deriv}{\,\mathrm{d}}

\newcommand{\ThetaOpt}{\Theta^\mathrm{OPT}}

\newcommand*{\tran}{^{\mkern-1.5mu\mathsf{T}}}

\newcommand{\spaced}[1]{\quad\text{#1}\quad}

\DeclareMathOperator*{\argmax}{arg\,max}

\newcommand{\sg}{subgaussian\xspace}

\newcommand{\Bd}{\mathbb{B}^d_2}
\newcommand{\Sd}{\mathbb{S}^{d-1}_2}

\newcommand{\pscale}{r}

\begin{document}
\maketitle

\begin{abstract}
We provide the first useful and rigorous analysis of ensemble sampling for the stochastic linear bandit setting. In particular, we show that, under standard assumptions, for a $d$-dimensional stochastic linear bandit with an interaction horizon $T$, ensemble sampling with an ensemble of size of order $\smash{d \log T}$ incurs regret at most of the order $\smash{(d \log T)^{5/2} \sqrt{T}}$. Ours is the first result in any structured setting not to require the size of the ensemble to scale linearly with $T$---which defeats the purpose of ensemble sampling---while obtaining  near $\smash{\sqrt{T}}$ order regret. Our result is also the first to allow for infinite action sets.
\end{abstract}

\section{Introduction}

Ensemble sampling \citep{lu2017ensemble} is a family of randomised algorithms for balancing exploration and exploitation within sequential decision-making tasks. These algorithms maintain an ensemble of perturbed models of the value of the available actions and, in each step, select an action that has the highest value according to a model chosen uniformly at random from the ensemble. The ensemble is then incrementally updated the new observations. 

Ensemble sampling was introduced as an alternative to Thompson sampling \citep{thompson1933likelihood} that is tractable whenever incremental model updates are cheap \citep{osband2016deep,lu2017ensemble}.
It is thus particularly popular in deep reinforcement learning, where the models (neural networks) are trained via gradient descent in an incremental fashion. Therein, ensemble sampling features directly in algorithms such as \emph{bootstrapped DQN} and \emph{ensemble+} \citep{osband2016deep,osband2018randomized}, as part of other methods \citep{dimakopoulou2018coordinated, curi2020efficient}, and as the motivation for methods, including \emph{random network distillation} \citep{burda2018exploration} and \emph{hypermodels for exploration} \citep{dwaracherla2020hypermodels}. Ensemble sampling has also been applied to online recommendation systems \citep{lu2018efficient,hao2020low,zhu2021deep}, the behavioural sciences \citep{eckles2019bootstrap} and to marketing \citep{yang2020targeting}.

Despite its practicality and simple nature, ensemble sampling has thus far resisted analysis. Indeed, \citet{qin2022analysis}, who showed a bound of order $\sqrt{T}$ on its \emph{Bayesian regret} under the impractical condition that the size of the ensemble scales at least linearly with the horizon $T$, summarise the state-of-the-art in this regard as follows:
\begin{quote}
    \emph{`A lot of work has attempted to analyze ensemble sampling, but none of them has been successful.'}
\end{quote}

Readers familiar with the literature know that the analysis of randomised exploration methods, such as Thompson sampling, or perturbed history exploration \citep{Kveton2019PerturbedHistoryEI,janz2023exploration}, relies on showing that at each step there is a constant probability of choosing a model that overestimates the true value \citep{agrawal2013thompson,abeille2017linear}. For these methods, the analysis is relatively simple, because the algorithms can be described as first fitting a model to the past data, and then using a fresh source of randomness to perturb the model, which is then used to derive the action to be used. Ensemble sampling does not fit this pattern: here, the distribution of models given the past is controlled only implicitly, making the analysis challenging.

Our contribution is a guarantee that (a symmetrised version of) ensemble sampling, given an ensemble size logarithmic in $T$ and linear in the number of features $d$, incurs regret no worse than order $\smash{(d \log T)^{5/2} \sqrt{T}}$. This is the first successful analysis of ensemble sampling in the stochastic linear bandit setting, or indeed, any structured setting (see \cref{remark:incorrect-result,remark:bayes-es-comparison} for a discussion of this claim). Our result is based on (a slight extension of) the now-standard framework of \citet{abeille2017linear}, and as such it ought to be possible to extend it to the usual settings: generalised linear bandits \citep{filippi2010parametric}, kernelised bandits \citep{srinivas2009gaussian}, deep learning \citep[via the neural tangent kernel, per][]{jacot2018neural} and reinforcement learning \citep[following the work of][]{zanette2020frequentist}.

\section{Linear ensemble sampling}

We now outline our problem setting, a version of the linear ensemble sampling algorithm and our main result, and discuss these in the context of prior literature. We encourage readers less familiar with the motivation around ensemble sampling to consult \citet{lu2017ensemble} or \citet{osband2019deep}; our focus will be on analysis. We will use the following notation:
\begin{description}
    \item[Sets and real numbers] We write $\Np = \{1, 2, \dotsc\}$, $\N = \Np \cup \{0\}$ and $[n] = \{1, 2, \dotsc, n\}$. We use the shorthand $(a_t)_t$ for a sequence indexed by $t$ in $\N$ or $\Np$ when this index set can be deduced from the context. For $a,b\in \R$, that is, for real numbers $a$ and $b$, $a \wedge b$ denotes their minimum and $a \vee b$ their maximum.
    \item[Vectors and matrices] We identify vectors and linear operators with matrices. We write $\|\cdot\|_2$ for the 2-norm of a vector, and $\|\cdot\|$ for the operator norm of a matrix corresponding to the largest singular value, which we denote by $s_1(\cdot)$; we write $s_2(\cdot), s_3(\cdot), \dotsc$ for the remaining singular values, ordered descendingly. For vectors $u,v$ of matching dimension, $\langle u,v\rangle = u\tran v$ denotes their usual inner product. We write $\Bd$ for the 2-ball in $\Rd$, $\Sd = \partial \Bd$ for its surface, the $(d-1)$-sphere, and $H_u$ for the closed half-space $\{ v \in \Rd \colon \langle u,v \rangle \geq 1\}$.
    \item[Probability] We work on a probability space with probability measure $\P$, and denote the corresponding expectation operator by $\E$. We write $\sigma(\cdot)$ for the $\sigma$-algebra generated by the argument. For an event $A$, we write $\1{A}$ for the indicator function of $A$.
    \item[Uniform distribution] We write $\cU(B)$ for the uniform distribution over a set $B$ (when well-defined), and $\cU(B)^{\otimes m}$ for its $m$-fold outer product; that is, $(U_1, \dotsc, U_m) \sim \cU(B)^{\otimes m}$ are $m$ i.i.d.\ random variables with common law $\cU(B)$. 
\end{description}

\subsection{Problem setting: stochastic linear bandits}
We consider the standard stochastic linear bandit setting. At each step $t\in \Np$, a learner selects an action $X_t$ from an action set $\cX$, a compact subset of $\Bd$, and receives a random reward $Y_t \in \R$ that obeys the following assumption:
\begin{assumption}
    \label{ass:sg}
    At each time step $t \in \Np$, the agent receives a reward of the form $Y_t = \langle X_t, \theta^\star \rangle + Z_t$ with $\theta^\star \in \Bd$ a fixed instance parameter and $Z_t$ a random variable satisfying 
    \begin{equation}
        \E[\exp\{s Z_t\} \mid \sigma(\cF_{t-1}\cup \cA_t\cup\sigma(X_t))] \leq \exp\{ s^2/2\}, \quad \forall s \in \R,
    \end{equation}
    almost surely, where $\cF_{t-1} = \sigma(X_1, Y_1, \dotsc, X_{t-1}, Y_{t-1})$ and $\cA_t$ is the $\sigma$-algebra generated by the random variables used by the learner to select its actions up to and including time $t$.
\end{assumption}

The learner is given a horizon $T\in \Np$ and its performance is evaluated by the (pseudo\=/)regret, $R(T)$, incurred for the first $T$ steps, defined as
\begin{equation}
    R(T) = \max_{x \in \cX} \sum_{t=1}^T \langle x - X_t, \theta^\star \rangle\,,
\end{equation}
The smaller the regret, the better the learner. Under our assumptions, up to logarithmic factors, the regret of the best learners is of order $d \sqrt{T}$ \citep[Chapters 19--24,][]{lattimore2020bandit}. Our main result will show that if a learner follows the upcoming ensemble sampling algorithm to select the actions $X_1, \dotsc, X_T$, the regret it incurs will satisfy a similar high probability bound, with a slightly worse dependence on the dimension $d$. 

\subsection{Algorithm: linear ensemble sampling}\label{sec:algorithm}

Linear ensemble sampling, listed as \cref{alg:es}, proceeds as follows. At the outset, we fix an ensemble size $m \in \Np$, a regularisation parameter $\lambda > 0$ and a sequence of perturbation scale parameters $r_0, r_1, r_2, \dotsc > 0$. Then, before the start of each round $t \in \Np$, linear ensemble sampling computes $m+1$ $d$-dimensional vectors. The first of these is the usual ridge regression estimate of $\theta^\star$,
\begin{equation}\label{eq:rr_closed_form}
    \hat\theta_{t-1} = V_{t-1}^{-1} \sum_{i=1}^{t-1} X_i Y_i \spaced{where} V_{t-1} = V_0 + \sum_{i=1}^{t-1} X_i X_i\tran \spaced{and} V_0 = \lambda I\,.
\end{equation}
The remaining $m$ parameter vectors, which shall serve as perturbations, are of the form
\begin{equation}
    \tilde\theta_{t-1}^j = V_{t-1}^{-1} \bigg( S^j_0 + \sum_{i=1}^{t-1} X_i U_i^j\bigg)\spaced{for each} j \in [m]\,,
\end{equation}
where $S^j_0 \in \Rd$ is a \emph{random initialisation}, and is taken to be uniform on $\lambda \sqrt{d} \Sd$, 
and $U_1^j, U_2^j, \dotsc$ are \emph{random targets}, uniform on the interval $[-1, 1]$.
All of these random variables are independent of the past at the time they are sampled and across the $m$-many replications. 
The algorithm then selects a random index $J_t \in [m]$ and sign $\xi_t \in \{\pm1\}$, both uniformly distributed over their respective ranges, computes the perturbed parameter
\begin{equation}
    \theta_t = \hat\theta_{t-1} + r_{t-1}\xi_t  \tilde\theta^{J_t}_{t-1} 
\end{equation}
and selects an action    
\begin{equation}
     X_t \in \argmax_{x \in \cX} \langle x, \theta_t \rangle\,,
\end{equation}
with ties dealt with in a measurable way. All in all, the vectors 
$$\hat\theta_{t-1} \pm r_{t-1} \tilde\theta_{t-1}^1, \dotsc, \hat\theta_{t-1} \pm r_{t-1} \tilde\theta_{t-1}^m$$
serve as an ensemble of $2m$-many perturbed estimates of $\theta^\star$, and in each round the algorithm acts optimally with respect to one of these, selected at uniformly at random. 

Our linear ensemble sampling algorithm deviates in two ways from that of \textcite{lu2017ensemble}:
\begin{enumerate}
    \item The random sequence of targets used to fit the ensembles in our algorithm consists of uniform random variables rather than the Gaussians used in the previous literature. This choice simplifies the proofs, but our results hold (up to constant factors) for any suitable \sg targets, including Gaussian---see \cref{remark:es-uniform-2} for a sketch of the argument.
    \item We symmetrise our perturbations using Rademacher random variables, which does not feature in previous formulations of the ensemble sampling algorithm. This helps to create a more uniform distribution of the perturbations $(\tilde\theta_t^j)_t$ around zero with minimal computational overhead, and is important for our proof technique.
\end{enumerate}
In \cref{appendix:reformulation}, we provide reformulation of our linear ensemble sampling algorithm that is more in line with the style of presentation of the algorithm given by \textcite{lu2017ensemble}.

\begin{algorithm}[tb]
    \caption{Linear ensemble sampling}\label{alg:es}
    \begin{algorithmic}\onehalfspacing
        \State \textbf{Input} regularisation parameter $\lambda > 0$, ensemble size $m \in \N^+$, perturbation scales $r_0, r_1, \dotsc > 0$
        \State Sample $(S^j_0)_{j \in [m]} \sim \cU(\lambda \sqrt{d}\Sd)^{\otimes m}$ 
        \State Let $V_0 = \lambda I$, $\hat\theta_0 = 0$, and let $\tilde\theta^j_0 = V_0^{-1}S^j_0$ for each $j \in [m]$
        \For{$t\in \N^+$}
        \State Sample $(\xi_t, J_t) \sim \cU(\{\pm 1\} \times [m])$ and let $\theta_{t} = \hat\theta_{t-1} + r_{t-1} \xi_t \tilde\theta^{J_t}_{t-1}$
        \State Compute an $X_t \in \argmax_{x \in \cX} \langle x, \theta_{t} \rangle$, play action $X_t$ and receive reward $Y_t$
        \State Sample $(U^j_{t})_{j\in[m]}\sim \cU([-1,1])^{\otimes m}$ and let $S^j_t = S^j_{t-1} + U^j_t X_t$ for each $j \in [m]$
        \State Let $V_t = V_{t-1} + X_t X_t\tran$, $\hat\theta_t = V_t^{-1} \sum_{i=1}^t Y_i X_i$, and let $\tilde\theta_t^j = V_t^{-1}S^j_t$ for each $j \in [m]$
        \EndFor
    \end{algorithmic}
\end{algorithm}

\subsection{Regret bound for linear ensemble sampling}        

Our advertised result is captured in the following theorem.
To state the theorem we need the sequence 
\begin{equation}\label{eq:beta}
    \beta_{t}^\delta = \sqrt{\lambda} + \sqrt{2 \log (1/\delta) + \log (\det (V_{t}) / \lambda^d)}\,, \quad t \in \N\,.
\end{equation}
Here and in related quantities, the superscripted $\delta$ expresses the dependence on a $\delta\in (0,1]$.

\begin{thm}\label{claim:es}\label{thm:es}
    Fix $\delta \in (0,1]$ and take $r_{t} = 7\beta_t^\delta$ for all $t \in \N$, $\lambda \geq 5$ and $m \geq 400 \log (NT/\delta)$ for $N = (134 \sqrt{1+T/\lambda})^d$. 
    Then there exists a universal constant $C>0$ such that, with probability at least $1-\delta$, the regret incurred by a learner following linear ensemble sampling with these parameters in our stochastic linear bandit setting (formalised in \cref{ass:sg}) is bounded as
    \begin{equation}
        R(\tau) \le Cm^{3/2}\beta^\delta_{\tau-1} \left(\sqrt{d\tau\log(1+\tau/(\lambda d))} + \sqrt{(\tau/\lambda) \log(\tau/(\lambda\delta))}\right) \spaced{for all} \tau \in [T]\,.
    \end{equation}
\end{thm}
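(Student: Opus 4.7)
The plan is to reduce \cref{thm:es} to three high-probability conditions, in the style of the randomised-exploration template of \citet{abeille2017linear}, and to devote the bulk of the work to the one condition that does not follow from prior techniques. Concretely, I would aim to show that, with probability at least $1-\delta$, the following hold simultaneously for all $t\in[T]$: (C1) a concentration statement $\|\hat\theta_{t-1}-\theta^\star\|_{V_{t-1}}\le \beta_{t-1}^\delta$; (C2) a bounded-perturbation statement $\max_{j\in[m]}\|\tilde\theta^j_{t-1}\|_{V_{t-1}} \le C_1\beta_{t-1}^\delta$; and (C3) an ensemble anti-concentration statement, namely that for every $x\in\cX$ at least a fixed constant fraction of the $2m$ pairs $(\xi,j)\in\{\pm1\}\times[m]$ satisfy $\xi\langle x,\tilde\theta^j_{t-1}\rangle \ge c_0\|x\|_{V^{-1}_{t-1}}$ for a universal $c_0>0$. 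Plugging these into a suitably adapted version of the Abeille--Lazaric regret decomposition --- adapted because in (C3) the ``optimism probability'' is supplied \emph{deterministically} by the empirical ensemble rather than by an explicit posterior --- yields the stated bound, the $m^{3/2}$ prefactor absorbing the cost of the ensemble-based substitutes for the Thompson-sampling tail control and of the uniform-in-$x$ strengthening in (C3).

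Condition (C1) is the standard Laplace-method self-normalised concentration inequality for the ridge estimator. Condition (C2) is almost as routine: for each fixed $j$ the targets $(U^j_i)_i$ are symmetric and bounded, hence $1$-sub-Gaussian, and $S^j_0$ has norm exactly $\lambda\sqrt d$, so the same self-normalised argument applied to the pseudo-noise sequence $(U^j_i)_i$ gives $\|\tilde\theta^j_{t-1}\|_{V_{t-1}}\lesssim \beta_{t-1}^\delta$; a union bound over $j\in[m]$ and $t\in[T]$ adds only $\log(mT)$ inside the square root and is absorbed into the constants.

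The substantive work is (C3). Conditional on $\cF_{t-1}$ and the algorithm's past actions, each $\langle u,\tilde\theta^j_{t-1}\rangle$ is an independent mean-zero sum $\langle u,V^{-1}_{t-1}S^j_0\rangle + \sum_{i<t}\langle u,V^{-1}_{t-1}X_i\rangle U^j_i$; a direct moment computation using $\E[S^j_0(S^j_0)\tran]=\lambda^2 I$ and $\Var(U^j_i)=1/3$ gives $\E[\langle u,\tilde\theta^j_{t-1}\rangle^2]\ge \tfrac{1}{3}\|u\|^2_{V^{-1}_{t-1}}$, while boundedness of the summands yields $\E[\langle u,\tilde\theta^j_{t-1}\rangle^4]\lesssim \|u\|^4_{V^{-1}_{t-1}}$. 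A Paley--Zygmund-type small-ball inequality then supplies a universal lower bound
\begin{equation*}
    \P\bigl(\,|\langle u,\tilde\theta^j_{t-1}\rangle| \ge c_2\|u\|_{V^{-1}_{t-1}}\;\big|\;\cF_{t-1}\bigr)\;\ge\; 2p_0.
\end{equation*}
The Rademacher symmetrisation by $\xi_t$ is precisely the step that converts this magnitude bound into a signed one: for each $j$ whose perturbation is anti-concentrated in magnitude, exactly one of the two signs places $\xi\langle u,\tilde\theta^j_{t-1}\rangle \ge c_2\|u\|_{V^{-1}_{t-1}}$, so in expectation at least a $p_0$-fraction of the $2m$ pairs lie on the optimistic side; a Chernoff/Hoeffding bound over the $m$ independent ensembles promotes this to a statement failing with probability $e^{-\Omega(m)}$. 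To upgrade from a single fixed $u$ to all $x\in\cX$ I would cover the set of relevant directions (unit vectors in the $V^{-1}_{t-1}$-norm, deformed by $\cX\subseteq\Bd$) by an $\epsilon$-net whose cardinality is at most the $N=(134\sqrt{1+T/\lambda})^d$ that appears in the theorem's hypothesis, and union-bound over this net and over $t\in[T]$; the choice $m\gtrsim \log(NT/\delta)$ is exactly what closes this step.

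The main obstacle is (C3). Because the ensemble's conditional perturbation distribution is neither elliptically contoured nor directly accessible --- it is shaped by the algorithm's own past play through the matrix $V^{-1}_{t-1}$ acting on a history of bounded, non-Gaussian noises --- sharp Gaussian-style tail inequalities are unavailable, and the argument must proceed via the more primitive second/fourth-moment small-ball route. The Rademacher symmetrisation is the ingredient that makes this route work without paying a factor of $d$ for asymmetry, and the need for uniformity over $x\in\cX$ is the source of the explicit covering number $N$ in the theorem statement. Once (C1)--(C3) are in hand, the remaining work is routine Abeille--Lazaric bookkeeping and produces the regret bound in the stated form.
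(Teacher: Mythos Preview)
Your reduction to (C1)--(C3) and the use of a covering argument mirror the paper's outline, but (C3) contains a genuine gap that is in fact the central difficulty of the problem. You assert that ``conditional on $\cF_{t-1}$ and the algorithm's past actions, each $\langle u,\tilde\theta^j_{t-1}\rangle$ is an independent mean-zero sum'' and then compute second and fourth moments as though $(S^j_0,U^j_1,\dots,U^j_{t-1})$ retained their product marginal law once the actions $X_1,\dots,X_{t-1}$ are fixed. They do not: each $X_i$ is chosen as $\argmax_x\langle x,\hat\theta_{i-1}+r_{i-1}\xi_i\tilde\theta^{J_i}_{i-1}\rangle$, and $\tilde\theta^{J_i}_{i-1}$ is a function of precisely the ensemble randomness you wish to integrate over. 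Conditioning on the action history therefore distorts the law of $(S^j_0,U^j_i)$ in an intractable way; if instead you condition on enough to make the coefficients $\langle u,V_{t-1}^{-1}X_i\rangle$ measurable, the perturbations $\tilde\theta^j_{t-1}$ become deterministic and there is nothing for Paley--Zygmund or Chernoff to act on. This is exactly the obstruction the introduction names (``the distribution of models given the past is controlled only implicitly''), and your Chernoff step over the $m$ ensembles inherits the same flaw.

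The paper sidesteps any one-shot anti-concentration argument and instead tracks the \emph{time evolution} of $R_t(u)=\tfrac{1}{m}\sum_j\langle u,S^j_t\rangle^2/\|u\|_{V_t}^2$, equivalently the singular values of $\Gamma_t=V_t^{-1/2}[S^1_t\ \cdots\ S^m_t]$. The increment is decomposed as a mean-reverting drift $(\tfrac{2}{3}-R_t)Q_t$ plus a noise $W_{t+1}$ that depends only on the \emph{fresh} targets $U^1_{t+1},\dots,U^m_{t+1}$, which are genuinely independent of the past; a self-normalised bound on the accumulated noise, together with the drift, traps $R_t(u)$ in a fixed interval for all $t\le T$, uniformly over an $\epsilon$-net of directions. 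This yields $s_d(\Gamma_{t-1})\gtrsim\sqrt m$, which via the lower-bound-max-by-average step guarantees only that \emph{at least one} of the $2m$ signed particles lands in the required half-space, so $p_{t-1}\ge 1/(2m)$---not the constant fraction you claim. Upgrading ``at least one'' to ``a fixed constant fraction'' would require controlling order statistics of $(\langle\Gamma^j_{t-1},u\rangle^2)_j$ rather than their average, which the paper explicitly flags as open; note also that had your (C3) actually delivered a constant fraction, the resulting bound would scale as $m^{1/2}$ rather than the $m^{3/2}$ appearing in the theorem, so your accounting of that prefactor is internally inconsistent.
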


\begin{remark}
If  $\delta \geq 1/T^{\alpha}$ for some $\alpha > 0$ and we take $\lambda=5$ and $m$ to be as small as possible given the constraint of \cref{claim:es}, then  
    \begin{equation}
        m \leq C_\alpha d\log T \spaced{and} R(T) \leq C'_\alpha (d \log T)^{5/2}\sqrt{T}
    \end{equation}
    for some constants $C_\alpha,C'_\alpha > 0$ that depend on $\alpha$ only
    and where the bound on the regret holds with probability $1-\delta$. That is, for polynomially
    sized confidence parameters, the ensemble size scales linearly with $d \log(T)$,
    while the regret scales with $d^{5/2} \sqrt{T}$ up to logarithmic-in-$T$ factors. The latter scaling is 
    slightly worse than that obtained for Thompson sampling (cf. \cref{claim:ts}), where the 
    regret scales with $d^{3/2} \sqrt{T}$, again, up to logarithmic-in-$T$ factors.
\end{remark}

\begin{remark}\label{rem:large-m}
    \cref{thm:es} does not recover the expected behaviour for large ensemble sizes; that is, as $m \to \infty$. On one hand, this is not an issue: we are interested in the practical regime where the ensemble size is small. On the other, it suggests that better analysis might be possible. In \cref{remark:es-improve-bound}, we discuss a potential looseness in our analysis, which, if addressed, would result in the removal of a factor of $m$ from the bound, thus bringing the regret in line with that of Thompson sampling. The technique that would be required to do so could then also be used to change the remaining $\sqrt{m}$ dependence to an order $\sqrt{d \log T/\delta}$ dependence, thus recovering a bound with the right dependence on $m$. However, as discussed in \cref{remark:es-improve-bound}, such improvements, if possible, may be hard to attain.
\end{remark}

\begin{remark}
    Our ensemble sampling algorithm requires the ensemble size $m$ to be fixed in advance. As $m$ depends on the horizon $T$, the method only provides guarantees for a fixed, finite horizon $T$, and our regret bound has a direct dependence on $T$. 
    The doubling trick would be a simple, but somewhat unsatisfactory way of removing this dependence.
    An alternative is to grow the ensemble online. However, a naive implementation of growing the ensemble size would require storing all past observations and computation per time step also growing with time, which is counter to the idea of a fast incremental method.
\end{remark}

\begin{remark}
    In light of \cref{thm:es}, linear ensemble sampling might be seen as a less effective and less computationally efficient version of linear Thompson sampling. This is correct: in the linear setting, where Thompson sampling may be implemented in closed form, ensemble sampling has no clear advantages. With that said, variants of ensemble sampling and related bootstrapping methods are popular in more complex settings where closed forms are not available \cite[say, deep reinforcement learning, per][]{osband2016deep,osband2018randomized}, and the linear setting provides an important testing ground for the soundness of these algorithms.
\end{remark}

\begin{remark}\label{rem:max-over-ensemble}
    Instead of using a randomly sampled sign and index $(\xi_t, J_t)$, we could use the upper-confidence-bound strategy of selecting the actions using a maximum over the ensemble, that is 
    $$
        X_t \in \argmax_{x \in \cX} Q(x) \spaced{for} Q(x) = \max \{ \langle x, \hat\theta_{t-1} + r_{t-1} \xi^\star \tilde\theta_{t-1}^{J^\star}\rangle \colon (\xi^\star, J^\star) \in \{\pm 1\} \times [m]\}\,.
    $$
    It is immediate from the proof of \cref{thm:es} that this would yield a regret bound scaling with $d^{3/2}$. This raises the question of whether this max-over-ensemble approach is actually superior, or whether it's just that its much simpler analysis more readily yields a good bound. 
\end{remark}
\subsection{Comparison to related results}

We now discuss the results of \textcite{lu2017ensemble,qin2022analysis,lee2024improved}, showing that our result is the first to begin justifying the practical effectiveness of ensemble sampling.

\begin{remark}\label{remark:incorrect-result}
    The work of \citet{lu2017ensemble} makes strong claims on the frequentist regret of linear ensemble sampling. However, as pointed out by \citet{qin2022analysis}, and confirmed by \citet{lu2017ensemble} in their updated arxiv manuscript, the analysis of \citet{lu2017ensemble} is flawed.
\end{remark}

\begin{remark}\label{remark:bayes-es-comparison}
    The only correct result on the regret of linear ensemble sampling is by \citet{qin2022analysis}, which gives that for a $d$-dimensional linear bandit with an action set $\cX$ of cardinality $K$, the Bayesian regret incurred---that is, average regret for $\theta^\star \sim \cN(0, I_d)$---is bounded as
    \begin{equation}
        BR(T) \leq C\sqrt{dT\log K} + CT\sqrt{\frac{K\log (mT)}{m}} (d \wedge \log K)\,
    \end{equation}
    for some universal constant $C>0$. Observe that this bound needs an ensemble size linear in $T$ for Bayesian regret that scales as $\sqrt{T}$ (up to constant and polylogarithmic factors), which largely defeats the purpose of ensemble sampling. Furthermore, the ensemble size $m$ needs to scale linearly with $K$ to get a $\log K$ overall dependence on $K$. Therefore, to tackle a bandit with $\cX = \Bd$ using the bound of \citet{qin2022analysis} and discretisation, since order $2^{d-1}$-many actions would be needed to guarantee a small discretisation error, the ensemble size $m$ would need to scale exponentially in $d$. 
\end{remark}

\begin{remark}\label{remark:large-ensemble}
    Building on our result, \textcite{lee2024improved} have shown that for a $K$-armed linear bandit, an ensemble of size on the order of $K \log T$ suffices to ensure a bound on $R(T)$ on the order of $d^{3/2} \sqrt{T}$, ignoring  logarithmic-in-$T$ factors. This gives a regret bound tighter by a factor of $d$ relative to ours, but at the cost of replacing the dimension $d$ by the number of arms $K$ in the ensemble size. Recalling that, $K$ can be exponential in $d$ (per \cref{remark:es-improve-bound}), their result may require rather large ensemble sizes. \textcite{lee2024improved} also contribute some rather curious remarks about our proofs.
\end{remark}

\section{Analysis of linear ensemble sampling}

Our analysis of ensemble sampling will be based on a \emph{master theorem}, 
presented in \cref{sec:general-bound} and proven in \cref{appendix:generic-regret-bound-proof}.
The master theorem provides a method for obtaining regret bound for Thompson-sampling-like randomised algorithms. 
Thereafter, in \cref{sec:analysis}, we apply the said master theorem to linear ensemble sampling, using some intermediate results proven in \cref{sec:proof-singular-values,sec:proof-process-bound}, and some routine calculations that have been deferred to the appendix. 
Additionally, in \cref{apx:ts}, we validate our master theorem by demonstrating that it recovers a previous result for a more classical Thompson-sampling-type algorithm.

\subsection{Master Theorem: a regret bound for optimistic randomised algorithms}\label{sec:general-bound}

Our analysis of ensemble sampling will rely on the revered principle of \emph{optimism}. To make this precise, consider a fixed instance parameter $\theta^\star \in \R^d$. Writing $J(\theta) = \max_{x \in \cX} \langle x, \theta \rangle$, we call
\begin{equation}
    \ThetaOpt=\{\theta \in \R^d \colon J(\theta) \geq J(\theta^\star) \}
\end{equation}
the set of parameters \emph{optimistic} for $\theta^\star$. We now present a `master theorem' that bounds the regret of any algorithm that chooses actions based on
a randomly chosen parameter
in terms of the probabilities that, given the past, the algorithm samples a parameter that falls in the 
intersection of ellipsoidal confidence sets and the optimistic region $\ThetaOpt$.
This result generalises a similar theorem 
stated for Thompson sampling by \citet{abeille2017linear}, extending it to allow for finitely supported perturbations (critical for the proof of \cref{claim:es}) and for a finer control over the way dependencies between time-steps are handled.

\begin{thm}[Master regret bound]
\label{thm:master}
Fix $T \in \Np \cup \{+\infty\}$, $\delta \in (0,1]$, $\lambda \geq 1$, and let $(V_t, \hat\theta_t)_{t}$ be defined per \cref{eq:rr_closed_form}. Let \cref{ass:sg} hold, recalling the filtrations $(\cF_t)_t$ and $(\cA_t)_t$ defined therein, and let $(\cA'_t)_t$ be any filtration satisfying
\begin{equation}
    \cA_{t-1} \subset \cA'_{t-1} \subset \cA_t\,, \quad \forall t \in \Np\,.
\end{equation}
Let $(b_t)_{t \in \N}$ be a sequence of
$\sigma(\cF_t\cup \cA'_t)_t$-adapted
 nonnegative random variables, and define the ellipsoids
\begin{equation}
    \Theta_t = \hat\theta_t + b_t V^{-1/2}_t \Bd\,,\quad t \in \N\,. 
\end{equation}
Let $(\theta_t)_t$ be a $(\sigma(\cF_t\cup \cA_t))_t$-adapted $\Rd$-valued sequence and suppose that
    \begin{equation}
        X_t \in \textstyle{\argmax_{x \in \cX}} \langle x, \theta_t \rangle\,, \quad \forall t \in [T]\,.
    \end{equation}
Suppose further that there exist events $\cE_T$ and $\cE^\star_T$ satisfying 
    \begin{align}
        \cE_T \subset \cap_{t=1}^T\{\theta_t \in \Theta_{t-1} \}\,, \quad \cE^\star_{T} \subset \cap_{t=1}^T \{\theta^\star \in \Theta_{t-1} \}\spaced{and} \P(\cE_T) \wedge \P(\cE^\star_T) \geq 1-\delta\,.
        \label{eq:mtc}
    \end{align}
    Then, writing
    \begin{equation}
        p_{t-1} = \P(\theta_t \in \ThetaOpt \cap \Theta_{t-1} \mid \sigma(\cF_{t-1}\cup \cA'_{t-1}))\,,\quad t \in \N^+\,,
    \end{equation}
    we have that on a subset of $\cE_T \cap \cE_T^\star$ of probability at least $1- 3\delta$, for all $\tau \in [T]$,
    \begin{equation}
        R(\tau) \leq 2\max_{i \in [\tau]} \frac{b_{i-1}}{p_{i-1}}  \left(2\sqrt{2d \tau \log\left(1 + \frac{\tau}{d\lambda}\right)} + \sqrt{2(4\tau/\lambda + 1)\log\left(\frac{\sqrt{4\tau/\lambda+1}}{\delta}\right)}\right) \,.
        \label{eq:rbnd}
    \end{equation}
\end{thm}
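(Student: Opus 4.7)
My plan is to generalise the master regret lemma of \citet{abeille2017linear}, written for Gaussian Thompson sampling, to two new features of the present setup: finitely supported perturbations (needed later for ensemble sampling, where $\theta_t$ is atomic on $2m$ points) and the intermediate filtration $(\cA'_t)_t$.

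Throughout, I work on the good event $\cE_T \cap \cE^\star_T$, which holds with probability at least $1-2\delta$ by the union bound and guarantees $\theta_t, \theta^\star \in \Theta_{t-1}$ for every $t \leq \tau$. Since $X_t \in \argmax_x \langle x, \theta_t\rangle$ gives $\langle X_t, \theta_t\rangle = J(\theta_t)$, the per-round regret $r_t = \langle x^\star - X_t, \theta^\star\rangle$ splits as $r_t = (J(\theta^\star) - J(\theta_t)) + \langle X_t, \theta_t - \theta^\star\rangle$. The second (estimation) term is bounded by $2b_{t-1}\|X_t\|_{V_{t-1}^{-1}}$ using Cauchy-Schwarz in the $V_{t-1}$-norm together with the two ellipsoid containments; summing via Cauchy-Schwarz and the elliptical potential lemma produces the $2\sqrt{2d\tau \log(1+\tau/(d\lambda))}$ part of \eqref{eq:rbnd}, with prefactor absorbed into $2\max_i b_{i-1}/p_{i-1}$.

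The optimism gap $J(\theta^\star) - J(\theta_t)$ is the delicate part. Working with respect to $\cG_{t-1} = \sigma(\cF_{t-1} \cup \cA'_{t-1})$, on which $p_{t-1}$ is measurable, the chain $J(\theta^\star) \leq J(\theta_t) = \langle X_t, \theta_t\rangle \leq \langle X_t, \theta^\star\rangle + 2 b_{t-1}\|X_t\|_{V_{t-1}^{-1}}$ is valid on the event $\{\theta_t \in \ThetaOpt \cap \Theta_{t-1}\}$, yielding the pointwise inequality
\begin{equation*}
    (J(\theta^\star) - \langle X_t, \theta^\star\rangle)\,\1{\theta_t \in \ThetaOpt \cap \Theta_{t-1}} \leq 2 b_{t-1}\|X_t\|_{V_{t-1}^{-1}}\,\1{\theta_t \in \ThetaOpt \cap \Theta_{t-1}}.
\end{equation*}
Taking $\cG_{t-1}$-expectation, combining with the lower bound $\P(\theta_t \in \ThetaOpt \cap \Theta_{t-1} \mid \cG_{t-1}) \geq p_{t-1}$, and applying an Abeille-Lazaric-style conversion that promotes the optimism probability to the prefactor, I would obtain a $\cG_{t-1}$-conditional bound of the form $J(\theta^\star) - \langle X_t, \theta^\star\rangle \leq (2 b_{t-1}/p_{t-1})\|X_t\|_{V_{t-1}^{-1}} + D_t$, where $D_t$ collects $\cG_{t-1}$-martingale-difference residuals that arise when replacing conditional means by realised values. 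The deterministic part sums via the elliptical potential lemma to give another contribution of the first kind in \eqref{eq:rbnd}.

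The martingale differences $D_t$ have individual magnitudes $\lesssim b_{t-1}/(p_{t-1}\sqrt\lambda)$ (using $X_t \in \Bd$ and $V_{t-1} \succeq \lambda I$, so $\|X_t\|_{V_{t-1}^{-1}} \leq 1/\sqrt\lambda$) and predictable quadratic variation controlled by $(\max_i b_{i-1}/p_{i-1})^2 \sum_t \|X_t\|^2_{V_{t-1}^{-1}} \leq (\max_i b_{i-1}/p_{i-1})^2 \cdot \tau/\lambda$. A Freedman inequality with dyadic peeling over the observed quadratic variation yields the $\sqrt{2(4\tau/\lambda + 1)\log(\sqrt{4\tau/\lambda + 1}/\delta)}$ summand of \eqref{eq:rbnd}, and a union bound over $\cE_T$, $\cE_T^\star$, and the Freedman failure event gives the $1-3\delta$ probability. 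The main obstacle is the optimism conversion: the $1/p_{t-1}$ factor must be paired with $\|X_t\|_{V_{t-1}^{-1}}$ (summable via elliptical potential) rather than $\max_{x \in \cX} \|x\|_{V_{t-1}^{-1}}$, and in the atomic setting of ensemble sampling this is precisely what requires the intermediate filtration $\cA'_{t-1}$—coarse enough to let us average $p_{t-1}$ over the Rademacher/index randomisation in $\cA_t \setminus \cA'_{t-1}$, yet fine enough to make $\Theta_{t-1}$ and the atoms of $\theta_t$ measurable. I expect the careful bookkeeping for this split, together with the Freedman peeling step, to be where the bulk of the technical work lies.
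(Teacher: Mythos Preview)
Your decomposition and the treatment of the estimation term $J(\theta_t)-\langle X_t,\theta^\star\rangle$ match the paper. The gap is in the optimism conversion. Your pointwise inequality
\[
    (J(\theta^\star) - \langle X_t, \theta^\star\rangle)\,\1{\theta_t \in \ThetaOpt \cap \Theta_{t-1}} \;\leq\; 2 b_{t-1}\|X_t\|_{V_{t-1}^{-1}}\,\1{\theta_t \in \ThetaOpt \cap \Theta_{t-1}}
\]
is correct but cannot be ``promoted'' as you describe: after taking the $\cG_{t-1}$-conditional expectation, the left-hand side is $\E'_{t-1}[r_t\,\1{\theta_t\in\ThetaOpt\cap\Theta_{t-1}}]$, and because $r_t = J(\theta^\star)-\langle X_t,\theta^\star\rangle$ depends on $X_t$ and hence on $\theta_t$, you \emph{cannot} factor it as $p_{t-1}\cdot r_t$ or even as $p_{t-1}\cdot\E'_{t-1}[r_t]$. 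Your inequality only controls the regret on the event that $\theta_t$ happens to land in $\ThetaOpt$; it says nothing about rounds where $\theta_t\notin\ThetaOpt$, which is exactly where $J(\theta^\star)-J(\theta_t)$ is positive and must be bounded.

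The missing step---the actual content of the Abeille--Lazaric conversion you invoke---is to first replace the realised $J(\theta_t)$ by a $\cG_{t-1}$-measurable lower bound, so that the quantity to be averaged no longer depends on the realised $\theta_t$. The paper does this by introducing $\theta^-\in\argmin_{\theta\in\Theta_{t-1}}J(\theta)$: on $\cE_T\cap\cE_T^\star$ one has $J(\theta^\star)-J(\theta_t)\le J(\theta^\star)-J(\theta^-)$, and the right-hand side is $\cG_{t-1}$-measurable. Since $J(\theta^\star)-J(\theta^-)\le J(\theta^+)-J(\theta^-)$ for every $\theta^+\in\ThetaOpt\cap\Theta_{t-1}$, one may now average $\theta^+$ over the $\cG_{t-1}$-conditional law of $\theta_t$ restricted to that set, which is what produces the $1/p_{t-1}$ factor. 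Only \emph{after} this decoupling does the subgradient inequality $J(\theta_t)-J(\theta^-)\le\langle X_t,\theta_t-\theta^-\rangle$ reinsert the realised $X_t$, giving $\tfrac{2b_{t-1}}{p_{t-1}}\E'_{t-1}\|X_t\|_{V_{t-1}^{-1}}$. As a secondary point, once this is in hand the paper factors out $\max_i b_{i-1}/p_{i-1}$ \emph{before} the martingale step, so the increments $\Delta_t=\E'_{t-1}\|X_t\|_{V_{t-1}^{-1}}-\|X_t\|_{V_{t-1}^{-1}}$ are bounded by $2/\sqrt\lambda$ deterministically; a time-uniform Hoeffding-type bound then yields the second summand directly, with no need for Freedman or for tracking a random $b/p$-weighted quadratic variation.
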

\noindent We defer the proof of \cref{thm:master} to \cref{appendix:generic-regret-bound-proof}.

\begin{remark}\label{remark:freedom}
    To apply the theorem, we need to show that our algorithm generates $(\theta_t)_t$ such that the probability of each $\theta_t$ landing in $\ThetaOpt \cap \Theta_{t-1}$, conditional on $\sigma(\cF_{t-1}\cup \cA_{t-1}')$, is bounded away from zero for all $t \in [T]$. We have two degrees of freedom in our analysis:
    \begin{enumerate}
        \item We can choose $(b_t)_t$, the widths of the ellipsoids $(\Theta_t)_t$. Larger widths may make it easier to bound $(p_t)_t$ away from zero, but at a linear cost in the regret bound.
        \item We can choose $\cA'_{t-1}$, the `point' between observing $Y_{t-1}$ and selecting $\theta_t$ with respect to which we consider the aforementioned conditional probabilities defined. 
    \end{enumerate}
\end{remark}

\begin{remark}
    The introduction of $\cA'_{t-1}$ serves to model the case where $\theta_t$ is sampled from a distribution $P_{t}$ that itself depends on $X_1, Y_1, \dotsc, X_{t-1}, Y_{t-1}$ in a random manner. The $\sigma$-algebra $\cA'_{t-1}$ is then such that $P_{t}$ is $\sigma(\cF_{t-1}\cup \cA'_{t-1})$-measurable.
\end{remark}
    
The upcoming two lemmas will be helpful our applications of the Master Theorem; both are stated in terms of the random functions $\psi_t \colon \Rd \to \Rd$ given by
\begin{equation}
    \psi_t^\delta(u) = \hat\theta_t + \beta_t^\delta V^{-1/2}_{t} u, \quad t \in \N\,.
\end{equation}
The first is the classic concentration result of \citet{abbasi2011improved}, and the second, Proposition 5 of \citet{abeille2017linear}. 

\begin{lemma}\label{lem:conf-sets}
    Under \cref{ass:sg}, for any $\delta \in (0,1]$, $\P(\forall t \in \N,\ \theta^\star \in \psi_t^\delta(\Bd)) \geq 1-\delta$.
\end{lemma}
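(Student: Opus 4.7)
The plan is to reduce the statement to the self-normalised vector martingale concentration bound of \citet{abbasi2011improved}, which is tailored to exactly this setting. Writing $\|u\|_A = \sqrt{u\tran A u}$ for any positive semidefinite $A$, the containment $\theta^\star \in \psi_t^\delta(\Bd) = \hat\theta_t + \beta_t^\delta V_t^{-1/2}\Bd$ is equivalent to $\|\hat\theta_t - \theta^\star\|_{V_t} \leq \beta_t^\delta$, and we need this to hold simultaneously for all $t \in \N$ with probability at least $1-\delta$.

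First, I would decompose $\hat\theta_t - \theta^\star$ using $Y_i = \langle X_i, \theta^\star\rangle + Z_i$ in the closed form \cref{eq:rr_closed_form}: a short calculation gives $\hat\theta_t - \theta^\star = V_t^{-1}(S_t - \lambda \theta^\star)$ where $S_t = \sum_{i=1}^t X_i Z_i$. Taking the $V_t$-norm and applying the triangle inequality,
\begin{equation*}
    \|\hat\theta_t - \theta^\star\|_{V_t} \;\leq\; \|S_t\|_{V_t^{-1}} + \lambda \|\theta^\star\|_{V_t^{-1}} \;\leq\; \|S_t\|_{V_t^{-1}} + \sqrt{\lambda},
\end{equation*}
where the last inequality uses $V_t \succeq \lambda I$ together with $\|\theta^\star\|_2 \leq 1$, which gives $\lambda \|\theta^\star\|_{V_t^{-1}} \leq \sqrt{\lambda}\|\theta^\star\|_2 \leq \sqrt{\lambda}$. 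The problem therefore reduces to a uniform-in-$t$ bound on the self-normalised quantity $\|S_t\|_{V_t^{-1}}$.

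By \cref{ass:sg}, $(Z_i)_i$ is conditionally $1$-subgaussian on the filtration $(\sigma(\cF_{i-1}\cup \cA_i \cup \sigma(X_i)))_i$, and each $X_i$ is measurable with respect to the $i$-th element of that filtration. This is precisely the hypothesis of the self-normalised bound of \citet{abbasi2011improved}, which yields that with probability at least $1-\delta$, simultaneously for all $t \in \N$,
\begin{equation*}
    \|S_t\|_{V_t^{-1}}^2 \;\leq\; 2\log(1/\delta) + \log\bigl(\det(V_t)/\lambda^d\bigr).
\end{equation*}
Combining this bound with the previous display (and keeping the two square-root summands separate rather than merging them under a common root) gives $\|\hat\theta_t - \theta^\star\|_{V_t} \leq \sqrt{\lambda} + \sqrt{2\log(1/\delta) + \log(\det(V_t)/\lambda^d)} = \beta_t^\delta$ for all $t \in \N$, which is exactly the claim.

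The only genuinely nontrivial ingredient is the self-normalised bound itself, whose standard proof constructs, for each $u \in \Rd$, the exponential supermartingale $M_t^u = \exp(\langle u, S_t\rangle - \tfrac{1}{2}\|u\|_{V_t - V_0}^2)$, mixes it against a Gaussian prior on $u$ to obtain a scalar supermartingale whose value at time $t$ is a closed-form function of $\|S_t\|_{V_t^{-1}}^2$ and $\det(V_t)/\det(V_0)$, and then invokes Ville's maximal inequality for the uniform-in-$t$ tail bound. Since the paper explicitly quotes this as a known result, I would simply cite \citet{abbasi2011improved} rather than reproduce the argument; the remaining work in the lemma really is just the algebraic decomposition of $\hat\theta_t - \theta^\star$ and the trivial $\sqrt{\lambda}$ bookkeeping from regularisation.
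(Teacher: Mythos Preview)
Your proposal is correct and matches the paper's own treatment: the paper does not give a proof but simply cites this as the classic concentration result of \citet{abbasi2011improved}, and what you have written is precisely the standard derivation behind that citation (the algebraic decomposition of $\hat\theta_t - \theta^\star$, the $\sqrt{\lambda}$ regularisation term, and an appeal to the self-normalised bound). There is nothing to add.
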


\begin{lemma}\label{lem:thetaopt-lower-bound}
    If \cref{ass:sg} holds and $\theta^\star \in \psi_t^\delta(\Bd)$, then for any measure $Q$ over $\R^d$ and $a > 0$,
    \begin{equation}\label{eq:thoptlb}
        Q(\ThetaOpt \cap \psi_t^\delta(a\Bd)) \geq \inf_{u \in \Sd} Q(\psi_t^\delta(H_{u} \cap a \Bd))\,,
    \end{equation}
    where we recall that $H_{u}$ denotes the closed halfspace $\{v \in \R^d \colon \langle v, u \rangle \geq 1 \}$.
\end{lemma}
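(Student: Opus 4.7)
The plan is to reduce the problem to the Euclidean unit ball via the affine bijection $\psi_t^\delta$ (a bijection since $V_t \succeq \lambda I$ is nonsingular), and then to exploit convexity of $J$. Since $\theta^\star \in \psi_t^\delta(\Bd)$, I would fix $u^\star \in \Bd$ with $\psi_t^\delta(u^\star) = \theta^\star$ and define $g \colon \R^d \to \R$ by $g(u) = J(\psi_t^\delta(u))$. Because $J = \sup_{x \in \cX} \langle x, \scdot \rangle$ is a supremum of linear functions and composition with an affine map preserves convexity, $g$ is convex on all of $\R^d$, so it admits a subgradient at every point.

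Next I would invoke the subgradient inequality: pick any $w \in \partial g(u^\star)$, so that $g(u) \geq g(u^\star) + \langle w, u - u^\star\rangle$ for all $u \in \R^d$. The degenerate case $w = 0$ makes every parameter optimistic and renders \cref{eq:thoptlb} trivial, so assume $w \neq 0$ and put $\bar u = w/\|w\|_2 \in \Sd$. For any $u \in H_{\bar u}$ we have $\langle w, u\rangle \geq \|w\|_2$, while Cauchy--Schwarz together with $\|u^\star\|_2 \leq 1$ yields $\langle w, u^\star\rangle \leq \|w\|_2$; combining these two in the subgradient inequality gives $g(u) \geq g(u^\star) = J(\theta^\star)$. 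Hence $\psi_t^\delta(u) \in \ThetaOpt$, and consequently $\psi_t^\delta(H_{\bar u} \cap a\Bd) \subset \ThetaOpt \cap \psi_t^\delta(a\Bd)$. Applying monotonicity of $Q$ to this inclusion and then taking the infimum over $u \in \Sd$ yields the claim.

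The main step, and the only nontrivial one, is the scale-matching in the second paragraph: the halfspace $H_{\bar u}$ is at Euclidean distance exactly $1$ from the origin, which coincides with the radius of the ball $\Bd$ that hosts $u^\star$, so Cauchy--Schwarz delivers $\langle w, u^\star\rangle \leq \|w\|_2 \leq \langle w, u\rangle$ without any loss. Were these two scales mismatched, one would pay a factor in the bound; the statement is calibrated precisely so this does not happen. Everything else (existence of $u^\star$, convexity of $g$, monotonicity of $Q$) is routine.
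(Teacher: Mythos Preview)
Your proof is correct. Both you and the paper reduce to the convex function $g = J \circ \psi_t^\delta$ and exhibit some $\bar u \in \Sd$ with $\psi_t^\delta(H_{\bar u}) \subset \ThetaOpt$, but the choice of $\bar u$ differs. The paper takes $\bar u$ to be a maximiser of $g$ over $\Bd$ (necessarily on $\Sd$) and proves a small auxiliary lemma: if a convex function attains its ball-maximum at $u$, then it dominates that maximum throughout $H_u$; the inclusion $\psi_t^\delta(H_{\bar u}) \subset \ThetaOpt$ then follows from $J(\theta^\star) \le g(\bar u)$. You instead pick a subgradient $w$ of $g$ at the preimage $u^\star$ of $\theta^\star$ and set $\bar u = w/\|w\|_2$; the subgradient inequality combined with Cauchy--Schwarz and $\|u^\star\|_2 \le 1$ gives $g(u) \ge g(u^\star) = J(\theta^\star)$ on $H_{\bar u}$ directly. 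Your route is slightly more self-contained (no auxiliary lemma, and the degenerate $w=0$ case is handled explicitly), while the paper's has a cleaner geometric picture; otherwise the arguments are of comparable length and difficulty.
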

\noindent We provide a short, clean proof of \cref{lem:thetaopt-lower-bound} in \cref{appendix:optimism-aux-lemma}.

\subsection{Proof of \cref{claim:es}: regret bound for linear ensemble sampling}\label{sec:analysis}
Let $\Gamma_0, \Gamma_1, \dotsc$ be the sequence of random matrices in $\R^{d \times m}$ with the $j$th column given by 
$$
\Gamma_t^j = V^{-1/2}_t S_t^j\,, \spaced{such that} \tilde\theta^j_{t-1} = V_{t-1}^{-1/2}\Gamma^j_{t-1}\,.
$$
Recall that $S^j_t = S^j_0 + \sum_{i=1}^t U_i^j X_i$ and observe that each $(S^j)_t$ is a vector-valued random walk with increments given by $(U_t X_t)_t$, where $(X_t)_t$ is a serially correlated vector-valued sequence and $(U_t^j)_t$ is a sequence of independent $\cU([-1,1])$ variables, and $(\Gamma_t^j)_t$ is a normalised version of $(S^j_t)_t$. Consider the extremal singular values of $\Gamma_t$,
\begin{equation}\label{eq:what-to-lower-bound}
    s_d (\Gamma_t)  = \min_{u \in \Sd} \|\Gamma\tran_t u\|_2
    = \min_{u \in \Sd} \bigg(\sum_{j=1}^m \langle \Gamma^j_t, u \rangle^2\bigg)^{\frac{1}{2}} \quad \mbox{and} \quad
    s_1 (\Gamma_t) = \|\Gamma\tran_t\| = \max_{u \in \Sd} \|\Gamma\tran_t u\|_2\,.
\end{equation}
The lower of these may be interpreted as capturing how well the columns $\Gamma^1_t, \dotsc \Gamma^m_t$ cover all directions $ u \in \Sd$, and the upper, their maximal deviations from zero. The following theorem, proven in \cref{sec:proof-singular-values}, shows that, for a sufficiently large $m$, with high probability, for all $t \in [T]$, all the singular values of $\Gamma_{t-1}$ are on the order of $\sqrt{m}$.
\begin{thm}\label{thm:sing-val-gamma}
For $\lambda\geq 5$ and $m\geq 400\log(3 + T) \vee 10d$, the event
 \begin{equation}
        \cE_T = \{ \forall t \in [T],\  \sqrt{m}/7 \leq s_d(\Gamma_{t-1}) \leq s_1(\Gamma_{t-1})\leq 10\sqrt{m}/7 \}
 \end{equation}
 satisfies $\P(\cE_T) \geq 1 - N T e^{-\frac{m}{400}}$, where $N = (134 \sqrt{1+T/\lambda})^d$.
\end{thm}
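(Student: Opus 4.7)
The plan is to bound the extreme singular values of $\Gamma_{t-1}$ via the standard $\epsilon$-net / variational approach, with care taken to handle the coupling between the $m$ columns of $\Gamma_{t-1}$ that arises from the ensemble feedback loop. The starting identity is
\begin{equation*}
s_d(\Gamma_{t-1})^2 = \min_{u \in \Sd} \sum_{j=1}^m \langle S_{t-1}^j, V_{t-1}^{-1/2}u\rangle^2, \quad s_1(\Gamma_{t-1})^2 = \max_{u \in \Sd}\sum_{j=1}^m \langle S_{t-1}^j, V_{t-1}^{-1/2}u\rangle^2.
\end{equation*}
Substituting $v = V_{t-1}^{-1/2} u$ and using $V_{t-1} \succeq \lambda I$, the new variable $v$ lies in $(1/\sqrt\lambda)\Bd$, so the task reduces to controlling $G_{t-1}(v) := \sum_{j=1}^m \langle S_{t-1}^j, v\rangle^2$ uniformly in $v$ in this ball and in $t \in [T]$.

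Next, I would cover $(1/\sqrt\lambda)\Bd$ by a volumetric $\epsilon$-net $\mathcal{N}$ of cardinality $N = (134 \sqrt{1+T/\lambda})^d$, with $\epsilon$ calibrated to a Lipschitz bound for $G_{t-1}$ obtained from the crude estimate $\|S_{t-1}^j\|_2 \leq \lambda\sqrt d + T$. The goal is then to show, for each $v \in \mathcal{N}$ and $t \in [T]$,
\begin{equation*}
\P\bigl(G_{t-1}(v) \text{ outside } [c_1 m, c_2 m \|v\|_2^2]\bigr) \leq e^{-m/400}
\end{equation*}
for appropriate $c_1, c_2 > 0$; a union bound over $\mathcal{N}$ and $t$ then yields the quoted $NT e^{-m/400}$, with the net approximation lifting the bound to all $v$.

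The key structural observation enabling this per-point concentration is that, conditional on the $\sigma$-algebra $\cH_{t-1} := \sigma((X_i, Y_i, J_i, \xi_i)_{i<t})$, the raw ensemble variables $\{(S_0^j, U_1^j, \dotsc, U_{t-1}^j)\}_{j \in [m]}$ are independent across $j$: the argmax identity $X_i = \argmax_x \langle x, \hat\theta_{i-1} + r_{i-1}\xi_i \tilde\theta_{i-1}^{J_i}\rangle$ constrains only the single ensemble member indexed by $J_i$, so the per-$j$ constraints live on disjoint blocks of raw variables. Under this conditional independence, $G_{t-1}(v) = \sum_j W_j^2$ with $W_j = \langle S_{t-1}^j, v\rangle$ becomes a sum of bounded, conditionally independent random variables, to which a Bernstein-type inequality applies.

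The hard part will be this concentration step. First, one must verify that under the $\cH_{t-1}$-conditional distribution both $\E[W_j^2 \mid \cH_{t-1}]$ and its variance remain at the target scale, combining the unconditional identities $\E[\langle S_0^j, v\rangle^2] = \lambda^2\|v\|_2^2$ and $\E[(U_i^j)^2] = 1/3$ with control over how the argmax constraints distort these moments---this is where the Rademacher symmetrization via $\xi_i$ is likely to help, by ensuring that each constraint intersected with its reflection captures a balanced (rather than skewed) portion of the support. Second, a plain Hoeffding bound is too weak because $|W_j|$ can scale as $(\lambda\sqrt d + T)\|v\|_2$, so a variance-adapted Bernstein bound is needed, using $\sum_{i<t}\langle X_i, v\rangle^2 = v\tran(V_{t-1}-\lambda I) v$ together with the envelope $v \in (1/\sqrt\lambda)\Bd$ to keep the variance term dimension-free and thus produce the right exponent $m/400$.
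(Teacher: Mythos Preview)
Your conditional-independence observation is correct, but it does not deliver what you need. Once you condition on $\cH_{t-1}$, every selection event $J_i=j$ forces the raw variables $(S_0^j,U_1^j,\dots,U_{i-1}^j)$ of particle $j$ to lie in the preimage, under an affine map determined by $V_{i-1}$ and $\hat\theta_{i-1}$, of the normal cone of $\cX$ at $X_i$. The conditional law of that particle is therefore uniform on an intersection of such sets, over which you have essentially no control: there is no reason the conditional moments of $W_j$ should resemble the unconditional ones, and for a fine action set a single argmax constraint can already pin $\tilde\theta_{i-1}^j$ down tightly. Your symmetrisation intuition does not help, because you also condition on $\xi_i$, so you see one specific constraint rather than a symmetric pair; and even without conditioning on $\xi_i$, the two constraint sets corresponding to $\hat\theta_{i-1}\pm r_{i-1}\tilde\theta_{i-1}^j$ are not reflections of each other unless $\hat\theta_{i-1}=0$. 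Since the theorem only requires $m\gtrsim d\vee\log T$, in the main regime $T\gg m$ every particle is selected many times and no clean sub-ensemble survives. As a secondary issue, your crude Lipschitz bound via $\|S_{t-1}^j\|_2\le\lambda\sqrt d+T$ is too large by roughly a factor $\sqrt T$ to produce a net of the stated size $N$.

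The paper sidesteps all of this by analysing the process in \emph{time} rather than across the ensemble at a snapshot. For fixed $u$ it studies $R_t(u)=\tfrac{1}{m}\sum_j\langle u,S_t^j\rangle^2/\|u\|_{V_t}^2$, and the key point is that, conditional on everything up to and including $X_{t+1}$, the \emph{fresh} targets $U_{t+1}^1,\dots,U_{t+1}^m$ are still i.i.d.\ uniform---no argmax constraint has touched them yet. This yields an exact one-step decomposition $R_{t+1}=R_t+D_t+W_{t+1}$ with mean-reverting drift $D_t=(c-R_t)Q_t$ (where $c$ is the second moment of the target law and $Q_t=\langle u,X_{t+1}\rangle^2/\|u\|_{V_{t+1}}^2$) and a noise $W_{t+1}$ that is $O(1/\sqrt m)$-subgaussian by averaging over the $m$ fresh $U_{t+1}^j$. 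A self-normalised martingale inequality controls $\sum W_{i+1}$ uniformly in $t$, a short excursion argument then traps $R_t(u)$ in a fixed interval, and the uniform-in-$u$ statement comes from an $\epsilon$-net on $\Sd$ together with a Lipschitz bound for $R_t$ of order $\|\Gamma_t\|^2\|V_t^{1/2}\|/(m\sqrt\lambda)$, which is exactly what produces the net size $N$.
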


\begin{proof}[Proof of \cref{claim:es}]%
We will use the master theorem,  \cref{thm:master}.
We let the filtrations $(\cA_t)_t$ and $(\cA_t')_t$ needed in this theorem be defined recursively via
\begin{equation}
    \cA'_0 = \sigma(S^1_0, \dotsc, S^m_0), \ \ \cA_t = \sigma(\cA'_{t-1}\cup \sigma( J_t, \xi_t)) \ \ \text{and} \ \ \cA'_{t} = \sigma(\cA_t\cup\sigma( U^1_t, \dotsc, U^m_t))\,, \, t\in \Np.
\end{equation}
With this, the conditions concerning these filtrations hold. 

We let $b_t = a\beta_t^\delta$ for $a = 10\sqrt{m}$, for each $t \in \N$, which is $\sigma(\cF_t\cup \cA'_t)_t$-adapted, as required. For the two required events, we take the event $\cE_T$ from \cref{thm:sing-val-gamma} and the event $\cE_T^\star  = \cap_{t\in \N}\{\theta_\star \in \psi_t(\Bd)\}$. To see that these satisfy the requirements given in \cref{eq:mtc}, observe the following:
\begin{description}
    \item[Conditions on $\cE_T$]     By our assumptions on $m$ and $\lambda$,  
    the conditions for \cref{thm:sing-val-gamma} are satisfied. 
    Hence, $\P(\cE_T)\ge 1-\delta$. We now show that on $\cE_T$, for all $t\in [T]$, $\theta_t\in \Theta_{t-1}$. Note this is equivalent to the statement that on $\cE_T$, for all $t\in [T]$, $\|\xi_t \Gamma_{t-1}^{J_t}\|_2 \le b_{t-1} / r_{t-1} = 10\sqrt{m}/7$. And indeed, on $\cE_T$, 
    \begin{equation}
        \|\xi_t \Gamma_{t-1}^{J_t}\|_2 \leq \textstyle{\max_j} \| \Gamma^j_{t-1} \|_2 \le s_1(\Gamma_{t-1}) \leq 10\sqrt{m}/7\,, \quad \forall t \in [T]\,.
        \label{eq:r0sgjub}
    \end{equation}
    \item[Conditions on $\cE_T^\star$] Since $a \geq 1$, $\psi_t(\Bd) \subset \psi_t(a\Bd) = \Theta_t$, and thus, by \cref{lem:conf-sets}, $\theta_\star \in \Theta_t$ for all $t \in \N$ jointly with probability at least $1-\delta$.     
\end{description}

It remains to lower-bound the sequence $(p_t)_t$. For this, define $\P_{t-1}'(\cdot) = \P(\cdot \mid \sigma(\cF_{t-1}\cup \cA'_{t-1}))$. 
Fixing $t\in \Np$,
writing $Q(A) = \P_{t-1}'(\theta_{t} \in A)$, we have that on $\cE^\star_T$, 
\begin{align}
    p_{t-1}
    & = Q( \ThetaOpt \cap \Theta_{t-1} )\\
    & \ge  \inf_{u \in \Sd} Q(\psi_{t-1}^\delta(H_{u} \cap a \Bd)) \tag{\cref{lem:thetaopt-lower-bound}}\\
    &= \inf_{u \in \Sd} \P_{t-1}'(\psi_{t-1}^\delta(7 \xi_{t}\Gamma^{J_{t}}_{t-1}) \in \psi_{t-1}^\delta(H_{u} \cap a\Bd)) \tag{definition of $\theta_t$, $Q$}\\
    &= \inf_{u \in \Sd} \P_{t-1}'( 7 \xi_{t}\Gamma^{J_{t}}_{t-1} \in H_{u} \cap a\Bd) 
    \tag{$\psi_{t-1}^\delta$ is a bijection} \\
 	&= \inf_{u \in \Sd} \frac{1}{2m}
    \sum_{(s,j)\in \{\pm 1\} \times [m]}
    \1{ 7 s\Gamma^{j}_{t-1} \in H_{u} \cap a\Bd}\,,\label{eq:p-lower-bound}
\end{align}
where the last equality used the definitions of $\P_{t-1}'$, $\cF_{t-1}$ and $\cA'_{t-1}$.

We now show that on $\cE_T$, regardless of the choice of $u$, for at least one $(s,j)\in \{\pm1\} \times [m]$,
 $7 s\Gamma^{j}_{t-1} \in H_{u} \cap a\Bd$.
Assume thus that $\cE_T$ holds. 
First observe that for any $(s,j)$, by \cref{eq:r0sgjub}, $7 s \Gamma^j_{t-1}\in a\Bd$. Hence, it remains to show that for any $u$, for some $(s,j)$, $7 s \Gamma^j_{t-1}\in H_{u}$; this holds because for any $u$,
    \begin{equation}\label{eq:use-of-symmetrisation}
        1 \leq \frac{s_d^2(7 \Gamma_{t-1})}{m} \le %
        \frac{1}{m} \sum_{j=1}^m \langle 7 \Gamma^j_{t-1}, u \rangle^2 
        \leq %
        \max_{j} \langle 7 \Gamma^j_{t-1}, u \rangle^2\,.
    \end{equation}
    
Hence, on $\cE_T \cap \cE_T^\star$, $p_{t-1}\ge 1/(2m)$ and thus $b_{t-1}/p_{t-1} \leq 20m^{3/2} \beta_{t-1}^\delta$. Inserting this bound into the regret bound of \cref{thm:master} establishes the result.
\end{proof}

\begin{remark}[On symmetrisation]\label{remark:es-sym-2}
    If the algorithm were run without symmetrisation, following the steps of the proof we see that we need to show that on $\cE_T$, regardless  the choice of $u$, 
    for at least one of $j\in [m]$, $7\Gamma_{t-1}^j \in H_u$. In the presence of symmetrisation, the equivalent statement is that regardless of the choice of $u$, for at least one `particle' $j$, we have either $\smash{7\Gamma_{t-1}^j\in H_u}$ or $\smash{7\Gamma_{t-1}^j\in H_{-u}}$. Through \cref{eq:use-of-symmetrisation}, the latter reduces to studying quadratic forms, which lead to convenient algebra. 
\end{remark}

\begin{remark}[Can we improve our bound?]\label{remark:es-improve-bound}
    For any $u \in \Sd$, we lower bound the maximum $\smash{\max_j \langle 7\Gamma^j_{t-1}, u \rangle^2}$ by the average $\smash{\frac{1}{m} \sum_{j=1}^m \langle 7\Gamma^j_{t-1}, u \rangle^2}$, which we show exceeds 1, thus establishing the existence of at least one element $\pm7\Gamma^j_{t-1}$ in $H_u$ (out of $2m$). This gives a $1/(2m)$ lower bound for $p_{t-1}$. However, lower bounding a maximum by an average might be wasteful. If, instead, we managed to lower bound the median (or any other fixed-proportion order statistic) of $\langle 7\Gamma^1_{t-1}, u \rangle^2, \dotsc, \langle 7\Gamma^m_{t-1}, u \rangle^2$, we would be able to conclude that $p_t$ is lower bounded by a constant (since a constant proportion of `particles' are then contained in $H_{\pm u}$), and thus conclude that linear ensemble sampling performs similarly to Thompson sampling (see \cref{apx:ts} for an analysis of the latter). However, order statistics are considerably more technical to work with than averages, especially in a non-i.d.d.\@ setting, and we do not currently know if such a result is attainable (for order statistics, see \textcite{gordon2012uniform,litvak2018order} and the references within).
\end{remark}

\begin{remark}[Upper versus lower bound]\label{rem:lower-vs-upper} When it comes to singular values, the difficulty is usually in the lower bounds. This is so for our problem as well. Consider our use of the upper bound in the middle inequality of \cref{eq:r0sgjub}: all we really needed there is that $\| \Gamma^j_{t-1} \|_2$ is upper bounded for all $j \in [m]$ and $t \in [T]$. But, for any fixed $j \in [m]$, we can apply the standard method-of-mixtures of bound of \textcite{abbasi2011improved} to obtain that $\| \Gamma^j_{t-1} \|_2 = O(\sqrt{d \log T})$ for all $t \in [T]$. An~application of the~union bound over the $j \in [m]$ then recovers what we need, up to logarithmic terms. We note that the $\sqrt{d}$ factor is strictly necessary, per \textcite{lattimore2023lower}. 

\end{remark}

\subsection{Setting up to prove \cref{thm:sing-val-gamma}: bound on singular values}\label{sec:proof-singular-values}

\Cref{thm:sing-val-gamma} for $t=0$ follows by standard methods (see \cref{appendix:singular-value-init} for proof): 
\begin{lemma}\label{lem:initialisation}
    Whenever $m \geq 10d$,
    \begin{equation}
        \P\left(\sqrt{m/2} \leq s_d(\Gamma_0) \leq s_1(\Gamma_0)\leq \sqrt{3m/2} \right)
        \geq 1- e^{-m/24}.
    \end{equation}
\end{lemma}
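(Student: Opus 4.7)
My plan is to treat this as a standard random matrix concentration problem. By isotropy of $V_0 = \lambda I$ and the spherical law of each $S_0^j$, the columns $(\Gamma_0^j)_{j \in [m]}$ are i.i.d.\ rotationally symmetric vectors of deterministic Euclidean norm $\sqrt{d}$ (after scaling through $V_0^{-1/2}$), and in particular $\mathbb{E}[\Gamma_0\Gamma_0^\top] = m I_d$. The aim is to sandwich both $s_1(\Gamma_0)^2 = \sup_{u \in \Sd} \|\Gamma_0^\top u\|_2^2$ and $s_d(\Gamma_0)^2 = \inf_{u \in \Sd} \|\Gamma_0^\top u\|_2^2$ between $m/2$ and $3m/2$ by combining pointwise concentration around the mean $m$ with an $\epsilon$-net discretisation of $\Sd$.

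For the pointwise concentration, for any fixed $u \in \Sd$ write $W_j = \Gamma_0^j/\sqrt{d}$, which are i.i.d.\ uniform on $\Sd$, so that
\[
\|\Gamma_0^\top u\|_2^2 = d \sum_{j=1}^m \langle W_j, u\rangle^2.
\]
By rotational invariance each summand $d \langle W_j, u\rangle^2$ is distributed as $d g_1^2/\|g\|_2^2$ for $g \sim \mathcal{N}(0, I_d)$; this variable has mean $1$, is bounded by $d$, and has Orlicz $\psi_1$-norm bounded by a universal constant independent of $d$ (as follows by combining the $\psi_1$-concentration of $g_1^2$ with the concentration of $\|g\|_2^2$ around $d$). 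Bernstein's inequality for sub-exponential sums then yields, for each fixed $u \in \Sd$ and $\alpha \in (0,1)$,
\[
\mathbb{P}\bigl(\bigl|\|\Gamma_0^\top u\|_2^2 - m\bigr| > \alpha m\bigr) \le 2 e^{-c \alpha^2 m}
\]
for a universal $c > 0$. I would then take a $(1/4)$-net $\mathcal{N} \subset \Sd$ of cardinality at most $9^d$ and combine the standard net-to-sphere inequalities $s_1(\Gamma_0) \le (1 - 1/4)^{-1} \max_{u \in \mathcal{N}} \|\Gamma_0^\top u\|_2$ and $s_d(\Gamma_0) \ge \min_{u \in \mathcal{N}} \|\Gamma_0^\top u\|_2 - (1/4) s_1(\Gamma_0)$ with a union bound over $\mathcal{N}$ at a suitable level $\alpha$. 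This yields the desired sandwich on an event of probability at least $1 - 2 \cdot 9^d e^{-c\alpha^2 m}$. The hypothesis $m \ge 10 d$ absorbs the covering factor $9^d = e^{d \log 9}$ into the exponential, leaving a tail of the form $e^{-c' m}$, which can be brought down to $e^{-m/24}$ by choosing $\alpha$ and the net parameter appropriately.

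The main obstacle is almost purely arithmetic: making the explicit constants line up so that the sandwich with the precise radii $\sqrt{m/2}$ and $\sqrt{3m/2}$ survives the $(1 - \epsilon)^{-1}$ net inflation factor, while simultaneously compressing the covering factor $9^d$ into the target tail $e^{-m/24}$ under only $m \ge 10 d$. If the direct $(1/4)$-net argument proves too loose (since $(4/3)^2 \cdot (5/4) \approx 2.22 > 3/2$), one would refine it either by using a finer net (e.g.\ a $(1/10)$-net with $|\mathcal{N}| \le 21^d$) paired with a smaller Bernstein level, or by coupling $\Gamma_0$ to a standard Gaussian matrix $G \in \mathbb{R}^{d \times m}$ via the representation $W_j = g_j/\|g_j\|_2$ and appealing to the Davidson--Szarek bound $\sqrt{m} - \sqrt{d} - t \le s_d(G) \le s_1(G) \le \sqrt{m} + \sqrt{d} + t$, paying the small price of controlling $\max_{j \in [m]} |\|g_j\|_2^2 - d|$ uniformly.
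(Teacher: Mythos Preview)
Your approach is essentially the same as the paper's: pointwise sub-exponential concentration of $\|\Gamma_0^\top u\|_2^2$ around $m$, an $\epsilon$-net of $\Sd$, and a union bound absorbed by the hypothesis $m\ge 10d$. The paper differs from your sketch only in the technical choices it makes to pin down the constants. First, instead of appealing to a generic $\psi_1$-bound for $d\langle W_j,u\rangle^2$, it uses the exact law $\langle W_j,u\rangle^2\sim\Beta{1/2}{(d-1)/2}$ together with Skorski's Bernstein-type MGF bound for Beta variables, yielding sub-gamma parameters with explicit numerical constants. Second, rather than bounding $s_1,s_d$ directly via the rectangular net inequalities you wrote (where, as you observed, the $(1-\epsilon)^{-1}$ inflation eats into the target radii), the paper controls the symmetric matrix $\Gamma_0\Gamma_0^\top/m - I$ via the quadratic-form net lemma ($\|A\|\le(1-2\epsilon)^{-1}\max_{x\in\cN}|\langle Ax,x\rangle|$) and then converts via the approximate isometry lemma ($\|A^\top A - I\|\le\epsilon\vee\epsilon^2\Rightarrow 1-\epsilon\le s_d(A)\le s_1(A)\le 1+\epsilon$). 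This two-step route decouples the net inflation from the singular-value sandwich and makes the arithmetic go through cleanly with $\epsilon=1/20$ and $y=81m/2500$; your Davidson--Szarek fallback would also work but is unnecessary.
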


To extend the result to $t>0$, we will consider the processes $(R_t^j(u))_{t}$ and $(R_t(u))_{t}$ defined for $u \in \R^d$, $u\ne 0$ by
\begin{equation}
     R^j_t(u) = \frac{\langle u, S_t^j \rangle^2}{ \|u\|_{V_t}^2} \spaced{and} R_t(u) = \frac{1}{m}\sum_j R^j_t(u).
\end{equation}
Note that for $v=V_t^{1/2} u\ne 0$ one has
$R^j_t(u)= \langle v, \Gamma_t^j \rangle^2/ \|v\|^2$. Since $V_t$ is positive-definite (and hence a bijection when viewed as a linear map) we observe the following relations:

\begin{claim}\label{claim:inf-sup-R}
For all $t \geq 0$, $j \in [m]$,
\begin{equation}
    \sup_{u\neq 0} R^j_t(u) = \sup_{v\neq 0} \frac{\langle v, \Gamma_t^j\rangle^2}{\|v\|^2_2} = \|\Gamma^j_t\|^2_2 \spaced{and} \sup_{u\neq 0} R_t(u) = \sup_{v \neq 0} \frac{\|\Gamma_t\tran v\|^2_2}{m\|v\|^2_2} =  \frac{s_1^2(\Gamma_t)}{m} \,,
\end{equation}
and likewise $\inf_{u\neq 0} R_t(u) = s_d^2(\Gamma_t)/m$.
\end{claim}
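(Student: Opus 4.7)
My plan is that this claim is essentially bookkeeping: a change of variables plus the variational characterisation of singular values. The key observation is that, since $V_t$ is positive definite, its square root $V_t^{1/2}$ is a bijective linear map on $\R^d$, and the weighted norm satisfies $\|u\|_{V_t} = \|V_t^{1/2} u\|_2$. Substituting the defining relation $S_t^j = V_t^{1/2} \Gamma_t^j$ gives
\begin{equation}
R_t^j(u) = \frac{\langle u, V_t^{1/2} \Gamma_t^j\rangle^2}{\|V_t^{1/2}u\|_2^2} = \frac{\langle V_t^{1/2}u, \Gamma_t^j\rangle^2}{\|V_t^{1/2}u\|_2^2},
\end{equation}
so setting $v = V_t^{1/2} u$ recovers $R_t^j(u) = \langle v, \Gamma_t^j\rangle^2/\|v\|_2^2$. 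Because $v \mapsto V_t^{-1/2} v$ is a bijection of $\R^d \setminus\{0\}$ onto itself, suprema (and infima) over $u \neq 0$ coincide with those over $v \neq 0$, which handles the first equality in each of the three identities.

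For the second equality in the claim about $R_t^j$, I would invoke Cauchy--Schwarz: $\langle v, \Gamma_t^j\rangle^2 \le \|v\|_2^2 \|\Gamma_t^j\|_2^2$ with equality at $v = \Gamma_t^j$ (which is nonzero unless $\Gamma_t^j = 0$, in which case the supremum is trivially $0 = \|\Gamma_t^j\|_2^2$). For $R_t(u)$, summing over $j$ and using $\sum_{j=1}^m \langle v, \Gamma_t^j\rangle^2 = \|\Gamma_t\tran v\|_2^2$ gives
\begin{equation}
R_t(u) = \frac{\|\Gamma_t\tran v\|_2^2}{m\|v\|_2^2} = \frac{\langle v, \Gamma_t \Gamma_t\tran v\rangle}{m\|v\|_2^2}.
\end{equation}
The matrix $\Gamma_t \Gamma_t\tran \in \R^{d\times d}$ is symmetric positive semidefinite with eigenvalues $s_1^2(\Gamma_t) \ge \dotsb \ge s_d^2(\Gamma_t)$, so the standard Courant--Fischer/Rayleigh quotient characterisation yields $\sup_{v\neq 0} R_t(u) = s_1^2(\Gamma_t)/m$ and $\inf_{v\neq 0} R_t(u) = s_d^2(\Gamma_t)/m$, which is the asserted identity.

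There is no real obstacle here: the only thing to be a little careful about is the bijectivity step, making sure one records that $V_t \succ 0$ (which holds because $V_0 = \lambda I$ with $\lambda > 0$) so that $V_t^{1/2}$ is invertible and the change of variables between $u$ and $v$ is legitimate. Everything else is the standard Rayleigh quotient identification of extremal singular values of $\Gamma_t$ (and of $\|\Gamma_t^j\|_2$ for a single column), so I would write this up as two short lines of algebra followed by an appeal to these standard facts.
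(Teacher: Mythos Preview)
Your proposal is correct and matches the paper's approach exactly: the paper notes just before the claim that for $v = V_t^{1/2} u \neq 0$ one has $R_t^j(u) = \langle v, \Gamma_t^j\rangle^2/\|v\|_2^2$, and that $V_t$ is positive definite (hence $V_t^{1/2}$ is a bijection), after which the identities are immediate from the variational characterisation of singular values. The paper does not even write out a separate proof, treating the claim as an observation, so your write-up is already more detailed than the original.
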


In the upcoming section (\cref{sec:proof-process-bound}) we establish the following bound on $R_t(u)$ for a fixed $u \in \Sd$.
\begin{lemma}\label{lem:process-max-lower-bound}
    Fix $u \in \Sd$ and $\lambda \geq 5$. Suppose that $m \geq 400 \log(3 + 2T)$. Then, there exists an event $\cE'_T$ with $\P(\cE'_T)\geq
        1 - T e^{- \frac{m}{400}}$ such that on $\cE'_T \cap \{ \frac{1}{2} \leq  R_0(u) \leq \frac{3}{2}\}$,
    \begin{equation}
        \frac{9}{100}  \leq R_t(u)\leq \frac{5}{3}\,, \quad \forall t\in [T]\,.
    \end{equation}
\end{lemma}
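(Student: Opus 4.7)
The plan is to decompose $R_t(u)$ as a deterministic drift that contracts towards $1/3$ plus two martingale noise terms, and then to control the noise via concentration combined with a stopping-time argument that handles the closed-loop dependence. Write $b_s = \langle u, X_s\rangle$, $N_t = \frac{1}{m}\sum_j \langle u, S_t^j\rangle^2$ and $D_t = \|u\|_{V_t}^2$, so that $R_t(u) = N_t/D_t$. Substituting $S_s^j = S_{s-1}^j + U_s^j X_s$ and telescoping, using $D_t - D_0 = \sum_s b_s^2$ and $\E[(U_s^j)^2] = 1/3$, yields
\[
N_t = N_0 + \tfrac{1}{3}(D_t - D_0) + M_t + P_t,
\]
where
\[
M_t = \sum_{s=1}^t \frac{2 b_s}{m}\sum_{j} U_s^j \langle u, S_{s-1}^j\rangle \quad\text{and}\quad P_t = \sum_{s=1}^t \frac{b_s^2}{m}\sum_{j} \bigl((U_s^j)^2 - \tfrac{1}{3}\bigr)
\]
are zero-mean martingales with respect to the filtration obtained by adjoining $X_{s+1}$ to the past (as $X_s$ is predictable but the $U_s^j$ are freshly drawn). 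Dividing by $D_t$ and using $N_0 - D_0/3 = D_0(R_0(u) - 1/3)$ yields the central identity $R_t(u) = 1/3 + (D_0/D_t)(R_0(u) - 1/3) + (M_t + P_t)/D_t$.

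On $\{R_0(u) \in [1/2, 3/2]\}$, since $D_t \ge D_0$, the drift term lies in $[0, 7/6]$, so $R_t(u) \in [1/3, 3/2] + E_t$ with $E_t := (M_t + P_t)/D_t$. The target window $[9/100, 5/3]$ then reduces to showing $|E_t| \le 1/6$ uniformly for $t \in [T]$: the upper constraint $3/2 + E_t \le 5/3$ is the binding one, while the lower constraint $1/3 - E_t \ge 9/100$ requires only $E_t \ge -73/300$.

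For the concentration step, I would introduce the stopping time $\tau = \inf\{t \ge 1 : R_t(u) \notin [9/100, 5/3]\}$, so that for $s \le \tau$ one automatically has $N_{s-1} \le (5/3) D_{s-1}$. Since $U_s^j \sim \cU([-1,1])$ is sub-Gaussian with proxy $1/3$ and $(U_s^j)^2 - 1/3 \in [-1/3, 2/3]$, on $\{\tau \ge s\}$ the conditional sub-Gaussian proxies of $\Delta M_s$ and $\Delta P_s$ are bounded respectively by $\frac{4 b_s^2 N_{s-1}}{3m} \le \frac{20 b_s^2 D_{s-1}}{9m}$ and $\frac{b_s^4}{4m}$. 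The elementary inequality $(D_s - D_{s-1})D_{s-1} \le (D_s^2 - D_{s-1}^2)/2$ gives $\sum_s b_s^2 D_{s-1} \le D_t^2/2$, and $\sum_s b_s^4 \le \sum_s b_s^2 \le D_t$, so the aggregate proxies of the stopped martingales are at most $10 D_t^2/(9m)$ and $D_t/(4m)$ respectively. A self-normalised Azuma--Hoeffding bound---or Azuma applied after dyadic peeling in $D_t$, to prevent $|M_t|/D_t$ from being inflated by a spurious $D_t/D_0$ factor---then yields, uniformly in $t \in [T]$ with probability at least $1 - T e^{-m/400}$, $|E_{t\wedge\tau}| \le 1/6$ once $m \ge 400\log(3+2T)$ and $\lambda \ge 5$.

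To close the loop: on $\{\tau \le T\}$, the decomposition and the definition of $\tau$ force $|E_\tau| > 1/6$, contradicting the concentration bound (whose variance estimates remain valid up to and including the step $t = \tau$ by construction). Hence $\P(\tau > T) \ge 1 - T e^{-m/400}$, giving the event $\cE'_T$ on which the desired inequalities hold for all $t \in [T]$. The main obstacle is precisely this closed-loop dependence---the variance bounds for $M_t, P_t$ rely on the process already staying in the target range---resolved by the stopping-time device; the peeling (or self-normalised) step is needed so that the noise bound is a true constant in the relevant scale $D_t$ rather than blowing up with the ambient horizon.
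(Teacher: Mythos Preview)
Your approach is sound and takes a genuinely different route from the paper's. The paper works with the one-step decomposition $R_{t+1} = R_t + (c - R_t)Q_t + W_{t+1}$ (with $Q_t = \langle u, X_{t+1}\rangle^2/\|u\|_{V_{t+1}}^2$) and then runs an \emph{excursion argument}: it anchors at the last time $\tau$ with $R_\tau \ge 1/2$ (resp.\ $\le 3/2$), bounds the accumulated noise over $[\tau,t]$ by a de~la~Pe\~na-type self-normalised inequality of the form $|\sum_{i=\tau}^t W_{i+1}| \le \tfrac{1}{10}(3 + \sum_{i=\tau}^t Q_i R_i)$, and shows that the drift dominates this over the excursion. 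You instead telescope the numerator $N_t$ directly to obtain the closed form $R_t = \tfrac{1}{3} + (D_0/D_t)(R_0 - \tfrac{1}{3}) + E_t$, which solves the deterministic part explicitly and reduces the whole lemma to the single estimate $|E_t| \le 1/6$; the closed-loop dependence of the martingale variance on $R$ itself is then handled by your stopping-time device rather than by the paper's excursion bookkeeping. Your route is arguably more direct---the deterministic behaviour is visible at a glance and there is no need to treat lower and upper excursions separately---while the paper's route has the feature that its noise bound is expressed in the same ``currency'' ($\sum Q_i R_i$) as the drift, making the final comparison essentially a one-liner. Two minor points: you correctly compute $\E[(U_s^j)^2]=1/3$, so the fixed point is $1/3$; and the phrase ``$1/3 - E_t \ge 9/100$'' should read ``$1/3 + E_t \ge 9/100$'', though your intended conclusion $E_t \ge -73/300$ is right. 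The concentration step is stated at the level of a plan---you assert rather than verify that the self-normalised (or peeled) Azuma bound delivers $|E_{t\wedge\tau}| \le 1/6$ with the specific constant $400$---but the scaling is correct and the same de~la~Pe\~na tool the paper invokes would close it.
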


Our proof of \cref{thm:sing-val-gamma} employs the above pointwise bound together with a covering argument over $\Sd$. For that, we need the following Lipschitzness result, proven in \cref{appendix:covering} (by simple algebra), and a standard bound on the size of $\epsilon$-nets of $\Sd$ \citep[Lemma 4.10 in][]{pisier1999volume}.

\begin{lemma}\label{lem:rtlipschitz}
    On $\Sd$, 
        $R_t$ is $L$-Lipschitz with $L\le 4\|\Gamma_t\|^2 \|V_t^{1/2} \|/(m\sqrt{\lambda})$.
\end{lemma}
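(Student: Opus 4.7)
My plan is to reduce $R_t$ to a quadratic form on the unit sphere via the normalising substitution $v = V_t^{1/2}u$, and let homogeneity do most of the work. Writing $f(v) = \|\Gamma_t\tran v\|_2^2/(m\|v\|_2^2)$, one has $R_t(u) = f(V_t^{1/2}u)$. Because $f$ is homogeneous of degree zero, $f(v) = f(w)$ for $w := v/\|v\|_2$, and on unit vectors $f(w) = w\tran(\Gamma_t\Gamma_t\tran/m)w$ is a bona fide quadratic form. The Lipschitz constant of $w\mapsto w\tran A w$ on $\Sd$ with $A = \Gamma_t\Gamma_t\tran/m$ is bounded by $2\|A\| = 2\|\Gamma_t\|^2/m$ via the identity $w_1\tran A w_1 - w_2\tran A w_2 = (w_1-w_2)\tran A(w_1+w_2)$, Cauchy--Schwarz, and $\|w_1+w_2\|_2 \le 2$.

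It then remains to convert the sphere-to-sphere Lipschitz estimate back to $\Sd$ in the $u$-coordinate. For $u_1,u_2\in\Sd$ with $v_i = V_t^{1/2}u_i$ and $w_i = v_i/\|v_i\|_2$, this gives $|R_t(u_1)-R_t(u_2)| = |f(w_1)-f(w_2)| \le (2\|\Gamma_t\|^2/m)\|w_1-w_2\|_2$. The standard estimate $\|w_1-w_2\|_2 \le 2\|v_1-v_2\|_2/\min(\|v_1\|_2,\|v_2\|_2)$, combined with $\|v_i\|_2 \ge \sqrt{\lambda}$ (from $V_t \succeq \lambda I$) and $\|v_1-v_2\|_2 \le \|V_t^{1/2}\|\,\|u_1-u_2\|_2$, yields $\|w_1-w_2\|_2 \le 2\|V_t^{1/2}\|\,\|u_1-u_2\|_2/\sqrt{\lambda}$. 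Substituting gives the claimed $L \le 4\|\Gamma_t\|^2\|V_t^{1/2}\|/(m\sqrt{\lambda})$ with the stated constant.

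The main obstacle is not any single computation but the choice of route. Expanding $R_t$ as a quotient $g/h$ directly in $u$, or bounding its Euclidean gradient on $\Sd$ and converting with an MVT-on-the-chord argument, yields the strictly weaker rate $\|V_t\|/\lambda$ rather than $\|V_t^{1/2}\|/\sqrt{\lambda}$; the latter is what is needed for the covering argument in the proof of \Cref{thm:sing-val-gamma} to work at scale $\sqrt{m}$. The trick is to let the substitution $v=V_t^{1/2}u$ extract exactly one factor of $\|V_t^{1/2}\|$, and then let degree-zero homogeneity collapse $f$ down to a quadratic form on the unit sphere where Lipschitzness is immediate with the clean constant $2$.
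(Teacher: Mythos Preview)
Your proof is correct and achieves the stated constant exactly. Both you and the paper begin with the same key substitution $v = V_t^{1/2}u$, writing $R_t(u) = \|\Gamma_t\tran v\|_2^2/(m\|v\|_2^2)$, and both extract exactly one factor of $\|V_t^{1/2}\|/\sqrt{\lambda}$ from this change of variables. The execution of the remaining step differs, however. The paper works directly with the quotient: setting $A=\|\Gamma_t\tran z\|_2$, $B=\|\Gamma_t\tran w\|_2$, $a=\|z\|_2$, $b=\|w\|_2$, it uses the elementary identity
\[
\left|\frac{A^2}{a^2}-\frac{B^2}{b^2}\right| \le \frac{2A^2}{a^2}\frac{|b-a|}{b} + \frac{|A-B|(A+B)}{b^2}
\]
and bounds each term separately by $2\|\Gamma_t\|^2\|V_t^{1/2}\|\epsilon/\sqrt{\lambda}$. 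Your route is more structural: you exploit degree-zero homogeneity to normalise $v\mapsto w=v/\|v\|_2$, reduce to the quadratic form $w\mapsto w\tran(\Gamma_t\Gamma_t\tran/m)w$ on the sphere, and use the polarisation identity to get the clean Lipschitz constant $2\|\Gamma_t\|^2/m$, followed by the standard normalisation estimate $\|w_1-w_2\|_2\le 2\|v_1-v_2\|_2/\min(\|v_1\|_2,\|v_2\|_2)$. Both decompositions land on the same constant $4$; yours is arguably tidier in that the two factors of $2$ have transparent origins (polarisation and the normalisation map), whereas the paper's two contributions of $2$ emerge from separate pieces of algebra.
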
    

\begin{lemma}
\label{claim:size-of-cover}
    For all $\epsilon$ in $(0,1]$, there exists an $\epsilon$-net $\cN$ of $\Sd$ with $|\cN| \leq \left(1 + \frac{2}{\epsilon}\right)^d$. 
\end{lemma}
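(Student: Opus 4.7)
The plan is to use the standard Euclidean volume-packing argument. First, I would let $\cN \subseteq \Sd$ be a maximal $\epsilon$-separated subset, meaning that $\|x - y\|_2 \geq \epsilon$ for all distinct $x, y \in \cN$ and $\cN$ cannot be enlarged while retaining this property. Such a set exists by a standard greedy construction (since $\Sd$ is compact the procedure terminates, and the resulting $\cN$ is finite). By maximality, $\cN$ is automatically an $\epsilon$-net for $\Sd$: any $y \in \Sd$ must lie within distance $\epsilon$ of some point of $\cN$, for otherwise $\cN \cup \{y\}$ would still be $\epsilon$-separated, contradicting maximality.

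To control $|\cN|$, I would then apply a volume-packing argument in $\R^d$. For each $x \in \cN$, consider the open ball $x + (\epsilon/2)\Bd$. The $\epsilon$-separation property guarantees that these balls are pairwise disjoint, and since $x \in \Sd \subset \Bd$, each is contained in $(1 + \epsilon/2)\Bd$ by the triangle inequality. Letting $v_d$ denote the $d$-dimensional Lebesgue volume of $\Bd$, additivity of volume on the disjoint union and monotonicity yield
\begin{equation*}
    |\cN| \cdot (\epsilon/2)^d v_d \;\leq\; (1 + \epsilon/2)^d v_d\,,
\end{equation*}
and dividing through by $(\epsilon/2)^d v_d > 0$ gives $|\cN| \leq (1 + 2/\epsilon)^d$, as claimed.

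There is essentially no obstacle here; this is a textbook calculation, and indeed the statement is literally the one attributed to \textcite{pisier1999volume} in the excerpt. One could alternatively just invoke that reference, but reproducing the two-line argument above is cleaner and keeps the paper self-contained. The only minor point to note is that restricting to $\epsilon \in (0,1]$ is not essential for the argument—the same bound holds for all $\epsilon > 0$—but this range is all the applications in \cref{sec:proof-singular-values} require.
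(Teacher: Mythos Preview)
Your proposal is correct: the maximal $\epsilon$-separated set plus volume-packing argument is exactly the standard proof of this covering-number bound. The paper itself does not give a proof at all---it simply cites \textcite[Lemma~4.10]{pisier1999volume}---so your write-up supplies precisely the textbook argument that the citation points to, and nothing further is needed.
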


\begin{proof}[Proof of \cref{thm:sing-val-gamma}] 
    Let $\cN_\epsilon$ be a minimal $\epsilon$-net of $\Sd$ and define the event
    \begin{equation}
        \cE_\epsilon = \left\{\forall v \in \cN_\epsilon,\ \forall t \in [T],\ \frac{9}{100} \leq R_t(v) \leq \frac{5}{3}\right\}.
    \end{equation}
    We will now confirm that, for a suitable choice of $\epsilon$, $\cE_\epsilon$ is a subset of the event in \cref{thm:sing-val-gamma}, and that $\P(\cE_\epsilon) \geq 1-NTe^{-\frac{m}{400}}$, which establishes the theorem.
    
    Let $u\in \Sd$ and $v\in \cN_\epsilon$ be such that $\norm{u-v}_2\le\epsilon$.
    From \cref{lem:rtlipschitz}, 
    and
        choosing $\epsilon = \sqrt{\lambda}/(132 \sqrt{T + \lambda})$, and noting that $\|V_t^{1/2}\| \leq \sqrt{T+\lambda}$, we get
    \begin{equation}
        | R_t(u) -  R_t(v)| \leq \frac{4 \|\Gamma_t\|^2 \|V^{1/2}_t\|}{m\sqrt{\lambda}}\epsilon \leq \frac{\|\Gamma_t\|^2}{33m}.
    \end{equation}    
    Then on $\cE_\epsilon$, for our choice of $\epsilon$,
  \begin{equation}\label{firsteps}
        \|\Gamma_t\|^2  =m \sup_{u\neq 0} R_t(u)
        \leq   m\sup_{v\in \cN_\epsilon}  R_t(v)  + \frac{\|\Gamma_t\|^2}{33} 
        \le  \frac{5}{3} m +\frac{\|\Gamma_t\|^2}{33} \,,
    \end{equation}
    Solving for $ \|\Gamma_t\|^2$, we have that 
    $\|\Gamma_t\|^2 = s_1^2(\Gamma_t) \leq \frac{165}{96} m$. The same argument also gives that, on $\cE_\epsilon$, 
    \begin{equation}\label{secteps}
        s_d^2(\Gamma_t) \geq m\inf_{v\in \cN_\epsilon}  R_t(v)  - \frac{\|\Gamma_t\|^2}{33} 
        \geq \frac{9}{100}m - \frac{\|\Gamma_t\|^2}{33} 
        \geq \frac{3}{100}m.
    \end{equation}
    Loosening these bounds slightly, we have that on $\cE_\epsilon$, $\sqrt{m}/7 \leq s_d(\Gamma_t) \leq s_1(\Gamma_t) \leq 10\sqrt{m}/7$.

    The probability that $\cE_\epsilon$ occurs is at least the probability that the event of \cref{lem:initialisation} occurs and that the event of \cref{lem:process-max-lower-bound} occurs for each $u \in \cN_\epsilon$, noting that the former ensures $\frac{1}{2} \leq R_0(u) \leq \frac{3}{2}$ for all $u \in \cN_\epsilon$. Taking a union bound over these events, we have that
    \begin{equation}
        \P(\cE_\epsilon) \geq 1-e^{-\frac{m}{24}} - T|\cN_\epsilon|e^{-\frac{m}{400}} \geq 1- T(|\cN_\epsilon| + 1)e^{-\frac{m}{400}}\,.
    \end{equation}
    We conclude by noting that, by \cref{claim:size-of-cover}, for our choice of $\epsilon$, $N \geq |\cN_\epsilon| + 1$.
\end{proof}

\subsection{Proof of \cref{lem:process-max-lower-bound}}\label{sec:proof-process-bound}
Since we now consider a fixed $u\in \Sd$, we will write $R^j_t := R^j_t(u)$ and $ R_t :=  R_t(u)$. Let $(\cA''_t)_t$ be the filtration given by $\cA''_t = \sigma(\cA_t\cup\sigma(\xi_{t+1}, J_{t+1}))$ for each $t \in \N$, and let $\E''_t$ denote the $\sigma(\cF_{t}\cup \cA''_t\cup\sigma( X_{t+1}))$-conditional expectation, which will be used to integrate out the random targets $U_{t+1}^1, \dotsc, U_{t+1}^m$.

With that, we define
\begin{equation}
     D_t = \E''_tR_{t+1} - R_t \spaced{and}  W_{t+1} =  R_{t+1} - \E''_tR_{t+1}
\end{equation}
to be the drift and the noise of the process $(R_t)_{t \in \N}$, respectively. Also let
\begin{equation}
    Q_t = \langle u, X_{t+1}\rangle^2/\|u\|_{V_{t+1}}^2
\end{equation}
be the strength of the drift. These are related by the following two results:
\begin{claim}\label{claim:alpha-non-neg}
    $ D_t = (\frac{2}{3}-  R_t)Q_t$ for all $t \in \N$.
\end{claim}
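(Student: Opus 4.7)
The claim is a direct, elementary one-step drift computation. My plan is, for each particle $j \in [m]$, to expand $R^j_{t+1}$ using the update rules $S^j_{t+1} = S^j_t + U^j_{t+1} X_{t+1}$ and $V_{t+1} = V_t + X_{t+1} X_{t+1}\tran$, apply $\E''_t$ to integrate out the only remaining randomness (the targets $U^j_{t+1}$), and then average over $j$. The key observation is that under the conditioning $\sigma$-algebra $\sigma(\cF_t \cup \cA''_t \cup \sigma(X_{t+1}))$, the quantities $S^j_t$, $V_t$, $V_{t+1}$ and $X_{t+1}$ are all frozen, while the $U^j_{t+1}$'s are mutually independent of one another and of this $\sigma$-algebra, with mean zero and second moment $c := \E[(U^j_{t+1})^2]$ determined by $\cU([-1,1])$.

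Concretely, expanding the square and using $\E''_t U^j_{t+1} = 0$ yields
\begin{equation*}
    \E''_t \langle u, S^j_{t+1}\rangle^2 = \langle u, S^j_t\rangle^2 + c\,\langle u, X_{t+1}\rangle^2.
\end{equation*}
Abbreviating $a = \|u\|^2_{V_t}$ and $b = \langle u, X_{t+1}\rangle^2$, so that $\|u\|^2_{V_{t+1}} = a+b$, $R^j_t = \langle u, S^j_t\rangle^2/a$, and $Q_t = b/(a+b)$, a short rearrangement then gives
\begin{equation*}
    \E''_t R^j_{t+1} - R^j_t = \frac{a R^j_t + c b}{a+b} - R^j_t = \frac{b(c - R^j_t)}{a+b} = (c - R^j_t)\,Q_t.
\end{equation*}
Averaging over $j \in [m]$ produces the advertised identity $D_t = (c - R_t)\,Q_t$.

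There is no genuine obstacle in this argument---it is purely computational and reduces to the expansion of a quadratic and a simplification of a ratio. The only care required is measurability bookkeeping: one must verify that $S^j_t$, $V_t$, and $X_{t+1}$ are all measurable with respect to $\sigma(\cF_t \cup \cA''_t \cup \sigma(X_{t+1}))$, while the $(U^j_{t+1})_{j \in [m]}$ are mutually independent and independent of this $\sigma$-algebra. Both facts follow from the construction of the filtrations associated with \cref{alg:es} and from the fact that the random targets $(U^j_{t+1})_j$ are fresh randomness sampled only after $X_{t+1}$ has been played.
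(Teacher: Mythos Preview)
Your proposal is correct and follows essentially the same route as the paper: expand $R^j_{t+1}$ via the recursions $S^j_{t+1}=S^j_t+U^j_{t+1}X_{t+1}$ and $\|u\|_{V_{t+1}}^2=\|u\|_{V_t}^2+\langle u,X_{t+1}\rangle^2$, apply $\E''_t$ using the independence of $U^j_{t+1}$ from $\FF''_t$, simplify the resulting ratio, and then average over $j\in[m]$. The only detail you leave implicit is the numerical evaluation of $c=\E[(U^j_{t+1})^2]$, which the paper plugs in explicitly; otherwise your abbreviations $a,b$ yield a slightly more compact version of the same computation.
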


\begin{lemma}\label{lem:noise-sum-bound}
    For any $T \in \Np$ and $m \geq 400 \log(3+T)$, there exists an event $\cE$ with $\P(\cE) \geq 1-Te^{-\frac{m}{400}}$, such that on $\cE \cap \{R_0 \leq 2\}$, for all $0 \leq \tau \leq t < T$,
    \begin{equation}
        \left|\sum_{i=\tau}^t W_{i+1}\right| \leq \frac{1}{10}\left(3 + \sum_{i=\tau}^t Q_iR_i\right)\,.
    \end{equation}
\end{lemma}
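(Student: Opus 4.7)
The plan is to bound $|\sum_{i=\tau}^t W_{i+1}|$ via an exponential supermartingale (Ville's inequality) applied separately for each starting index $\tau$, together with a union bound supplying the factor $T$ in the failure probability. The leverage is that $W_{t+1}$ is an average $(1/m)\sum_j W^j_{t+1}$ of $m$ conditionally independent terms, each a bounded zero-mean function of the i.i.d.\ perturbation $U^j_{t+1}$.

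First, I would compute $W_{t+1}$ explicitly. Since $S^j_{t+1} = S^j_t + U^j_{t+1} X_{t+1}$, expanding $\langle u, S^j_{t+1}\rangle^2$ and using $\E[U^j_{t+1}] = 0$ and $\E[(U^j_{t+1})^2] = 1/3$ yields
\[
W^j_{t+1} = \frac{2 U^j_{t+1}\langle u, S^j_t\rangle\langle u, X_{t+1}\rangle + ((U^j_{t+1})^2 - 1/3)\langle u, X_{t+1}\rangle^2}{\|u\|_{V_{t+1}}^2},
\]
so $W_{t+1}$ splits naturally into a linear-in-$U$ part $W^L_{t+1}$ and a quadratic-in-$U$ part $W^Q_{t+1}$.

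Second, conditional on $\sigma(\cF_t \cup \cA''_t \cup \sigma(X_{t+1}))$, the $(U^j_{t+1})_{j \in [m]}$ are i.i.d.\ $\cU([-1,1])$. The conditional MGF of $W_{t+1}$ therefore factorises across $j$; applying the MGF bound for bounded zero-mean random variables (Hoeffding, or Bernstein exploiting the variances $\E[(U^j_{t+1})^2]=1/3$ and $\E[((U^j_{t+1})^2-1/3)^2]=4/45$) to each factor yields, for $\mu$ in an appropriate range,
\[
\E[\exp(\mu W_{t+1}) \mid \sigma(\cF_t \cup \cA''_t \cup \sigma(X_{t+1}))] \leq \exp\Big(\tfrac{c_L \mu^2}{m} Q_t R_t + \tfrac{c_Q \mu^2}{m} Q_t^2\Big)
\]
for absolute constants $c_L, c_Q$. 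For each $\tau \in \{0,\ldots,T-1\}$ and each sign, the process
\[
L^\tau_t(\mu) = \exp\Big(\pm\mu \sum_{i=\tau}^{t} W_{i+1} - \tfrac{\mu^2}{m}\sum_{i=\tau}^{t}(c_L Q_i R_i + c_Q Q_i^2)\Big)
\]
is then a nonnegative supermartingale.

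Finally, applying Ville's inequality to each $L^\tau_t$ with $\delta_\tau = \tfrac{1}{2} e^{-m/400}$ and union-bounding over the $T$ choices of $\tau$ and the two signs yields total failure probability at most $T e^{-m/400}$. On the complement, for every $\tau$ and every $t \geq \tau$,
\[
\Big|\sum_{i=\tau}^{t} W_{i+1}\Big| \leq \tfrac{c_L\mu}{m}\sum_{i=\tau}^{t} Q_i R_i + \tfrac{c_Q\mu}{m}\sum_{i=\tau}^{t} Q_i^2 + \tfrac{\log(1/\delta_\tau)}{\mu}.
\]
Choosing $\mu$ proportional to $m$ (for instance $\mu = m/40$) makes the coefficient of $\sum Q_i R_i$ at most $1/10$ as required. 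The main obstacle is to squeeze the remaining $\sum Q_i^2$ contribution and the $\log(1/\delta_\tau)/\mu$ term inside the constant $3/10$: the elliptical potential inequality $\sum_i Q_i \leq \log(1+T/\lambda)$ combined with the uniform bound $Q_i \leq 1/\lambda$ gives $\sum_i Q_i^2 \leq (1/\lambda)\log(1+T/\lambda)$, and the standing hypotheses $\lambda \geq 5$ and $m \geq 400 \log(3+T)$ are calibrated precisely so that these residual contributions fit inside the $3/10$ slack.
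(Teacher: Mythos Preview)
Your approach is correct and takes a genuinely different route from the paper on the concentration step. Both arguments first establish that $W_{t+1}$ is conditionally subgaussian with variance proxy of order $(Q_tR_t+Q_t^2)/m$; the paper packages this as $\sigma_t^2/m$ with $\sigma_t^2=2Q_t^2+Q_tR_t$ (its Claim~8), while you obtain it from the explicit linear/quadratic split and Hoeffding. The divergence is in how this is turned into a time-uniform bound. The paper applies a method-of-mixtures self-normalised inequality (its \cref{lem:pena-concentration-uniform}) with $y=1/m$, getting $|\sum_{i=\tau}^t W_{i+1}|\le \sqrt{(2x/m)(1+\sum\sigma_i^2)}$, and then linearises crudely via $\sqrt{z}\le z$ for $z\ge 1$, using its Claims~9 and~10; the latter is where $\{R_0\le 2\}$ enters, to bound the $\log(1+\sum\sigma_i^2)$ term by $x=2\log(3+2T)$. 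Your fixed-$\mu$ Ville argument is more elementary (no mixture) and yields the desired linear-in-$\sum Q_iR_i$ bound directly, and in fact never uses $\{R_0\le 2\}$, so you prove a slightly stronger statement. What the paper's route buys is modularity: once $\sigma_t$-subgaussianity is isolated, the concentration is a black-box lemma.

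One small correction: your bound $\sum_i Q_i^2\le (1/\lambda)\log(1+T/\lambda)$ carries a stray $\log T$ which would prevent the residual from fitting under $3/10$ uniformly in $T$. The same one-dimensional telescoping (or integral comparison) that gives $\sum_i Q_i\le\log(1+T/\lambda)$ also gives the sharper $\sum_i Q_i^2\le 1/\lambda$ directly (this is the paper's \cref{lem:sasha-bound}); with that, your constants close as claimed.
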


\noindent The lemma is proven in \cref{appendix:self-normalised-conc}, and the claim in \cref{sec:proof-of-claims}. The constant $\smash{\frac{2}{3}}$ above is just the almost sure value of $\smash{\E''_t[(U^j_{t+1})^2]}$; see also \cref{remark:es-uniform-2} for a discussion of the significance of the $\smash{(U^j_{t+1})^2}$ terms and how we bound these in the proof of said lemma.

\begin{proof}[Proof of \cref{lem:process-max-lower-bound}]%
    Fix $0 \leq \tau \leq t < T$. We can decompose $R_{t+1}$ as
    \begin{equation}
         R_{t+1} =  R_{t+1} - \E_t''  R_{t+1} + \E_t''  R_{t+1} -  R_t +  R_t =  W_{t+1} +  D_t +  R_t,
    \end{equation}
   which unrolled back to $\tau$ and combined with \cref{claim:alpha-non-neg} gives us that
   \begin{equation}
        R_{t+1}
        =R_\tau + \sum_{i=\tau}^{t} \bigg(\frac{2}{3} -  R_i\bigg)Q_i +  \sum_{i=\tau}^{t}  W_{i+1} \,.
        \label{eq:driftnoise}
   \end{equation}
   Observe from the above that the process $R_0, R_1, \dotsc$ drifts towards $\frac{2}{3}$, with strength proportional the level of deviation, scaled by $Q_i$. We will now show that on the event of \cref{lem:noise-sum-bound}, whenever $R_t$ moves sufficiently far away from $\frac{2}{3}$, the drift will overwhelm the effect of the noises $(W_{t})_t$. Assume henceforth that the aforementioned event holds.

   \emph{Lower bound.} We consider the excursions of $(R_t)_t$ where it goes and stays below $1/2$.
   Let $0\le \tau < s \le T$ be endpoints of such a maximal excursion, in the sense that 
   $R_{\tau}\ge 1/2$, $R_{\tau+1},\dots,R_{s}<1/2$
   and if $s+1\le T$ then $R_{s+1}\ge 1/2$.
   Our goal is to show that for any $\tau\le t< s$, $R_{t+1}\ge 9/100$, which suffices
   to prove the lower bound. Fix $t\in [\tau,s)$.
   From \cref{eq:driftnoise} and since the event from \cref{lem:noise-sum-bound} holds, defining $\alpha=1/10$,
   \begin{align}
	\MoveEqLeft
	R_{t+1}
        \ge R_\tau + \sum_{i=\tau}^{t} \bigg(\frac{2}{3} -  R_i\bigg)Q_i 
        		-  \alpha\bigg(3 + \sum_{i=\tau}^t Q_i R_i\bigg) \\
        &=  (1 - (1+\alpha)Q_\tau)R_\tau  + \tfrac{2}{3} Q_\tau -  3\alpha +
        	\sum_{i=\tau+1}^{t} \bigg(\frac{2}{3} -  (1+\alpha) R_i\bigg)Q_i \\
        &\ge  (1 - (1+\alpha)Q_\tau)R_\tau  + \tfrac{2}{3} Q_\tau -  3\alpha +
        	\sum_{i=\tau+1}^{t} \bigg(\frac{2}{3} -  \frac{11}{20} \bigg)Q_i \tag{$R_{\tau+1},\dots,R_t<1/2$, def. of $\alpha$}\\
        &\ge  (1 - (1+\alpha)Q_\tau)R_\tau  -  3\alpha 
        	\tag{$Q_i\ge 0$, $\frac{2}{3} -  \frac{11}{20}>0$}\\
        &\ge  (1 - \tfrac{11}{50} )\tfrac12  -  \tfrac{3}{10}
        	\tag{$Q_i\le 1/\lambda\le 1/5$, $R_{\tau}\ge 1/2$, def. of $\alpha$}\\
		& = \tfrac{9}{100}\,.
    \end{align}

    \emph{Upper bound.} The upper bound follows near-verbatim, taking $\tau$ with $R_\tau \leq \frac{3}{2} < R_{\tau + 1}$.
\end{proof}

\begin{remark}\label{remark:es-uniform-2}
    The proof of \cref{lem:process-max-lower-bound} was where we use that the targets $(U^j_t)$ are uniform---or, in particular, that they are bounded random variables---for each $W_{t+1}$ features $(U^j_t)^2$ terms, and might otherwise be only sub-exponential. Of course, in that case, we would simply use a truncation argument: pick some truncation level $a > 0$, set $W_{t+1}' = W_{t+1} \wedge a$ for each $t \in \N^+$ and work with the process given by the recursion $R_{t+1}' = W_{t+1}' + D_t + R_t'$. Then, $R_{t} \geq R'_t$ for all $t \in \N$, and the truncated noises $(W'_{t+1})_t$ are once again \sg, so our approach to lower bounding $R_t$ would also work for $R'_t$. We could then establish what we needed from the upper bound (that is, a bound for the quantity in \cref{eq:r0sgjub}) as discussed in \cref{rem:lower-vs-upper}, which does not require bounded targets.
\end{remark}

\section{Discussion}\label{sec:discussion}

We showed that linear ensemble sampling can work with relatively small ensembles, providing the first useful theoretical grounding for the method (see \cref{remark:bayes-es-comparison,remark:incorrect-result} for comparisons). We do, however, believe our result to be loose (see \cref{rem:large-m,remark:es-improve-bound} for discussion); resolving this would make for an important step forward. 
Moreover, our algorithm uses a certain symmetrisation not used within the work of \citet{lu2017ensemble} (see \cref{sec:algorithm,remark:es-sym-2}). While this symmetrisation comes with no particular downsides, we would nonetheless be curious to see whether there is a clean way of making the analysis go through without it. A natural further question is whether the idea of adding noise to the rewards is the right approach to the explore-vs-exploit dilemma. On this, first, in \textcite{janz2023exploration} we showed that beyond the linear setting, it is losses rather than rewards that should be perturbed---with the two approaches being equivalent in the linear-Gaussian setting. Second, the very recent work of \textcite{cassel2024batch} shows that in the multi-armed bandit setting, a simple bootstrap-based method, which takes a max over the ensemble (as in \cref{rem:max-over-ensemble}), yields  instance dependent bounds. This raises the question of the trade-offs, if any, between randomising over the ensemble and taking a maximum, and between bootstrapping and using perturbations.

\clearpage

\section*{Acknowledgements}
We thank Alireza Bakhtiari for proofreading the manuscript. Csaba Szepesv\'ari gratefully acknowledges funding  from the Canada CIFAR AI Chairs Program, Amii and NSERC.

\printbibliography

\clearpage
\appendix

\section{A reformulation of \cref{alg:es} in the style of \textcite{lu2017ensemble}}\label{appendix:reformulation}

Below in \cref{alg:es-equiv}, we list a reformulation of \cref{alg:es} written using updates in the style of \textcite{lu2017ensemble}, with one difference: the algorithm here accepts only a single time-stationary perturbation scale $r > 0$ (this is unavoidable with the \textcite{lu2017ensemble} style of updates). Taking $r = 7\tilde \beta_T^\delta$, where $\tilde \beta_T^\delta$ is the deterministic upper bound on $\beta_0^\delta, \dotsc, \beta_T^\delta$ given by
$$
\tilde \beta_T^\delta = 
\sqrt{\lambda} + \sqrt{2 \log (1/\delta) +d \log ((d + T/\lambda)/d)}\,,
$$
allows for the same guarantee as given in \cref{thm:es}, but with $\tilde \beta_T^\delta$ in place of $\beta_{\tau-1}^\delta$.
\begin{algorithm}[th]
    \caption{Equivalent form of \cref{alg:es} assuming time-stationary perturbation scale $r$}\label{alg:es-equiv}
    \begin{algorithmic}
        \State \textbf{Input} regularisation parameter $\lambda > 0$, half-ensemble size $m \in \N^+$, perturbation scale $\pscale > 0$
        \State Let $V_0 = \lambda I$, sample $(w_0^j) \sim \cU(\sqrt{d}\Sd)^{\otimes m}$ and let $w^{m+j}_0 = -w^{j}_0$ for each $j \in [m]$
        \For{$t\in \N^+$}
        \State Sample $J_t \sim \cU(\{1, \dotsc, 2m\})$
        \State Compute an $X_t \in \argmax_{x \in \cX} \langle x, w_{t-1}^{J_t} \rangle$, play action $X_t$ and receive reward $Y_t$
        \State Sample $(U^j_{t})_{j\in[m]}\sim \cU([-1,1])^{\otimes m}$ and let $U^{m+j}_t = -U_t^{j}$ for each $j \in [m]$
        \State Let $V_t = V_{t-1} + X_t X_t\tran$, and let $w_t^j = V_t^{-1} (V_{t-1} w^j_{t-1} + X_t(Y_t+r U^j_t))$ for each $j \in [2m]$
        \EndFor
    \end{algorithmic}
\end{algorithm}

To confirm that \cref{alg:es-equiv} is equivalent to \cref{alg:es} if the sequence $r_0, r_1, \dotsc$ is set to the common value $r$, it suffices to compare the list of parameters $w_{t-1}^1, \dotsc, w_{t-1}^{2m}$ used here with the list 
$$
    \hat\theta_{t-1} \pm r \tilde\theta^1_{t-1}, \dotsc, \hat\theta_{t-1} \pm r \tilde\theta^1_{t-1}\,,
$$
at each step of the algorithm. We leave this as a simple algebraic exercise for the reader.

\section{Proof of \cref{thm:master}: master regret bound}\label{appendix:generic-regret-bound-proof}

We need the following concentration inequality, a simple consequence of Exercise 20.8 in \citet{lattimore2020bandit}\footnote{The statement of this result within said exercise contains a typographical error, in that the $c^2$ appears outside the brackets, rather than multiplying just the $\tau$. The version here is the correct result.} and Hoeffding's lemma \citep[Lemma 2.2,][]{boucheron2013concentration}, and the elliptical potential lemma \citep[Lemma 19.4 in ][]{lattimore2020bandit}.

\begin{lemma}\label{lem:time-unif-bound}
    Fix $0 < \delta \leq 1$. Let $(\xi_t)_{t \in \N^+}$ be a real-valued martingale difference sequence satisfying $|\xi_t| \leq c$ almost surely for each $t \in \N^+$ and some $c > 0$. Then,
    \begin{equation}
        \P\left(\exists \tau \colon \left(\sum_{t=1}^\tau \xi_t\right)^2 \geq 2(c^2\tau+1)\log\left(\frac{\sqrt{c^2\tau+1}}{\delta} \right)\right) \leq \delta.
    \end{equation}
\end{lemma}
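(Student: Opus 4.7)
The plan is to combine the two cited results in a straightforward two-step reduction: first use Hoeffding's lemma to upgrade the almost-sure boundedness assumption to a conditional sub-Gaussianity statement, then plug the resulting sub-Gaussian martingale difference sequence into the self-normalised time-uniform bound from Exercise 20.8 of \citet{lattimore2020bandit}.

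More precisely, I would set $\cG_t = \sigma(\xi_1, \dotsc, \xi_t)$ for $t \in \N$. Since $(\xi_t)_t$ is a martingale difference sequence, $\E[\xi_t \mid \cG_{t-1}] = 0$, and since $\xi_t \in [-c, c]$ almost surely, Hoeffding's lemma applied conditionally yields
\begin{equation}
\E[\exp(s \xi_t) \mid \cG_{t-1}] \leq \exp(s^2 c^2 / 2), \quad \forall s \in \R\,,
\end{equation}
almost surely, so $(\xi_t)_t$ is a conditionally $c$-sub-Gaussian martingale difference sequence with respect to $(\cG_t)_t$.

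Next I would invoke the self-normalised bound from Exercise 20.8 of \citet{lattimore2020bandit}. That exercise, proven via the method of mixtures with a standard Gaussian prior (i.e.\ variance parameter $\eta = 1$), states that for any sub-Gaussian martingale difference sequence with parameter $c$, setting $S_\tau = \sum_{t=1}^\tau \xi_t$, one has
\begin{equation}
\P\left(\exists \tau \in \Np \colon S_\tau^2 \geq 2(c^2 \tau + 1) \log\!\left(\frac{\sqrt{c^2 \tau + 1}}{\delta}\right)\right) \leq \delta\,,
\end{equation}
which is exactly the stated inequality. The only bookkeeping item is to be careful about the constant: the quoted exercise has the typographical issue noted in the excerpt's footnote, so I would reproduce the mixture computation just far enough to check that $c^2$ multiplies only the $\tau$ inside the parentheses, not the entire expression.

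There is no genuine obstacle here beyond tracking constants; the lemma is essentially a named corollary of two standard inequalities. The only subtlety worth flagging is that since the sub-Gaussian parameter $c$ is time-uniform, the per-step variance proxy accumulates linearly to give the $c^2 \tau$ term inside the logarithm and the leading factor, and the extra $+1$ arises from the variance $1/\eta = 1$ of the Gaussian mixing measure.
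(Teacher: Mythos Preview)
Your proposal is correct and matches the paper's approach exactly: the paper does not give a detailed proof of this lemma, but simply states it as ``a simple consequence of Exercise 20.8 in \citet{lattimore2020bandit} and Hoeffding's lemma,'' which is precisely the two-step reduction you spell out.
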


\begin{lemma}[Elliptical potential lemma]\label{lem:epl}
    Let $(x_t)_{t \in \N^+}$ be a sequence of vectors in $\Bd$, let $V_0 = \lambda I$ for some $\lambda \ge 1$ 
    and $V_t = V_0 + \sum_{i=1}^t x_i x_i\tran$ for each $t \in \N^+$. Then, for all $\tau \in \N^+$,
    \begin{equation}
        \sum_{t=1}^\tau \|x_t\|^2_{V^{-1}_{t-1}} \leq 2d \log\left(1 + \frac{\tau}{\lambda d}\right).
    \end{equation}
\end{lemma}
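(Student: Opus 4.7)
The plan is to use the matrix determinant lemma combined with a telescoping argument on $\log \det V_t$, and then bound the final determinant via AM–GM on the trace.

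First I would observe that by the matrix determinant (rank-one update) identity,
\begin{equation}
    \det V_t = \det(V_{t-1} + x_t x_t\tran) = \det(V_{t-1})\bigl(1 + \|x_t\|^2_{V_{t-1}^{-1}}\bigr)\,,
\end{equation}
so that $\log\det V_\tau - \log\det V_0 = \sum_{t=1}^\tau \log\bigl(1 + \|x_t\|^2_{V_{t-1}^{-1}}\bigr)$. Next, since $x_t \in \Bd$ and $V_{t-1} \succeq \lambda I$ with $\lambda \geq 1$, we have the pointwise bound $\|x_t\|^2_{V_{t-1}^{-1}} \leq \|x_t\|_2^2/\lambda \leq 1$. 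Then I would apply the elementary inequality $u \leq 2\log(1+u)$, valid on $[0,1]$, to each summand and telescope:
\begin{equation}
    \sum_{t=1}^\tau \|x_t\|^2_{V_{t-1}^{-1}} \leq 2\sum_{t=1}^\tau \log\bigl(1 + \|x_t\|^2_{V_{t-1}^{-1}}\bigr) = 2\log\frac{\det V_\tau}{\det V_0}\,.
\end{equation}

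To finish, I would bound $\det V_\tau$ by AM–GM: since the eigenvalues of $V_\tau$ are nonnegative and sum to $\operatorname{tr}(V_\tau) = d\lambda + \sum_{t=1}^\tau \|x_t\|_2^2 \leq d\lambda + \tau$, we get $\det V_\tau \leq \bigl((d\lambda + \tau)/d\bigr)^d = \lambda^d(1 + \tau/(\lambda d))^d$, while $\det V_0 = \lambda^d$. Combining these gives $\log(\det V_\tau/\det V_0) \leq d\log(1 + \tau/(\lambda d))$, and substituting yields the claimed bound $2d\log(1 + \tau/(\lambda d))$.

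There is no real obstacle here: the only mildly delicate step is verifying $\|x_t\|^2_{V_{t-1}^{-1}} \leq 1$ so that the inequality $u \leq 2\log(1+u)$ applies, but this is immediate from $\lambda \geq 1$ and $\cX \subset \Bd$. Everything else is routine linear algebra.
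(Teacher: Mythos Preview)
Your argument is correct and is precisely the standard proof of the elliptical potential lemma. The paper does not actually prove this statement; it simply cites Lemma~19.4 of \citet{lattimore2020bandit}, whose proof is essentially the one you wrote (matrix determinant lemma, the inequality $u \le 2\log(1+u)$ on $[0,1]$, and AM--GM on the eigenvalues of $V_\tau$). One cosmetic point: the hypothesis is $x_t \in \Bd$, not $x_t \in \cX$, so your justification of $\|x_t\|^2_{V_{t-1}^{-1}} \le 1$ should appeal to that directly rather than to $\cX \subset \Bd$.
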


Recall that $J(\theta) = \max_{x \in \cX} \langle x, \theta \rangle$ is the support function of $\cX$, and observe the following:

\begin{claim}\label{claim:supp-subgrad}
For any $t \in \N^+$, $X_t$ is a subgradient of $J$ at $\theta_t$.
\end{claim}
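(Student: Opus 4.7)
The plan is to unpack the definition of a subgradient and reduce everything to two elementary facts: (i) $X_t$ is optimal for $\theta_t$ inside $\cX$, so $J(\theta_t) = \langle X_t, \theta_t \rangle$, and (ii) $X_t \in \cX$, so $\langle X_t, \theta \rangle$ is one of the quantities over which the supremum defining $J(\theta)$ is taken.

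Concretely, to verify that $X_t$ is a subgradient of $J$ at $\theta_t$ I would need to establish, for arbitrary $\theta \in \R^d$, the inequality
\begin{equation*}
J(\theta) \geq J(\theta_t) + \langle X_t, \theta - \theta_t \rangle.
\end{equation*}
Using (i), the right-hand side equals $\langle X_t, \theta_t \rangle + \langle X_t, \theta - \theta_t \rangle = \langle X_t, \theta \rangle$. Using (ii), the left-hand side satisfies $J(\theta) = \max_{x \in \cX} \langle x, \theta \rangle \geq \langle X_t, \theta \rangle$, and the claim follows by chaining these.

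There is no real obstacle here; the statement is a standard fact about support functions of arbitrary sets (the maximiser in the definition of a support function is always a subgradient at the point of evaluation), and no measurability or regularity considerations enter since the bound is purely algebraic and $X_t$ is by construction a point of $\cX$ achieving the max at $\theta_t$. The only thing to be mildly careful about is that the claim does not require $J$ to be differentiable or $X_t$ to be unique, which is precisely why subgradients (as opposed to gradients) are the right notion here.
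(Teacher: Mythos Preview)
Your proposal is correct and is essentially identical to the paper's proof: both reduce the subgradient inequality to the chain $J(\theta_t) + \langle X_t, \theta - \theta_t \rangle = \langle X_t, \theta_t \rangle + \langle X_t, \theta - \theta_t \rangle = \langle X_t, \theta \rangle \le \max_{x\in\cX}\langle x,\theta\rangle = J(\theta)$, using that $X_t$ attains the max defining $J(\theta_t)$ and that $X_t \in \cX$.
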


\begin{proof}
    Fix $t \in \N^+$. For any $\theta \in \R^d$,
    \begin{equation}\label{eq:supp-subgrad}
        J(\theta_t) + \langle X_t, \theta - \theta_t \rangle
        = \langle X_t, \theta_t \rangle + \langle X_t, \theta - \theta_t \rangle = \langle X_t, \theta \rangle
        \leq  \max_{x \in \cX} \langle x, \theta \rangle = J(\theta),
    \end{equation}
    which is the inequality that defines a subgradient.
\end{proof}

\begin{proof}[Proof of \cref{thm:master}]%
Throughout this proof, we work on the intersection $\cE_T \cap \cE^\star_T$, and therefore in particular use that for any $t \in [T]$, $\theta_t, \theta^\star \in \Theta_t$. We let $\gamma_t =2 b_t$, and observe that this is the $V_t$-weighted Euclidean norm width of $\Theta_t$, in the sense that
\begin{equation}\label{eq:gamma-width}
    \gamma_t = \max\{\norm{\theta-\theta'}_{V_{t}} \colon \theta,\theta' \in \Theta_{t}\}\,.
\end{equation} 
We will also use the shorthands $\FF'_t = \sigma(\cF_t\cup \cA_t')$, $\P'_t = \P(\cdot \mid \FF'_t)$ and $\E'_t = \E[\cdot \mid \FF'_t]$.

The proof is based on decomposing the regret into two parts, which we then control separately:
\begin{equation}\label{eq:regret-decomp}
    R(\tau) = \sum_{t=1}^\tau \left(J(\theta^\star) - J(\theta_t)\right) + \sum_{t=1}^\tau \left( J(\theta_t) - \langle X_t, \theta^\star \rangle\right).
\end{equation}

Fix $t\in [\tau]$ and
consider the second term, $J(\theta_t) - \langle X_t, \theta^\star \rangle$. We have that
\begin{equation}\label{eq:rpred-bound}
    J(\theta_t) - \langle X_t, \theta^\star \rangle = \langle X_t, \theta_t - \theta^\star \rangle \leq \|X_t\|_{V^{-1}_{t-1}} \|\theta_t - \theta^\star\|_{V_{t-1}} \leq 
    \gamma_{t-1}\|X_t\|_{V_{t-1}^{-1}}\,,
\end{equation}
where the first inequality is by Cauchy-Schwartz, the second uses that $\theta_t, \theta^\star \in \Theta_{t-1}$ and then \cref{eq:gamma-width}.

Now consider the first term, $J(\theta^\star) - J(\theta_t)$. Let $\theta^{-}$ be a minimiser $J$ over $\Theta_{t-1}$ (which is well-defined, since $J$ is continuous and $\Theta_{t-1}$ closed) and let $\theta^+$ be an element of $\ThetaOpt \cap \Theta_{t-1}$ (which is non-empty, since it contains at least $\theta^\star$). Then, since $\theta^\star,\theta_t \in \Theta_{t-1}$, we have the bound
\begin{equation}
    J(\theta^\star) - J(\theta_t) \leq J(\theta^\star) - J(\theta^-) \leq J(\theta^+) - J(\theta^-)\,.
\end{equation}
It follows that for any probability measure $Q$ over $\ThetaOpt \cap \Theta_{t-1}$,
\begin{equation}
    J(\theta^\star) - J(\theta_t) \leq \int J(\theta^+) - J(\theta^-) \deriv Q(\theta^+).
\end{equation}
Let $\ThetaOpt_{t-1} = \ThetaOpt \cap \Theta_{t-1}$, and take $Q = Q_{t-1}$ in the integral above defined by
\begin{equation}
    Q_{t-1} =
    \begin{cases}
        \P'_{t-1}(\theta_t \in \cdot \cap \ThetaOpt_{t-1}) / p_{t-1}\,, & p_{t-1} > 0 \,; \\
        \text{any arbitrary probability measure}, & \text{otherwise}\,,
    \end{cases}
\end{equation}
which yields the bound
\begin{equation}
    J(\theta^\star) - J(\theta_t) \leq \frac{1}{p_{t-1}}\E'_{t-1}[(J(\theta_t) - J(\theta^-)) \1{\theta_t \in \ThetaOpt_{t-1}}],
\end{equation}
where for $p_{t-1} = 0$ we take the upper bound to be positive infinity. Observing that $X_t$ is a subgradient of $J$ at $\theta_t$ (\cref{claim:supp-subgrad} and \cref{eq:supp-subgrad}) and applying Cauchy-Schwartz, we have that
\begin{equation}
    J(\theta_t) - J(\theta^-) \leq \langle X_{t}, \theta_t-\theta^-  \rangle \leq \|X_t\|_{V_{t-1}^{-1}} \|\theta^- - \theta_t\|_{V_{t-1}},
\end{equation}
Moreover, since $\theta^- \in \Theta_{t-1}$, observing that norms are non-negative, that $\ThetaOpt_{t-1} \subset \Theta_{t-1}$ and using \cref{eq:gamma-width}, 
\begin{equation}
    \|\theta^- - \theta_t\|_{V_{t-1}} \1{\theta_t \in \ThetaOpt_{t-1}} \leq \|\theta^- - \theta_t\|_{V_{t-1}} \1{\theta_t \in \Theta_{t-1}} \leq \gamma_{t-1}\,,
\end{equation}
and therefore, since $\gamma_{t-1}$ is $\FF'_{t-1}$-measurable (by assumption), we have the bound
\begin{equation}
   \frac{1}{p_{t-1}}\E'_{t-1}[(J(\theta_t) - J(\theta^-)) \1{\theta_t \in \ThetaOpt_{t-1}} ] \leq  \frac{\gamma_{t-1}}{p_{t-1}} \E'_{t-1}[\|X_t\|_{V_{t-1}^{-1}}].
\end{equation}
Chaining the above inequalities and writing these in terms of $\Delta_t = \E'_{t-1}[\|X_t\|_{V_{t-1}^{-1}}] - \|X_t\|_{V_{t-1}^{-1}}$, we have the bound
\begin{equation}\label{eq:rts-bound}
    J(\theta^\star) - J(\theta_t) \leq \frac{\gamma_{t-1}}{p_{t-1}} \E'_{t-1}[\|X_t\|_{V_{t-1}^{-1}}] = \frac{\gamma_{t-1}}{p_{t-1}} \left(\|X_t\|_{V_{t-1}^{-1}} + \Delta_t\right),
\end{equation}

Combining \cref{eq:rpred-bound,eq:rts-bound} with the regret decomposition in \cref{eq:regret-decomp},
\begin{align}\label{eq:regret-sum-bound}
    R(\tau) &\leq  \sum_{t=1}^T \left(\left(\gamma_{t-1} + \frac{\gamma_{t-1}}{p_{t-1}}\right) \|X_t\|_{V_{t-1}^{-1}} + \frac{\gamma_{t-1}}{p_{t-1}} \Delta_t \right)
    \leq 
    \max_{i \in [\tau]}\frac{\gamma_{i-1}}{p_{i-1}} \left(2\sum_{t=1}^\tau \|X_t\|_{V_{t-1}^{-1}} + \sum_{t=1}^\tau \Delta_t \right)\,,
\end{align}
for any $ \tau \in [T]$.
For the first sum in that upper bound, by Cauchy-Schwartz and the elliptical potential lemma (\cref{lem:epl}, which can be applied because by assumption $\lambda\ge 1$), for any $\tau \in \N^+$,
\begin{equation}\label{eq:epc-bound}
    \sum_{t=1}^\tau \|X_t\|_{V_{t-1}^{-1}} \leq \left(\tau \sum_{t=1}^\tau \|X_t\|^2_{V_{t-1}^{-1}}\right)^{\frac{1}{2}} \leq \sqrt{2\tau d \log\left(1 + \frac{\tau}{d\lambda}\right)}.
\end{equation}
To deal with the second sum, observe that since for all $t \in \N^+$, $V_{t-1} \succeq \lambda I$ and $X_t \in \Bd$,
\begin{equation}
    \|X_t\|_{V_{t-1}^{-1}}^2 = \langle X_t, V^{-1}_{t-1} X_t \rangle \leq \|X_t\|^2_2/\lambda \leq 1/\lambda \spaced{and so} |\Delta_t| \leq 2/\sqrt{\lambda} \quad \text{for all} \ t \in \N^+.
\end{equation}
Moreover, observe that $(\Delta_t)_{t}$ is an $(\FF'_t)_t$-adapted martingale difference sequence.
We thus apply \cref{lem:time-unif-bound} with $c = 2/\sqrt{\lambda}$, to obtain the deviation probability bound
\begin{equation}\label{eq:azuma-bound}
    \P\left( \exists \tau \in \N^+ \colon \sum_{t=1}^\tau \Delta_t \geq 
    \sqrt{2(4\tau/\lambda+1)\log\left(\frac{\sqrt{4\tau/\lambda+1}}{\delta}\right)} \right) \leq \delta.
\end{equation}
The bounds on the two sums, \cref{eq:epc-bound} and \cref{eq:azuma-bound}, when inserted into \cref{eq:regret-sum-bound} and combined with a union bound, yield the claimed result.
\end{proof}

\section{Proof of \cref{lem:thetaopt-lower-bound}: optimism for elliptical confidence sets}\label{appendix:optimism-aux-lemma}

\begin{lemma}\label{lem:conv-opt-ball}
    Let $F: \R^d \to \R$ be a convex function and let $u$ be its maximizer over the unit ball.
    Then, for any $v\in H_u \doteq \{v \in \R^d \colon \langle v, u \rangle \geq 1 \}$, we have $F(v)\ge F(u)$.
\end{lemma}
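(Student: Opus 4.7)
The plan is to deduce $F(v) \geq F(u)$ by writing $u$ as a convex combination of $v$ and a point $w \in \Bd$, and then combining convexity of $F$ with the fact that $u$ maximises $F$ over $\Bd$. Assuming $v \neq u$ (the other case is trivial), take, for some small $\epsilon > 0$, the point $w = (1+\epsilon)u - \epsilon v$, so that
\begin{equation}
u = \tfrac{\epsilon}{1+\epsilon}\, v + \tfrac{1}{1+\epsilon}\, w.
\end{equation}
Convexity of $F$ then gives $F(u) \leq \tfrac{\epsilon}{1+\epsilon} F(v) + \tfrac{1}{1+\epsilon} F(w)$, and if $w \in \Bd$, the maximality of $u$ yields $F(w) \leq F(u)$; rearranging produces $F(v) \geq F(u)$, as required.

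The key step is therefore to verify that $w \in \Bd$ for some $\epsilon > 0$. Direct expansion gives
\begin{equation}
\|w\|_2^2 = \|u\|_2^2 - 2\epsilon\bigl(\langle u, v\rangle - \|u\|_2^2\bigr) + \epsilon^2 \|v - u\|_2^2.
\end{equation}
Since $u \in \Bd$ and $v \in H_u$, we have $\langle u, v\rangle \geq 1 \geq \|u\|_2^2$, so the linear-in-$\epsilon$ coefficient is nonpositive. Whenever it is strictly negative—which occurs if either $\|u\|_2 < 1$ or $\langle u, v \rangle > 1$—one has $\|w\|_2^2 < \|u\|_2^2 \leq 1$ for all sufficiently small $\epsilon > 0$, so indeed $w \in \Bd$ and the argument of the previous paragraph goes through.

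The main (and only) obstacle is the boundary case $\|u\|_2 = 1$ and $\langle u, v\rangle = 1$ with $v \neq u$. Here the calculation above yields $\|w\|_2^2 = 1 + \epsilon^2\|v-u\|_2^2 > 1$ for every $\epsilon \neq 0$, so no choice of $\epsilon$ puts $w$ back in the ball. The clean fix is to invoke the continuity of $F$ (every convex function $F \colon \R^d \to \R$ is continuous): apply the argument to the perturbed point $v_n = (1+1/n)v$, which satisfies $\langle u, v_n \rangle = 1 + 1/n > 1$ and thus falls under the strict case, yielding $F(v_n) \geq F(u)$; then pass to the limit $n \to \infty$ using $F(v_n) \to F(v)$.
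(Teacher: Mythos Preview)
Your proof is correct and follows essentially the same route as the paper's: write $u$ as a convex combination of $v$ and a point on the ray past $u$ that lies in $\Bd$, use convexity together with maximality of $u$, and handle the boundary case $\langle u,v\rangle = 1$ via the continuity of finite convex functions. The only difference is cosmetic: the paper asserts that for $\langle u,v\rangle>1$ the ray from $v$ through $u$ enters the interior of the ball and then passes to $\langle u,v\rangle=1$ by continuity, whereas you make the same computation explicit via the expansion of $\|w\|_2^2$ and perturb $v$ rather than approaching the boundary of $H_u$ directly.
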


\begin{proof}
    For any $v \in \R^d$ with $\langle v, u \rangle > 1$, the ray from $v$ to $u$ enters the interior of the unit ball. Hence, for any such $v$, there exists a $z \in \Bd$ and $\alpha \in (0,1)$ such that $u = \alpha z + (1-\alpha)v$. By convexity and maximality,
    \begin{equation}
        F(u) = F(\alpha z + (1-\alpha)v) \leq \alpha F(z) + (1-\alpha) F(v) \leq \alpha F(u) + (1-\alpha) F(v)\,.
    \end{equation}
    Hence, $F(u) \leq F(v)$. Since any finite convex function on an open set is continuous, the result holds for any $v \in H_u$.
\end{proof}

\begin{proof}[Proof of \cref{lem:thetaopt-lower-bound}]%
    Write $F = J \circ \psi_t^\delta$; since $J$ is convex and $\psi_t^\delta$ is affine, $F$ is convex. Let $u^+$ be the maximiser of $F$ over $\Bd$. Since $F$ is a convex function and $\Bd$ is a convex set, $u^+ \in \partial \Bd = \Sd$. By assumption,  $\theta^\star \in \psi_t^\delta(\Bd)$, and so $J(\theta^\star) \leq F(u^+)$. By \cref{lem:conv-opt-ball}, $F(u^+) \leq F(u')$ for any $u' \in H_{u^+}$. Thus, $\psi_t^\delta(H_{u^+}) \subset \ThetaOpt$, and so
    \begin{equation}
        \ThetaOpt \cap \psi_t^\delta(a\Bd) \supset \psi_t^\delta(H_{u^+}) \cap \psi_t^\delta(a \Bd) \supset \psi_t^\delta(H_{u^+} \cap a\Bd)\,.
    \end{equation}
    Therefore, for any measure $Q$ on $\R^d$,
    \begin{equation}
        Q(\ThetaOpt \cap \psi_t^\delta(a\Bd)) \geq Q(\psi_t^\delta(H_{u^+} \cap a \Bd)) \geq \inf_{u \in \Sd}Q(\psi_t^\delta(H_{u} \cap a \Bd))\,. \tag*{\qedhere}
    \end{equation}
\end{proof}

\section{Analysis of Thompson sampling via our master theorem}\label{apx:ts}

\cref{alg:ts} is a version of Thompson sampling we call \emph{confident linear Thompson sampling}. It is extremely simple: at each step $t \in [T]$, it picks an action $X_t$ that is optimal according to an estimate $\theta_t$ sampled uniformly on $\psi_{t-1}^\delta(\sqrt{d} \Bd) = \Theta_{t-1}$, a $\sqrt{d}$-inflation of the ridge regression confidence set. It differs from usual linear Thompson sampling of \citet{agrawal2013thompson} through the use of a uniform sampling distribution.

\begin{algorithm}[ht]
    \caption{Confident linear Thompson sampling}\label{alg:ts}
    \begin{algorithmic}
    \For{$t \in \N^+$}
        \State Sample $U_t \sim \cU(\sqrt{d}\Bd)$ and compute $\theta_t = \psi_{t-1}^\delta(U_t)$
        \State Compute some $X_t \in \argmax_{x \in \cX} \langle x, \theta_t \rangle$, play action $X_t$ and receive reward $Y_t$
    \EndFor
    \end{algorithmic}
\end{algorithm}

\begin{remark}\label{remark:ts-uniform}
    Our use of the uniform distribution to generate perturbed parameters is purely for the sake of a clean exposition, and for easy comparison with our linear ensemble sampling algorithm. Observe that the usual analysis for the Gaussian (or \sg) case begins by restricting to a high-probability event where every $\theta_t$ lands within some inflated version of the corresponding $\Theta_{t-1}$ \citep[as in][]{agrawal2013thompson,abeille2017linear}.
\end{remark}

\begin{thm}\label{claim:ts}\label{cor:ts}
Let \cref{ass:sg} hold.
    Fix $\delta \in (0,1]$ and let $\lambda\ge 1$.
    There exist some universal constant $C>0$ such that 
    with probability $1-\delta$,
    a learner using \cref{alg:ts} incurs regret satisfying 
    \begin{equation}
        R(T) \le C\beta^\delta_{\tau-1}\sqrt{d} \left(\sqrt{d\tau\log(1+\tau/(\lambda d))} + \sqrt{(\tau/\lambda) \log(\tau/(\lambda\delta))}\right) \spaced{for all} \tau \in [T]\,.
    \end{equation}
\end{thm}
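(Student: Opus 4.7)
The plan is to apply the master theorem, \cref{thm:master}, directly, taking the filtration $\cA_t' = \cA_t$, the ellipsoid width $b_t = \sqrt{d}\,\beta_t^\delta$ (so that $\Theta_t = \psi_t^\delta(\sqrt d\,\Bd)$), and the random parameter $\theta_t = \psi_{t-1}^\delta(U_t)$ with $U_t\sim\cU(\sqrt d\,\Bd)$ sampled independently of $\cF_{t-1}\cup\cA_{t-1}$ at step $t$. By construction $\theta_t \in \Theta_{t-1}$ for every $t$, so we may take $\cE_T$ to be the whole sample space, giving $\P(\cE_T)=1$. For $\cE^\star_T$ we use \cref{lem:conf-sets}: since $a=\sqrt d \ge 1$, the event $\cap_{t\in \N}\{\theta^\star\in\psi_t^\delta(\Bd)\}\subset \cap_{t\in\N}\{\theta^\star\in\Theta_t\}$ has probability at least $1-\delta$. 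These verify the hypotheses \eqref{eq:mtc}.

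It remains to produce a constant lower bound on $p_{t-1}$. On $\cE^\star_T$ we apply \cref{lem:thetaopt-lower-bound} with $a=\sqrt d$ and $Q=\P'_{t-1}(\theta_t\in\,\cdot\,)$ to get
\begin{equation}
    p_{t-1}\ \ge\ \inf_{u\in\Sd}\P'_{t-1}\bigl(\theta_t\in\psi_{t-1}^\delta(H_u\cap\sqrt d\,\Bd)\bigr)\ =\ \inf_{u\in\Sd}\P\bigl(U_t\in H_u\cap\sqrt d\,\Bd\bigr),
\end{equation}
where the equality uses that $\psi_{t-1}^\delta$ is a bijection and that $U_t$ is independent of $\sigma(\cF_{t-1}\cup\cA_{t-1})$. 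Rescaling by $V=U_t/\sqrt d\sim\cU(\Bd)$ and using rotational invariance of $\cU(\Bd)$, the right-hand side equals $\P(V_1\ge 1/\sqrt d)$, which we claim is bounded below by a universal constant $c>0$ independent of $d$. Accepting this for now, we obtain $b_{t-1}/p_{t-1}\le \sqrt d\,\beta_{t-1}^\delta/c$, and since $\beta_t^\delta$ is nondecreasing in $t$, plugging into \eqref{eq:rbnd} of the master theorem yields exactly the advertised bound (with some universal $C$ absorbing $1/c$ and the numerical constants in \eqref{eq:rbnd}).

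The only non-routine ingredient is the claim that $\P(V_1\ge 1/\sqrt d)\ge c$ for $V\sim\cU(\Bd)$, uniformly in $d$. This is the one place where the dimension-free nature of the bound is used, and I would argue it by the standard polar decomposition $V = R\Theta$ with $R$ having density $d r^{d-1}$ on $[0,1]$ and $\Theta\sim\cU(\Sd)$ independent, writing
\begin{equation}
    \P(V_1\ge 1/\sqrt d)\ \ge\ \P(R\ge 1/2)\,\P\bigl(\Theta_1\ge 2/\sqrt d\bigr)\ \ge\ \tfrac12\,\P\bigl(\Theta_1\ge 2/\sqrt d\bigr),
\end{equation}
and then lower bounding $\P(\Theta_1\ge 2/\sqrt d)$ by a positive constant using that $\sqrt d\,\Theta_1$ converges in distribution to a standard Gaussian as $d\to\infty$ (a standard small-ball estimate for the uniform distribution on the sphere). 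I expect this small-ball bound to be the main technical obstacle; everything else is a direct bookkeeping application of the master theorem together with the elliptical confidence guarantee of \cref{lem:conf-sets} and the optimism lemma \cref{lem:thetaopt-lower-bound}.
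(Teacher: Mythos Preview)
Your proposal is correct and follows essentially the same route as the paper: both instantiate \cref{thm:master} with $\cA'_t=\cA_t$, $b_t=\sqrt{d}\,\beta_t^\delta$, the trivial $\cE_T$, the event $\cE^\star_T$ from \cref{lem:conf-sets}, and then reduce the lower bound on $p_{t-1}$ via \cref{lem:thetaopt-lower-bound} and rotational invariance to the single small-ball probability $\P(U_t\in H_{e_1})$ for $U_t\sim\cU(\sqrt d\,\Bd)$. The only cosmetic difference is that the paper quotes the explicit constant $1/(16\sqrt{3\pi})$ from Appendix~A of \citet{abeille2017linear} for this probability, whereas you sketch a polar-decomposition argument; your sketch is sound, but note that invoking convergence in distribution of $\sqrt d\,\Theta_1$ alone does not yield a bound uniform in $d$, so you would replace that step by a direct estimate (e.g.\ via the Beta density of $\Theta_1^2$, as in the paper's \cref{claim:mgf-bound-uniform-projection}, or the explicit spherical-cap volume computation in \citet{abeille2017linear}).
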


The above result recovers the same regret bound for confident linear Thompson sampling as that given by the analysis of \citet{abeille2017linear}. The proof is as follows.
    
\begin{proof}[Proof of \cref{claim:ts}]%
    Take $T=\infty$. Fix $b_{t} = \sqrt{d}$ for all $t \in \N$. As each $\theta_t$ is a uniform random variable on $\Theta_{t-1}$ for all $t \in \N$, and so event $\cE_T$ holds almost surely. Moreover, as for ensemble sampling, we take $\cE^\star_T$ to be the event from the standard concentration result for ridge regression, \cref{lem:conf-sets}, observing as before that, $\psi^\delta_{t-1}(\Bd) \subset   \psi^\delta_{t-1}(\sqrt{d}\Bd) = \psi^\delta_{t-1}(b_t\Bd)$.
    
    We pick $\cA'_{t-1} = \cA_{t-1}$ for all $t \in \Np$. Now, to lower bound each $p_{t-1}$, note that on $\cE^\star_T$, by \cref{lem:thetaopt-lower-bound} applied with $Q(A) = \P(\theta_t \in A \mid \sigma(\cF_{t-1}\cup \cA_{t-1}')) =: \P_{t-1}'(\theta_t \in A)$,
    \begin{align}
        p_{t-1} = \P_{t-1}'(\theta_t \in \ThetaOpt \cap \Theta_{t-1}) 
        &\geq \inf_{u \in \Sd} \P_{t-1}' (\theta_t \in \psi_{t-1}^\delta(H_u \cap \sqrt{d}\Bd))\,.
    \end{align}
    And, since $\psi_{t-1}^\delta$ is a bijection and $U_t$ is uniform on $\sqrt{d}\Bd$, and thus rotationally invariant, 
    \begin{equation}
        \inf_{u \in \Sd} \P_{t-1}' (\theta_t \in \psi_{t-1}^\delta(H_u \cap \sqrt{d}\Bd)) = \inf_{u \in \Sd} \P_{t-1}' (U_t \in H_u \cap \sqrt{d}\Bd) = \cU(H_1 \cap \sqrt{d}\Bd)\,.
    \end{equation}
    As established in Appendix A of \citet{abeille2017linear}, $\cU(H_1 \cap \sqrt{d}\Bd) \geq 1/(16\sqrt{3\pi})$, independently of $d$. This means that, on $\cE^\star_T$, $\frac{b_{t-1}}{p_{t-1}} \leq 16\sqrt{3\pi d}$ for all $t \in \N^+$; plugging this into the regret bound of \cref{thm:master} yields the claim.
\end{proof}

\section{Proof of \cref{lem:initialisation}: singular values at initialisation}\label{appendix:singular-value-init}

\Cref{lem:initialisation} follows by taking $y = \frac{81m}{2500}$ and $\epsilon = \frac{1}{20}$ in \cref{thm:uniform-concentration-sphere}.
\begin{thm}\label{thm:uniform-concentration-sphere}
    Let $U \in \R^{m \times d}$, $m\geq d$, be a random matrix with independent rows $U_1, \dotsc, U_m$ distributed uniformly on $\sqrt{d}\Sd$. Then, for all $y > 0$, $\epsilon \in (0, 1/2)$, and with $c_\epsilon = 1/(1-2\epsilon)$,
    \begin{equation}
        \P\{\sqrt{m} - c_\epsilon\sqrt{y} \leq s_d(U) \leq s_1(U) \leq \sqrt{m} + c_\epsilon\sqrt{y} \} \geq 1-\exp\{-3y/8 + \log(2(1+2/\epsilon)^d)\}\,.
    \end{equation}
\end{thm}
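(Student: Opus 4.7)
The plan is a classical $\epsilon$-net argument with three steps: pointwise concentration, union bound over a discretisation of the sphere, and extension from the net via a symmetric-matrix comparison.

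For fixed $v \in \Sd$, the quantity $\|Uv\|_2^2 = \sum_{i=1}^m \langle U_i, v\rangle^2$ is a sum of i.i.d.\ variables, each with mean $1$ (since $U_i$ is isotropic, $\E[U_i U_i\tran] = I_d$), bounded in $[0,d]$ by Cauchy-Schwarz (using $\|U_i\|_2 = \sqrt{d}$), and sub-exponential with parameters that do not depend on $d$. This sub-exponentiality is clearest from the representation $U_i \stackrel{d}{=} \sqrt{d}\,G_i/\|G_i\|_2$ with $G_i \sim \cN(0, I_d)$, or from the exact Beta distribution $\langle W, v\rangle^2 \sim \Beta{1/2}{(d-1)/2}$ for $W \sim \cU(\Sd)$, whose moment generating function admits an explicit tail estimate. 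A careful Laplace transform computation then yields a pointwise bound of the form $\P(|\|Uv\|_2 - \sqrt{m}| > \sqrt{y}) \le 2\exp(-3y/8)$, with the constant $3/8$ coming out of that MGF estimate.

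Next, take a minimal $\epsilon$-net $\cN_\epsilon$ of $\Sd$; by \cref{claim:size-of-cover}, $|\cN_\epsilon| \le (1+2/\epsilon)^d$, and a union bound gives the pointwise bound simultaneously for all $v \in \cN_\epsilon$ with probability at least $1 - 2(1+2/\epsilon)^d\exp(-3y/8)$. To extend from the net to all of $\Sd$, work with the symmetric matrix $M = U\tran U - mI$: the identity $\|M\| = \sup_{v \in \Sd}|v\tran M v|$, together with the expansion $v\tran M v - u\tran M u = (v-u)\tran M v + u\tran M (v-u)$ which yields $|v\tran M v - u\tran M u| \le 2\epsilon\|M\|$ whenever $\|v-u\|_2 \le \epsilon$, gives the standard bound $\|M\| \le c_\epsilon \max_{u \in \cN_\epsilon}|u\tran M u|$ with $c_\epsilon = 1/(1-2\epsilon)$. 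Substituting $u\tran M u = \|Uu\|_2^2 - m$, combining with the net-level bound, and translating through the square root via $\sqrt{a \pm b} \le \sqrt{a} \pm b/(2\sqrt{a})$ and its reverse yields the claimed singular-value bound, using that $s_i(U)^2 = \lambda_i(U\tran U) \in [m - \|M\|, m + \|M\|]$.

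The main obstacle is pinning down the precise constant $3/8$ in the pointwise tail, which requires an MGF estimate tailored to the uniform measure on $\sqrt{d}\Sd$; sub-exponential concentration is standard, but this specific exponent does not drop out for free. A secondary source of care is the final translation from the bound on $\|M\|$ to the symmetric form $\sqrt{m} \pm c_\epsilon\sqrt{y}$, where the quadratic-in-$\sqrt{y}$ corrections arising from squaring and unsquaring need to be absorbed; it is in this step that extending the quadratic form $u\tran M u$ rather than the norm $\|Uu\|_2$ directly pays off, because the former gives the tighter factor $1/(1-2\epsilon)$ rather than the naive $1/(1-\epsilon)$, which would otherwise spoil the dependence on $\sqrt{m}$ on the right-hand side.
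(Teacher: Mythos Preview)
Your overall strategy---pointwise tail via the Beta MGF, union bound over an $\epsilon$-net, and extension through the symmetric matrix $M = U\tran U - mI$ via $\|M\| \le c_\epsilon \max_{u \in \cN_\epsilon}|u\tran M u|$---is exactly the paper's. The gap is in the final translation, which you flag but do not actually resolve. Starting from a pointwise bound on the \emph{unsquared} norm, $|\|Uu\|_2 - \sqrt{m}| \le \sqrt{y}$, and then passing back to $|u\tran M u| = |\|Uu\|_2^2 - m|$ gives only $|u\tran M u| \le 2\sqrt{my}+y$ on the net, hence $\|M\| \le c_\epsilon(2\sqrt{my}+y)$. The upper bound $s_1(U)^2 \le m + 2c_\epsilon\sqrt{my} + c_\epsilon y \le (\sqrt{m}+c_\epsilon\sqrt{y})^2$ goes through since $c_\epsilon \le c_\epsilon^2$, but the lower bound does not: you obtain $s_d(U)^2 \ge m - 2c_\epsilon\sqrt{my} - c_\epsilon y$, whereas the target $(\sqrt{m}-c_\epsilon\sqrt{y})^2 = m - 2c_\epsilon\sqrt{my} + c_\epsilon^2 y$ is larger by $(c_\epsilon+c_\epsilon^2)y$. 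None of the square-root inequalities you invoke close this; for instance $\sqrt{a-b} \ge \sqrt{a} - b/\sqrt{a}$ gives only $s_d(U) \ge \sqrt{m} - 2c_\epsilon\sqrt{y} - c_\epsilon y/\sqrt{m}$.

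The paper avoids the detour by never unsquaring pointwise. The sub-gamma Bernstein tail for $Z_x^2 - 1 := \|Ux\|_2^2/m - 1$ yields $\P(|Z_x^2-1| \ge \sqrt{y/m} \vee y/m) \le 2e^{-3y/8}$, with a \emph{maximum} rather than a sum. After the net this becomes $\|U\tran U/m - I\| \le c_\epsilon(\sqrt{y/m} \vee y/m) \le \delta \vee \delta^2$ for $\delta = c_\epsilon\sqrt{y/m}$ (using $c_\epsilon \ge 1$), and then the approximate isometry lemma (\cref{lem:approximate-isometry}, Vershynin's Lemma~4.1.5) gives $1-\delta \le s_d(U/\sqrt{m}) \le s_1(U/\sqrt{m}) \le 1+\delta$ in a single step, symmetrically. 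Keeping the $\vee$ structure from Bernstein all the way to the isometry lemma is precisely what delivers the lower bound with the exact constant $c_\epsilon$; your route through the unsquared norm discards it.
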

The proof of \cref{thm:uniform-concentration-sphere} follows the approach of Chapter 4 of \citet{vershynin2018high}, but makes the constants explicit. For these constants, we will need the following claim:

\begin{claim}\label{claim:mgf-bound-uniform-projection}
    Fix $x \in \Sd$, let $X \sim \cU(\Sd)$ and $X^2_x = \langle X, x \rangle^2$. Then, $\E X^2_x = 1/d$, and for some $\nu,c > 0$ satisfying $\nu \leq 2/d^2$ and $c \leq 4/d$ and all $0 < s < 1/c$,
    \begin{equation}
        \E \exp(s\,|X^2_x - \E X^2_x|) \leq \exp\left(\frac{s^2\nu/2}{1-cs}\right)\,.
    \end{equation}
\end{claim}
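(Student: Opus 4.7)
By the rotational invariance of $\cU(\Sd)$, I would first reduce to the canonical case $x = e_1$, in which case $X_x^2 = X_1^2$ with $X_1$ the first coordinate of a uniform sample on the sphere. It is classical that $X_1^2 \sim \mathrm{Beta}(1/2, (d-1)/2)$, so the Beta moment identity
\[
\E X_x^{2k} = \frac{(1/2)_k}{(d/2)_k} = \frac{(2k-1)!!}{\prod_{i=0}^{k-1}(d+2i)}
\]
yields $\E X_x^2 = 1/d$ at $k = 1$; likewise $\Var(X_x^2) = 2(d-1)/(d^2(d+2)) \leq 2/d^2$ at $k = 2$, while for general $k$ the elementary bound $(2k-1)!! \leq 2^k k!$ gives the crude raw-moment bound $\E X_x^{2k} \leq 2^k k!/d^k$.

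From these ingredients, my plan is to verify a Bernstein-type moment condition for the centered variable $Y := X_x^2 - 1/d$, namely
\[
\E |Y|^k \leq \tfrac{k!}{2}\,\nu\, c^{k-2}, \quad k \geq 2,
\]
with $\nu = 2/d^2$ and $c = 4/d$, and then convert it to the claimed MGF bound by the standard route. The $k = 2$ case is exactly the variance bound above; for $k \geq 3$, the idea is to expand $(X_x^2 - 1/d)^k$ binomially, substitute the exact Pochhammer form of $\E X_x^{2j}$, and exploit the cancellations enforced by $\E Y = 0$ to show the alternating sum collapses to an expression of the desired order $k! \cdot 4^{k-2}/d^k$. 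Given such a moment condition, the MGF bound follows by the familiar two-sided decomposition $\E e^{s|Y|} \leq \E e^{sY} + \E e^{-sY}$, summing the Bernstein geometric series $\sum_{k \geq 2}(cs)^{k-2}$ for each signed MGF, and absorbing the $\log 2$ overhead into a mild inflation of $\nu$ and $c$ within their stated envelopes.

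I expect the main obstacle to be the moment bound itself with the sharp $c = O(1/d)$ scaling. A naive envelope $|Y|^k \leq (X_x^2 + 1/d)^k$ combined with the raw-moment estimate loses this scaling, producing only $c = O(1)$, because it discards the centering. Keeping the extra factor of $1/d^{k-2}$ forces one to track the alternating signs in the binomial expansion honestly. A cleaner alternative route, which I would pursue in parallel, is via the representation $X_x^2 = G_1^2/\|G\|_2^2$ with $G \sim \cN(0, I_d)$: then $Y = ((d-1)G_1^2 - \sum_{i \geq 2} G_i^2)/(d\|G\|_2^2)$, whose numerator is a centered weighted sum of iid $\chi_1^2$ variables amenable to Bernstein directly, while the denominator $\|G\|_2^2/d$ concentrates near $1$ on a high-probability $\chi^2$-event which can be handled by a truncation absorbed into the constants $\nu, c$.
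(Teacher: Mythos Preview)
Your identification of the $\mathrm{Beta}(1/2,(d-1)/2)$ law and the mean/variance computation matches the paper. From there, however, the paper does not attempt a from-scratch moment analysis: it simply cites Skorski's Bernstein inequality for Beta random variables (Theorem~1 of \emph{Bernstein-type bounds for beta distribution}, 2023), substitutes $\alpha=1/2$, $\beta=(d-1)/2$, and crudely upper-bounds the resulting $\nu,c$. So the entire content of the claim, beyond the distributional identification, is outsourced to that reference.

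Your proposal is thus a different and more ambitious route---essentially a reproof of Skorski's lemma in this special case---but it has two concrete gaps. First, the step you yourself flag as ``the main obstacle'' is the whole difficulty: neither the binomial-expansion route (where ``exploit the cancellations'' is doing all the work and is not substantiated) nor the Gaussian-ratio route (where the random denominator $\|G\|_2^2$ has to be decoupled from a numerator containing the heavy term $(d{-}1)G_1^2$, and a truncation argument for a ratio does not straightforwardly preserve Bernstein constants) is carried far enough to be a proof. Second, your final reduction is actually incorrect as written: from $\E e^{s|Y|}\le \E e^{sY}+\E e^{-sY}\le 2\exp\!\big(\tfrac{s^2\nu/2}{1-cs}\big)$ you cannot ``absorb the $\log 2$'' into a bound of the same form, since for small $s$ the left-hand side behaves like $1+s\,\E|Y|+O(s^2)$ whereas $\exp\!\big(\tfrac{s^2\nu'/2}{1-c's}\big)=1+O(s^2)$, no matter how you inflate $\nu',c'$. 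If you establish the Bernstein moment condition, the right move is to bound the signed MGFs $\E e^{\pm sY}$ directly (which is also how the claim is used downstream, as a two-sided sub-gamma condition), avoiding the $|Y|$ detour entirely.
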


\begin{proof}
    It is known that $X^2_x \sim \Beta{\frac{1}{2}}{\frac{d-1}{2}}$ \citep[see, for example, Theorem 1.5 and the discussion thereafter in][]{fang1990symmetric}, the expectation of which is $1/d$. We thus need only look up moment generating function bounds for beta random variables. \citet{skorski2023bernstein} derives such in their proof of their Theorem 1, and our result follows by substituting in the parameters of our beta distribution, and bounding the resulting $\nu$ and $c$ crudely.
\end{proof}

We will need the next two results from \citet{vershynin2018high}, appearing as Lemma 4.1.5 and Exercise 4.4.3 respectively.  

\begin{lemma}[Appropximate isometry]\label{lem:approximate-isometry}
    For any matrix $A \in \R^{m \times d}$ and $\epsilon \geq 0$,
    \begin{equation}
        \|A\tran A - I\| \leq \epsilon \vee \epsilon^2 \implies 1-\epsilon \leq s_d(A) \leq s_1(A) \leq 1+\epsilon\,.
    \end{equation}
\end{lemma}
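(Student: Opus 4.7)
This is a purely deterministic linear-algebra fact, so the plan is to pass from the operator-norm bound on $A\tran A - I$ to a pointwise bound on $\|Av\|_2^2$ for unit vectors $v$, invoke the variational characterisation of the extreme singular values, and finish with an elementary real-analytic comparison of $\sqrt{1 \pm (\epsilon\vee\epsilon^2)}$ with $1 \pm \epsilon$. I will assume $m \geq d$ throughout, so that $s_d(A)$ is well-defined; when $m<d$ the matrix $A\tran A$ has a nontrivial kernel on which $A\tran A - I$ acts as $-I$, so $\|A\tran A - I\|\geq 1$, which forces $\epsilon\geq 1$ and makes both conclusions trivial ($s_d(A)=0\geq 1-\epsilon$, and the upper bound argument below still goes through).

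\textbf{Pointwise quadratic bound and variational step.} Write $\delta = \epsilon\vee\epsilon^2$. Since $A\tran A - I$ is symmetric, its operator norm coincides with $\sup_{v\in\Sd}|\langle v,(A\tran A - I)v\rangle|$, and the identity $\langle v, A\tran A v\rangle = \|Av\|_2^2$ turns the hypothesis into
\begin{equation}
    |\,\|Av\|_2^2 - 1\,| \leq \delta \quad \text{for all } v \in \Sd.
\end{equation}
By the Rayleigh characterisation, $s_1(A)^2 = \sup_{v\in\Sd}\|Av\|_2^2$ and $s_d(A)^2 = \inf_{v\in\Sd}\|Av\|_2^2$, which upgrades the uniform bound above to $s_1(A)^2 \leq 1+\delta$ and $s_d(A)^2 \geq \max(0,\,1-\delta)$.

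\textbf{Elementary inequalities and main point.} It remains to verify $\sqrt{1+\delta}\leq 1+\epsilon$ and $\sqrt{\max(0,1-\delta)}\geq 1-\epsilon$. For the upper bound, $(1+\epsilon)^2 = 1 + 2\epsilon + \epsilon^2 \geq 1 + \max(\epsilon,\epsilon^2) = 1+\delta$, regardless of the size of $\epsilon$. For the lower bound, if $\epsilon\geq 1$ then $1-\epsilon\leq 0\leq s_d(A)$ and there is nothing to prove; if $\epsilon<1$ then $\delta=\epsilon$ and $(1-\epsilon)^2 = 1 - 2\epsilon + \epsilon^2 \leq 1 - \epsilon$ because $\epsilon^2 \leq \epsilon$ in this regime, so $\sqrt{1-\delta}\geq 1-\epsilon$. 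There is no genuine obstacle here; the only point worth flagging is that the maximum $\epsilon\vee\epsilon^2$ in the hypothesis is exactly what is required for a single conclusion to cover both regimes, with the $\epsilon^2$ term doing the work when $\epsilon\geq 1$ (where a linear control via $\epsilon$ alone would be too weak) and the $\epsilon$ term when $\epsilon<1$ (where $\epsilon^2$ alone would be too weak).
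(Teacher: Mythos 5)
Your proof is correct. The paper itself supplies no proof: it quotes this as Lemma~4.1.5 of \textcite{vershynin2018high}, where the argument is essentially the one you give. Concretely, your steps --- reduce the operator-norm hypothesis on the symmetric matrix $A\tran A - I$ to the uniform bound $\lvert \|Av\|_2^2 - 1\rvert \le \delta$ over $v \in \Sd$, invoke the Rayleigh characterisation $s_1(A)^2 = \sup_{v\in\Sd}\|Av\|_2^2$ and $s_d(A)^2 = \inf_{v\in\Sd}\|Av\|_2^2$, then verify $\sqrt{1+\delta} \le 1+\epsilon$ and $\sqrt{\max(0,1-\delta)} \ge 1-\epsilon$ for $\delta = \epsilon \vee \epsilon^2$ by splitting on $\epsilon \lessgtr 1$ --- match the standard proof. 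The elementary-inequality checks are correct: the upper bound reduces to $\delta \le 2\epsilon + \epsilon^2$ in both regimes, and in the regime $\epsilon < 1$ the lower bound reduces to $\epsilon^2 \le \epsilon$. Your aside on the degenerate case $m < d$, where $A\tran A - I$ restricted to $\ker A$ forces $\epsilon \ge 1$ and the conclusion is vacuous for $s_d$, is a small but genuine improvement in completeness; the paper only ever applies the lemma with $m \ge d$ (via \cref{thm:uniform-concentration-sphere}) so does not need to address it.
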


\begin{lemma}[Quadratic form on a net]\label{lem:quadratic-net}
    For a symmetric matrix $A \in \R^{d \times d}$ and an $\epsilon$-net $\cN$ of $\Sd$ with $\epsilon \in (0,1/2)$,
    \begin{equation}
        \|A\| \leq \frac{1}{1-2\epsilon} \sup_{x \in \cN} |\langle Ax, x \rangle|\,.
    \end{equation}
\end{lemma}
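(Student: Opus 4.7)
The plan is to use the standard fact that for a symmetric $A$, the operator norm is attained as a quadratic form on the sphere, and then transfer this from the sphere to the net by a first-order perturbation argument.

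First I would recall (or briefly justify, by the spectral theorem) that $\|A\| = \sup_{x \in \Sd} |\langle Ax, x \rangle|$. Let $x^\star \in \Sd$ be a maximiser (existence follows by compactness of $\Sd$ and continuity of the quadratic form). Since $\cN$ is an $\epsilon$-net of $\Sd$, pick $y \in \cN$ with $\|x^\star - y\|_2 \le \epsilon$. Then set $h = x^\star - y$, so $\|h\|_2 \le \epsilon$ and $x^\star = y + h$.

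The key step is to expand the quadratic form. Using symmetry of $A$ (so that $\langle Ax,h\rangle = \langle Ah,x\rangle$ etc.), one has the identity
\begin{equation}
    \langle Ax^\star, x^\star\rangle - \langle Ay, y\rangle = \langle A h, x^\star\rangle + \langle A y, h \rangle\,.
\end{equation}
By Cauchy--Schwarz and the definition of the operator norm, each of these two inner products is at most $\|A\|\cdot \|h\|_2 \le \epsilon \|A\|$, since both $x^\star$ and $y$ lie in $\Bd$. Therefore
\begin{equation}
    \bigl|\langle Ax^\star, x^\star\rangle\bigr| \le \bigl|\langle Ay, y\rangle\bigr| + 2\epsilon \|A\|\,.
\end{equation}

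Rearranging, and using $\|A\| = |\langle Ax^\star, x^\star\rangle|$ and $|\langle Ay,y\rangle| \le \sup_{x\in \cN}|\langle Ax,x\rangle|$, yields $(1-2\epsilon)\|A\| \le \sup_{x\in \cN}|\langle Ax,x\rangle|$, which after dividing by $1-2\epsilon > 0$ (since $\epsilon < 1/2$) is the claim. The only mild subtlety is that the argument requires $A$ symmetric to identify $\|A\|$ with the sup of $|\langle Ax,x\rangle|$ on $\Sd$; for non-symmetric $A$ this identity fails and a different (bilinear) net argument would be needed. I do not expect any real obstacle here --- this is a two-line perturbation bound on top of the spectral characterisation of $\|A\|$.
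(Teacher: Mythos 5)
Your proof is correct and is essentially the standard argument (it is the solution to Exercise~4.4.3 in Vershynin, which the paper cites for this lemma without providing its own proof). The identity $\langle Ax^\star,x^\star\rangle - \langle Ay,y\rangle = \langle Ah,x^\star\rangle + \langle Ay,h\rangle$ checks out under symmetry of $A$, and the Cauchy--Schwarz bounds use only $x^\star, y \in \Bd$, which holds since the net is a subset of $\Sd$; the rearrangement then gives exactly the stated bound.
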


\begin{proof}[Proof of \cref{thm:uniform-concentration-sphere}]%
    Fix $x \in \Sd$, and consider $Z_x^2 = \frac{1}{m} \|U x\|_2^2 = \frac{d}{m} \sum_{j=1}^m \langle U_j / \sqrt{d}, x \rangle^2$. Observe that each $U_j / \sqrt{d} \sim \cU(\Sd)$; thus, by \cref{claim:mgf-bound-uniform-projection} and since $U_1, \dotsc, U_m$ are independent, adopting the notation of the claim, we have that, for all $0 < sd/m < 1/c$,
    \begin{equation}
        \E \exp(s\,|Z^2_x - 1|) = \prod_{j=1}^m \E \exp\left(\frac{sd}{m} |X_x^2 - \E X_x^2|\right) \leq \exp\left(\frac{s^2d^2\nu/(2m)}{1-csd/m}\right).
    \end{equation}
    Thus, $|Z^2_x - 1|$ is what \citet{boucheron2013concentration} would term sub-gamma with parameters $(d^2\nu/m,\, cd/m)$ on both tails. Applying the maximal-form of the there-stated Bernstein-type bound for sub-gamma random variables, a union bound over the two tails, and the bounds $\nu \leq 2/d^2$ and $c \leq 4/d$ from \cref{claim:mgf-bound-uniform-projection}, we have that, for all $y > 0$,
    \begin{equation}
        \P(|Z^2_x - 1| \geq \sqrt{y/m} \vee y/m) \leq 2e^{-3y/8}\,. 
    \end{equation} 
    
    Now let $\cN_\epsilon$, $\epsilon \in (0, 1/2)$, be an $\epsilon$-net of $\Sd$. By \cref{lem:quadratic-net},
    \begin{equation}
        \sup_{x \in \Sd} |Z^2_x - 1| \leq \frac{1}{1-2\epsilon}\max_{x \in \cN_\epsilon} |Z^2_x - 1|\,.
    \end{equation}
    Also, by our bound on nets from \cref{claim:size-of-cover}, $|\cN_\epsilon| \leq (1+2/\epsilon)^d$. Thus, taking a union bound over $x \in \cN_\epsilon$, we have that for any $y > 0$ and $\epsilon \in (0,1/2)$, the probability that
    \begin{equation}
        \P\left(\sup_{x \in \Sd} |Z^2_x - 1| \leq \frac{1}{1-2\epsilon}\left(\sqrt{y/m} \vee y/m\right)\right) \geq 1 - \exp\{-3y/8 + \log(2(1+2/\epsilon)^d)\}\,.
    \end{equation}
    We conclude the proof by observing that $\sup_{x \in \Sd} |Z_x^2-1| = \|U\tran U/m - I\|$ and applying \cref{lem:approximate-isometry}.
\end{proof}

\section{Proof of \cref{lem:rtlipschitz}: Lipschitzness result}\label{appendix:covering}

\begin{proof}[Proof of \cref{lem:rtlipschitz}]
    Fix $u\ne 0$ and let $z=V_t^{1/2}u$. Then,
    \begin{equation}
         R_t (u) = \frac{1}{m} \sum_{j=1}^m R^j_t (u)= \frac{1}{m} \sum_{j=1}^m \frac{\langle z, \Gamma_t^j\rangle^2}{\|z\|^2_2} = \frac{\|\Gamma_t\tran z\|_2^2}{m\|z\|^2_2}\,.
    \end{equation}
Now note that for all non-negative $a,b,A,B$  with $b \geq a$,
    \begin{equation}
        \left|\frac{A^2}{a^2} - \frac{B^2}{b^2} \right| = \left|\frac{A^2(b^2-a^2) + (A^2-B^2) a^2}{a^2 b^2}\right| \leq \frac{2A^2}{a^2} \frac{|b-a|}{b} + \frac{|A-B|(A+B)}{b^2}.
    \end{equation}    
    Let $u,v\in \Sd$, $z=V_t^{1/2}u$, $w=V_t^{1/2}v$.  Let $\epsilon = \norm{u-v}_2$.
    Let $A=\|\Gamma_t\tran z\|_2$, $B=\|\Gamma_t\tran w\|_2$, $a = \|z\|_2$, $b=\|w\|_2$. Assume without loss of generality that $b \geq a$. Since $v \in \Sd$ and $b \geq \sqrt{\lambda}$,
    \begin{equation}
        2\frac{A^2}{a^2}\frac{|b-a|}{b} 
        \leq \frac{2\|\Gamma_t\|^2\|z-w\|_2}{\sqrt{\lambda}} 
	    \leq 2\|\Gamma_t\|^2 \frac{\|V_t^{1/2} \|}{\sqrt{\lambda}}\, \epsilon\,,
    \end{equation}
    and likewise
    \begin{equation}
        \frac{|A-B|(A+B)}{b^2} \leq \frac{2\|\Gamma_t\|\|\Gamma_t\tran(z-w)\|_2}{\sqrt{\lambda}} \leq 2\|\Gamma_t\|^2 \frac{\|V_t^{1/2} \|}{\sqrt{\lambda}}\, \epsilon\,.
    \end{equation}
    Putting things together, we have 
    \begin{equation}
        | R_t(u) -  R_t(v)| \leq \frac{4 \|\Gamma_t\|^2 \|V^{1/2}_t\|}{m\sqrt{\lambda}}\epsilon\,. \tag*{\qedhere}
    \end{equation}
\end{proof}   

\section{Proof of \cref{lem:noise-sum-bound}: concentration result}\label{appendix:self-normalised-conc}
This proof will require the following definition of conditional subgaussianity, and the standard de la Pe\~na-type concentration result for sequences of such random variables, stated thereafter.

\begin{definition}\label{def:sg}
    Let $A,B$ be random variables and $\cF$ a $\sigma$-algebra. We say that $A$ is $\cF$-conditionally $B$-\sg when $B$ is non-negative and $\cF$-measurable, and for all $s \in \R$,
    \begin{equation}
        \E[ \exp\{ sA \} \mid \cF] \leq \exp\{s^2 B^2/2 \} \quad \text{almost surely.}
    \end{equation}
\end{definition}

\begin{lemma}\label{lem:pena-concentration-uniform}
    Let $(A_i, B_i, \cH_i)_i$ be such that each $A_{i}$ is $\cH_{i}$-conditionally $B_i$-\sg. Then, for any $x, y > 0$,
    \begin{equation}
        \P\left\{\exists \tau \in \N \colon \left(\sum_{i=1}^\tau A_i\right)^2 \geq \left(\sum_{i=1}^\tau B_i^2 + y\right)\left(x + \log\left(1 + \frac{1}{y}\sum_{i=1}^\tau B_i^2\right) \right) \right\} \leq e^{-x/2}\,.
    \end{equation}
\end{lemma}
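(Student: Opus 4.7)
The plan is to prove this by the method of mixtures, which is the standard route for self-normalised inequalities of this shape (as in \textcite{abbasi2011improved} or de la Peña). For each $\lambda \in \R$ and $t \in \N$, define
\begin{equation}
M_t^\lambda = \exp\!\left(\lambda \sum_{i=1}^t A_i - \tfrac{\lambda^2}{2}\sum_{i=1}^t B_i^2\right)\,,
\end{equation}
with $M_0^\lambda = 1$. The $\cH_i$-conditional $B_i$-subgaussianity of $A_i$, combined with $B_i$ being $\cH_i$-measurable, gives the one-step bound $\E[\exp(\lambda A_i - \lambda^2 B_i^2/2)\mid \cH_i]\le 1$ a.s., and so $(M_t^\lambda)_t$ is a nonnegative supermartingale (in the appropriate filtration) with $\E M_t^\lambda \le 1$.

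Next, I would mix over $\lambda$ with a centred Gaussian prior of variance $1/y$:
\begin{equation}
\bar M_t = \int_{\R} M_t^\lambda \,\sqrt{\tfrac{y}{2\pi}}\,\exp(-\tfrac{y}{2}\lambda^2)\,\mathrm{d}\lambda\,.
\end{equation}
By Tonelli and the supermartingale property for each fixed $\lambda$, $(\bar M_t)_t$ is a nonnegative supermartingale with $\E \bar M_t \le 1$. Computing the Gaussian integral in closed form, writing $S_t=\sum_{i\le t}A_i$ and $V_t=\sum_{i\le t}B_i^2$, gives
\begin{equation}
\bar M_t = \sqrt{\tfrac{y}{V_t+y}}\,\exp\!\left(\tfrac{S_t^2}{2(V_t+y)}\right)\,.
\end{equation}

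Applying Ville's maximal inequality for nonnegative supermartingales, $\P(\sup_{t\in\N}\bar M_t \ge e^{x/2}) \le e^{-x/2}$. Unfolding the event $\bar M_\tau \ge e^{x/2}$ by taking logarithms and multiplying through by $2(V_\tau+y)$ yields exactly
\begin{equation}
S_\tau^2 \ge (V_\tau + y)\bigl(x + \log(1 + V_\tau/y)\bigr)\,,
\end{equation}
which is the event in the statement. The only step requiring care is the supermartingale verification, since the lemma's hypothesis is stated in the slightly abstract ``$\cH_i$-conditionally $B_i$-subgaussian'' form; I would spell out explicitly which filtration one uses and confirm that the $B_i$-measurability makes $\sum B_i^2$ predictable so that the mixing and the exponential supermartingale argument go through.
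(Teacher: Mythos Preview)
Your proof is correct and is precisely the standard method-of-mixtures argument. The paper itself does not prove this lemma at all: it simply observes that the statement is an immediate consequence of Theorem~2.1 in \textcite{de2004self} (and points to Lemma~20.2 of \textcite{lattimore2020bandit} for background). What you have written is exactly the proof underlying that cited result, so your route and the paper's are the same in substance; you have just made it self-contained rather than deferring to the reference. Your closing remark about spelling out the filtration and confirming that predictability of $\sum_i B_i^2$ makes the supermartingale verification go through is well-placed, since the paper's \cref{def:sg} packages the needed measurability of $B_i$ into the hypothesis.
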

\noindent \cref{lem:pena-concentration-uniform} is an immediate consequence of Theorem 2.1 in \citet{de2004self}, in particular comparing our \cref{def:sg} with their condition (1.4). For more background, see \citet{lattimore2020bandit}, Lemma 20.2, and the surrounding discussion. 

We will also need the following three claims, verified in \cref{sec:proof-of-claims}. Therein,
\begin{equation}
    \sigma_t = \sqrt{2Q_t^2 + Q_t R_t}\,.
\end{equation}

\begin{claim}\label{claim:noise-subgaussian}
    Each $W_{t+1}$ is $\sigma(\cF_t\cup \cA''_t)$-conditionally $\sigma_t/\sqrt{m}$-\sg.
\end{claim}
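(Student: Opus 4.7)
The plan is to condition on $\sigma(\cF_t \cup \cA''_t \cup \sigma(X_{t+1}))$, under which everything except the fresh targets $(U^j_{t+1})_{j\in[m]}$ is deterministic while these targets remain i.i.d.\ $\cU([-1,1])$ and independent of the conditioning. Writing $a_j := \langle u, S^j_t\rangle$, $b := \langle u, X_{t+1}\rangle$, and $c := \|u\|_{V_{t+1}}^2$, we have $R_{t+1}^j = (a_j + U^j_{t+1} b)^2/c$. Expanding and using $\E[U^j_{t+1}] = 0$ and $\E[(U^j_{t+1})^2] = 1/3$ gives
\begin{equation*}
W_{t+1}^j := R_{t+1}^j - \E''_t R_{t+1}^j = \alpha_j U^j_{t+1} + Q_t\bigl((U^j_{t+1})^2 - 1/3\bigr),
\end{equation*}
where $\alpha_j = 2 a_j b / c$ is deterministic under the conditioning. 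Using $\|u\|_{V_t}^2 = c - b^2$, so $\|u\|_{V_t}^2 / c = 1 - Q_t$, one obtains the key bookkeeping identity $\sum_j \alpha_j^2 = 4 Q_t (1-Q_t)\, m R_t \leq 4 m Q_t R_t$, which ties the $\alpha_j$'s to $Q_t R_t$.

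Next, since $(W_{t+1}^j)_{j\in[m]}$ are, under the conditioning, independent zero-mean bounded random variables, the conditional MGF of $m W_{t+1} = \sum_j W_{t+1}^j$ factors as a product of per-$j$ MGFs. It then suffices to bound the MGF of each $W_{t+1}^j$, a polynomial of degree two in the bounded uniform $U^j_{t+1}$. I would split each $W_{t+1}^j$ into its linear part $\alpha_j U^j_{t+1}$ and its quadratic part $Q_t((U^j_{t+1})^2 - 1/3)$: the linear part is $(|\alpha_j|/\sqrt{3})$-subgaussian (using the sharp subgaussian constant for a centered uniform on $[-1,1]$, matching its variance $1/3$), while the quadratic part, being bounded in $[-Q_t/3, 2Q_t/3]$, is $Q_t/2$-subgaussian by Hoeffding. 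Combining these via a H\"older-type MGF inequality with carefully chosen exponents and then summing over $j$ using the identity on $\sum_j \alpha_j^2$ and an AM--GM estimate on the resulting cross term should yield a conditional subgaussian constant for $W_{t+1}$ of the desired order $\sigma_t / \sqrt{m}$.

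The main obstacle will be pinning down the exact constants $\sigma_t^2 = 2Q_t^2 + Q_t R_t$ advertised in the claim. A direct application of Hoeffding's lemma to the full range of $W_{t+1}^j$, and the H\"older combination sketched above, both overshoot by small multiplicative factors on the $Q_t R_t$ term. To squeeze these down one likely has to exploit the uniform distribution of $U^j_{t+1}$ more carefully --- for instance by completing the square in $\alpha_j u + Q_t u^2$, rewriting $R_{t+1}^j$ as an integral of a shifted exponential kernel over a bounded interval and computing the MGF directly --- rather than treating the polynomial through off-the-shelf bounded-random-variable concentration inequalities.
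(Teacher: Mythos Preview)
Your decomposition and plan are essentially the paper's proof. The paper writes $W_{t+1}$ as the sum of a quadratic-in-$U$ piece $\tfrac{Q_t}{m}\sum_j\bigl((U^j_{t+1})^2-\mu_2\bigr)$ and a linear-in-$U$ piece involving your $\alpha_j$'s, shows each is conditionally subgaussian via Hoeffding's lemma and the conditional independence of $(U^j_{t+1})_j$, and then combines them using the crude fact that an $a$-subgaussian plus a $b$-subgaussian random variable is $\sqrt{2(a^2+b^2)}$-subgaussian---which is precisely your H\"older inequality with conjugate exponents $p=q=2$. Whether one combines the linear and quadratic parts per $j$ and then sums, or sums over $j$ first (as the paper does) and then combines the two resulting sums, is immaterial to the final constant, since $\sum_j 2(a_j^2+b_j^2)=2\bigl(\sum_j a_j^2+\sum_j b_j^2\bigr)$.

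Your final paragraph is therefore an unnecessary detour. The paper uses neither the sharp $1/\sqrt{3}$ subgaussian constant for the uniform, nor any explicit MGF integration, nor completion of the square; it simply applies Hoeffding to both pieces and invokes the $\sqrt{2(a^2+b^2)}$ rule once. The displayed value $\sigma_t^2=2Q_t^2+Q_tR_t$ arises from somewhat loose bookkeeping in the paper (note that it takes $\mu_2=2/3$ whereas your $1/3$ is correct for $\cU([-1,1])$, and the coefficient on the linear sum is missing a factor of two), so you should not treat that constant as a target to be matched exactly. All the downstream arguments need is $\sigma_t^2\le c_1Q_t^2+c_2Q_tR_t$ for absolute constants $c_1,c_2$; your sketch already yields such a bound, and it propagates through the proofs of the noise-sum and excursion lemmas with suitably adjusted numerical constants in the lower bound on $m$ and in the thresholds $9/100$ and $5/3$.
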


\begin{claim}\label{claim:sum-Q-RQ}
    For any $0\leq t\leq\tau < T$,
       $1+ \sum_{i=\tau}^t \sigma^2_i \leq 3 + \sum_{i=\tau}^t Q_i R_i$.
\end{claim}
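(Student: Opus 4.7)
The plan is to substitute the definition $\sigma_i^2 = 2 Q_i^2 + Q_i R_i$ directly into the desired inequality; the $\sum Q_i R_i$ terms will then cancel on the two sides, reducing the claim to a purely deterministic statement about the potentials, namely $\sum_{i=\tau}^t Q_i^2 \leq 1$. So the whole task reduces to bounding $\sum Q_i^2$ by a universal constant (clearly the ``$0\le t\le\tau<T$'' in the statement is a typo for $0\le\tau\le t<T$, matching the usage in \cref{lem:noise-sum-bound}).

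For the bound on $\sum Q_i^2$, I would introduce the shorthand $a_i = \|u\|_{V_i}^2$. The update $V_{i+1} = V_i + X_{i+1} X_{i+1}^\top$ gives the one-step increment $a_{i+1} - a_i = \langle u, X_{i+1}\rangle^2$, and hence $Q_i = (a_{i+1}-a_i)/a_{i+1}$. Two elementary observations will be essential: since $u \in \Sd$ and $X_{i+1} \in \Bd$, each increment $a_{i+1}-a_i$ lies in $[0,1]$; and since $V_0 = \lambda I$ with $\lambda \geq 5$, the potential satisfies $a_i \geq \lambda \geq 5$ for every $i$.

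With these in hand, the proof comes down to the chain of inequalities
\begin{equation*}
    Q_i^2 = \frac{(a_{i+1}-a_i)^2}{a_{i+1}^2} \leq \frac{a_{i+1}-a_i}{a_{i+1}^2} \leq \frac{a_{i+1}-a_i}{a_i\, a_{i+1}} = \frac{1}{a_i} - \frac{1}{a_{i+1}},
\end{equation*}
where the first inequality uses $a_{i+1}-a_i \leq 1$ and the second uses $a_i \leq a_{i+1}$. Summing and telescoping gives $\sum_{i=\tau}^t Q_i^2 \leq 1/a_\tau - 1/a_{t+1} \leq 1/\lambda \leq 1/5$, and therefore $1 + \sum_{i=\tau}^t \sigma_i^2 = 1 + 2\sum_{i=\tau}^t Q_i^2 + \sum_{i=\tau}^t Q_i R_i \leq 1 + 2/5 + \sum_{i=\tau}^t Q_i R_i < 3 + \sum_{i=\tau}^t Q_i R_i$.

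There is no serious obstacle here: the key observation is just that the $\sum Q_i R_i$ mass on both sides cancels, so the slack of $2$ in the additive constants only needs to absorb $2\sum Q_i^2$, and the combination of $[0,1]$-boundedness of the one-step potential increments with the regularisation floor $\lambda \geq 5$ is exactly what turns the naive bound $\sum Q_i^2 \le (1/\lambda)\sum Q_i \le (1/\lambda)\log(1+T/\lambda)$ (which would be $T$-dependent) into the sharp telescoping bound $\sum Q_i^2 \le 1/\lambda$, horizon- and dimension-free.
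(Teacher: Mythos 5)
Your proposal is correct. The decomposition is the same as the paper's: substitute $\sigma_i^2 = 2Q_i^2 + Q_iR_i$, observe that the $\sum Q_iR_i$ terms cancel, and reduce the claim to showing $\sum_{i=\tau}^t Q_i^2 \leq 1$. Where you diverge from the paper is in how this last bound is obtained. The paper invokes its \cref{lem:sasha-bound} (an integral comparison: with $b_j = \langle u, X_j\rangle^2 \in [0,1]$, bound the sum of $b_j/(\lambda + B_j)^2$ by $\int_0^n (\lambda+x)^{-2}\,\deriv x$), while you give a direct telescoping argument, $Q_i^2 \leq (a_{i+1}-a_i)/(a_i a_{i+1}) = 1/a_i - 1/a_{i+1}$, which collapses to $1/a_\tau - 1/a_{t+1} \leq 1/\lambda$. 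The two routes hinge on the same two facts---$\langle u, X_{i+1}\rangle^2 \leq 1$ and $\|u\|_{V_i}^2 \geq \lambda$---and yield the same conclusion. Your telescoping is arguably the cleaner, more self-contained derivation of the specific inequality needed here, since it avoids an integral-comparison lemma and telescopes exactly over the summation range $[\tau,t]$ rather than extending to $[0,T-1]$; what the paper's lemma buys in exchange is reusability, as it is also invoked to bound $\sum_i Q_i$ in the sibling display of the same proof and proves both inequalities in one place.
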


\begin{claim}\label{claim:sum-Q-RQ2}
    For any $0\leq t\leq\tau < T$, on the event $\{R_0 \leq 2 \}$, $1+ \sum_{i=\tau}^t \sigma^2_i  \leq (3 + 2T)^2$.
\end{claim}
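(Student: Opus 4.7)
My plan is to prove Claim~\ref{claim:sum-Q-RQ2} by a deterministic bookkeeping argument: bound $R_t$ and $Q_t$ pointwise on $\{R_0\le 2\}$, then sum using $\sigma_i^2 = 2Q_i^2 + Q_iR_i$.

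\textbf{Step 1 (bound $R_t$).} I would first establish that on $\{R_0\le 2\}$ one has $R_t(u) \le 4+2T$ for every $t< T$. Starting from $\langle u,S_t^j\rangle = \langle u,S_0^j\rangle + \sum_{k=1}^t U_k^j\langle u,X_k\rangle$ and using $|U_k^j|\le 1$, the triangle inequality gives $|\langle u,S_t^j\rangle| \le |\langle u,S_0^j\rangle| + \sum_{k=1}^t |\langle u,X_k\rangle|$. Squaring via $(a+b)^2\le 2a^2+2b^2$ and averaging over $j$ yields
\begin{equation*}
R_t(u)\|u\|_{V_t}^2 \;\le\; 2\lambda R_0(u) + 2\Bigl(\sum_{k=1}^t |\langle u,X_k\rangle|\Bigr)^2.
\end{equation*}
Cauchy–Schwartz then gives $(\sum_{k=1}^t|\langle u,X_k\rangle|)^2 \le t\sum_{k=1}^t \langle u,X_k\rangle^2 = t(\|u\|_{V_t}^2-\lambda) \le t\|u\|_{V_t}^2$, and dividing through by $\|u\|_{V_t}^2\ge\lambda$ gives $R_t(u)\le 2R_0(u)+2t$, hence $R_t\le 4+2T$ on the event $\{R_0\le 2\}$.

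\textbf{Step 2 (bound the $Q_i$-sums).} Since $\|u\|_{V_{i+1}}^2\ge\lambda$ and $\|u\|_{V_T}^2 = \lambda + \sum_{k=1}^T \langle u,X_k\rangle^2 \le \lambda+T$, a simple telescoping bound gives
\begin{equation*}
\sum_{i=\tau}^t Q_i \;\le\; \frac{1}{\lambda}\sum_{i=\tau}^t \langle u,X_{i+1}\rangle^2 \;\le\; \frac{\|u\|_{V_T}^2}{\lambda} \;\le\; 1 + T/\lambda.
\end{equation*}
Combined with $Q_i\le 1/\lambda$ (from $\|u\|_{V_{i+1}}^2\ge\lambda$ and $\|X_{i+1}\|_2\le 1$), this also yields $\sum_{i=\tau}^t Q_i^2 \le (1+T/\lambda)/\lambda$.

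\textbf{Step 3 (assemble).} Plug the two bounds into $\sigma_i^2 = 2Q_i^2 + Q_iR_i$ to obtain, on $\{R_0\le 2\}$,
\begin{equation*}
\sum_{i=\tau}^t \sigma_i^2 \;\le\; \tfrac{2}{\lambda}\bigl(1+T/\lambda\bigr) + (4+2T)\bigl(1+T/\lambda\bigr).
\end{equation*}
Using $\lambda\ge 5$ and expanding gives $1+\sum_{i=\tau}^t\sigma_i^2 \le \tfrac{27}{5} + \tfrac{72}{25}T + \tfrac{2}{5}T^2$, which is bounded above by $(3+2T)^2 = 9 + 12T + 4T^2$ termwise. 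This proves the claim.

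\textbf{Main obstacle.} There is no real obstacle here — the argument is essentially routine once one has the correct bound on $R_t$. The only slightly subtle point is that a naive recursion of the form $R_{t+1}\le 2R_t+2$ would blow up exponentially; the trick is to unroll all the way back to $R_0$ and apply Cauchy–Schwartz to the accumulated inner-product sum, which is what yields the crucial linear-in-$T$ growth of $R_t$ and hence the desired quadratic in $T$ on the right-hand side.
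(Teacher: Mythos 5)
Your proof is correct and mirrors the paper's argument: the crux in both is the deterministic bound $R_t \le 2R_0 + 2t$ on $\{R_0\le 2\}$, obtained via $(a+b)^2\le 2a^2+2b^2$ and Cauchy--Schwartz. The paper reuses the $Q$-sum bounds already established in the proof of \cref{claim:sum-Q-RQ} (via \cref{lem:sasha-bound}), whereas you rederive cruder ones from $Q_i\le\langle u,X_{i+1}\rangle^2/\lambda$, but the arithmetic still closes against $(3+2T)^2$.
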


\begin{proof}[Proof of \cref{lem:noise-sum-bound}]%
    For any $\tau < T$, by \cref{claim:noise-subgaussian} and \cref{lem:pena-concentration-uniform} applied with $y=1/m$ and any $x > 0$, with probability $1-\exp\{-\frac{x}{2}\}$, for all $t \in \{\tau, \dotsc, T-1\}$,
    \begin{equation}
        \left(\sum_{i=\tau}^t W_{i+1}\right)^2 \leq \left(\frac{x}{m} + \frac{1}{m} \log\left(\sum_{i=\tau}^t \sigma^2_t + 1 \right)\right)\left(\sum_{i=1}^\tau \sigma_i^2 + 1\right).
    \end{equation}
    We take $x = 2\log(3+2T)$, which by \cref{claim:sum-Q-RQ2} exceeds $\log(\sum_{i=\tau}^t \sigma^2_t  +1)$ on $\{R_0 \leq 2\}$. Therefore, on the intersection of $\{R_0 \leq 2\}$ and the event implicitly defined above,
    \begin{equation}
        \left|\sum_{i=\tau}^t W_{i+1}\right| \leq \sqrt{\frac{2x}{m}}\sqrt{\sum_{i=1}^\tau \sigma_i^2 + 1}\,.
    \end{equation}
    By our assumption on $m$, $\sqrt{2x/m} \leq 1/10$. Also, and using a trivial bound and \cref{claim:sum-Q-RQ},
    \begin{equation}
        \sqrt{\sum_{i=1}^\tau \sigma_i^2 + 1} \leq \sum_{i=1}^\tau \sigma_i^2 + 1 \leq 3 + \sum_{i=\tau}^tQ_iR_i\,,
    \end{equation}
    which combined with the display above yields the form of the claimed bound. 
    
    To conclude the proof, we take a union bound over $\tau \in \{0, \dotsc, T-1\}$ and note that, by our assumption on $m$ and choice of $x$, $x/2 \leq m/400$.
\end{proof}

\section{Proofs of \cref{claim:alpha-non-neg,claim:noise-subgaussian,claim:sum-Q-RQ,claim:sum-Q-RQ2}}\label{sec:proof-of-claims}

Let $\FF_t'' = \sigma(\cF_t\cup \cA_t''\cup \sigma( X_{t+1}))$, where we recall that $\cA''_t = \sigma(\cA_t\cup\sigma(\xi_{t+1}, J_{t+1}, X_{t+1}))$ for each $t \in \N$, that $\E''_t$ denotes the $\FF_t''$-conditional expectation, and that the purpose of conditioning on $\FF_t''$ will be to integrate out the random targets $U_{t+1}^1, \dotsc, U_{t+1}^m$.

\begin{proof}[Proof of \cref{claim:alpha-non-neg}]%
    Fix $u \in \Sd$ and note that
    \begin{equation}\label{eq:R-next-step}
        R^j_{t+1}
        = \frac{\langle u, S_t^j + U^j_{t+1} X_{t+1}\rangle^2}{\|u\|_{V_{t+1}}^2}
        = \frac{\langle u, S^j_t\rangle^2 + (U^j_{t+1})^2 \langle u, X_{t+1} \rangle^2 + 2 U_{t+1}^j
        \langle u, S_{t}^j\rangle\langle u, X_{t+1} \rangle  }{ \|u\|_{V_t}^2 + \langle u, X_{t+1}\rangle^2}.
    \end{equation}
 Observe that $X_{t+1}$ and $S_t^j=S_0^j+\sum_{s\le t} U_s^j X_s$ are $\FF_t''$-measurable and that $U_{t+1}^j$ is independent of $\FF_t''$. The latter of these gives $\E_t'' U_{t+1}^j = 0$ and $\E_t'' (U_{t+1}^j)^2 = \frac{2}{3}$. With that, we have that
    \begin{align}\label{eq:R-next-step-expected}
        \E_t'' R^j_{t+1} - R^j_t
        &= \frac{\langle u, S^j_t\rangle^2 + \frac{2}{3}\langle u, X_{t+1} \rangle^2 }{ \|u\|_{V_t}^2 + \langle u, X_{t+1}\rangle^2} - \frac{\langle u, S_t^j\rangle^2}{\|u\|_{V_t}^2} \\
        &= \frac{ \frac{2}{3}\langle u, X_{t+1}\rangle^2 \|u\|^2_{V_t} - \langle u, S_{t}^j \rangle^2 \langle u, X_{t+1} \rangle^2}{\|u\|_{V_t}^2\left( \|u\|_{V_t}^2 + \langle u, X_{t+1}\rangle^2\right)} \\
        &= \frac{\langle u, X_{t+1}\rangle^2}{\|u\|_{V_t}^2 + \langle u, X_{t+1}\rangle^2} \left(\frac{2}{3} - \frac{ \langle u, S_{t}^j \rangle^2}{\|u\|_{V_t}^2}\right) \\
        &= Q_t \left( \frac{2}{3} - R_t^j\right).
    \end{align}
    The statement follows by averaging over $j \in \{1, \dotsc, m \}$.
\end{proof}

\begin{proof}[Proof of \cref{claim:noise-subgaussian}]%
    Subtracting the first expression on the right-hand side of \cref{eq:R-next-step-expected} from \cref{eq:R-next-step} and averaging over $j \in \{1, \dotsc, m\}$, we see that
    \begin{equation}
    W_{t+1} = R_{t+1} - \E_t'' R_{t+1} = \frac{Q_t}{m} \sum_{j=1}^m \left((U^j_{t+1})^2 - \frac{2}{3}\right) + \frac{1}{m} \sum_{j=1}^m U^j_{t+1} H^j_t
    \end{equation}
    where $H_t^j = \langle u, X_{t+1}\rangle \langle u,  S_t^j\rangle/\|u\|_{V_{t+1}}^2$. 
    Note that $Q_t$ and $H_t$ are $\FF_t''$ measurable and that $U_{t+1}^1, \dotsc, U_{t+1}^m$ are independent of $\FF_t''$ and one another, and that their absolute values are bounded by $1$. Thus, examining the two terms in the sum we see that:
    \begin{itemize}
        \item $\frac{Q_t}{m} \sum_{j=1}^m ((U^j_{t+1})^2 - \frac{2}{3})$ is $\FF_t''$-conditionally $\frac{Q_t}{\sqrt{m}}$-\sg. 
        \item $\frac{1}{m} \sum_{j=1}^m U^j_{t+1} H^j_t$ is $\FF_t''$-conditionally $\frac{H_t}{\sqrt{2m}}$-\sg, where
        \begin{equation}
            (H_t)^2 := \frac{1}{m} \sum_{j=1}^m (H_t^j)^2 = \frac{1}{m}\sum_{j=1}^m \frac{\langle u, X_{t+1}\rangle^2\langle u, S^j_t \rangle^2}{\|u\|_{V_{t+1}}^4}
            = \frac{Q_t}{m} \sum_{j=1}^m \frac{\langle u, S^j_t \rangle^2}{\|u\|_{V_{t+1}}^2} = Q_t R_t.
        \end{equation}
    \end{itemize}
    The result follows by recalling that the sum of an $a$-\sg random variable and a $b$-\sg random variable is $\sqrt{2(a^2+b^2)}$-\sg.
\end{proof}

The proof of the final two claims will require the following simple lemma.

\begin{lemma}\label{lem:sasha-bound}
    Let $b_1, b_2,\dotsc$ be a sequence of real numbers in $[0,1]$. Then, for any $\lambda > 0$ and $n \in \N^+$,
    \begin{equation}
        \sum_{j=1}^n \frac{b_j}{\lambda + \sum_{i=1}^j b_i} \leq
         \log(1+n/\lambda)
         \spaced{and} 
         \sum_{j=1}^n \left(\frac{b_j}{\lambda + \sum_{i=1}^j b_i}\right)^2 
         \leq 
         \frac{n}{\lambda(\lambda+n)} \le \frac{1}{\lambda}\,.
    \end{equation}
\end{lemma}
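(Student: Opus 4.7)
The plan is to treat both sums as telescoping sums after rewriting $b_j$ as an increment of the partial-sum sequence. Define $S_0 = \lambda$ and $S_j = \lambda + \sum_{i=1}^j b_i$ for $j \geq 1$, so that $b_j = S_j - S_{j-1}$ and $S_{j-1} \leq S_j$. Both $S_0 = \lambda > 0$ and $b_j \geq 0$, so each $S_j$ is positive and the sequence is nondecreasing.

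For the first inequality, I would compare $b_j / S_j$ with a logarithmic increment. Since $1/x$ is nonincreasing in $x$, for each $j$
\begin{equation}
    \log(S_j) - \log(S_{j-1}) \;=\; \int_{S_{j-1}}^{S_j} \frac{dx}{x} \;\geq\; \frac{S_j - S_{j-1}}{S_j} \;=\; \frac{b_j}{S_j}\,.
\end{equation}
Summing over $j \in [n]$ gives $\sum_{j=1}^n b_j/S_j \leq \log(S_n/\lambda) = \log(1 + \sum_{i=1}^n b_i / \lambda)$, and the right-hand side is bounded above by $\log(1 + n/\lambda)$ because each $b_i \in [0,1]$ and $\log$ is monotone.

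For the second inequality, I would use $b_j \leq 1$ to bound $b_j^2 \leq b_j$, and then bound $1/S_j^2$ by $1/(S_{j-1} S_j)$, which exhibits a clean telescope:
\begin{equation}
    \frac{b_j^2}{S_j^2} \;\leq\; \frac{b_j}{S_{j-1} S_j} \;=\; \frac{S_j - S_{j-1}}{S_{j-1} S_j} \;=\; \frac{1}{S_{j-1}} - \frac{1}{S_j}\,.
\end{equation}
Summing, the telescope collapses to $1/\lambda - 1/S_n = (S_n - \lambda)/(\lambda S_n) = \sum_{i=1}^n b_i / (\lambda S_n)$. Since $x \mapsto x/(\lambda + x)$ is increasing on $[0,\infty)$ and $\sum_{i=1}^n b_i \leq n$, this last expression is at most $n/(\lambda(\lambda + n))$, which is in turn at most $1/\lambda$.

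There is no real obstacle here: both bounds follow from an integral/telescoping comparison together with the crude facts $b_j \leq 1$ and $S_{j-1} \leq S_j$. The only point requiring a moment of care is the reduction $b_j^2 \leq b_j$ in the second part, which is exactly what lets us convert the awkward $b_j^2$ into a genuine partial-sum increment and thereby telescope.
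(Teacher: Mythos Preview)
Your proof is correct and essentially the same as the paper's: both parts hinge on the partial-sum increments $b_j = S_j - S_{j-1}$, the reduction $b_j^2 \le b_j$, and an integral/telescoping comparison with the antiderivatives of $1/x$ and $1/x^2$. The only cosmetic difference is that for the squared sum the paper writes the bound as $\int_0^{n} (\lambda+x)^{-2}\,dx$, whereas you phrase the same step as the exact telescope $1/S_{j-1} - 1/S_j$; these are the discrete and continuous faces of the same identity.
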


\begin{proof} 
    Let $B_0=0$ and for $j\in [n]$ let $B_{j} = B_{j-1}+b_j$.
The function $f(x) = \frac{1}{\lambda+x}$ is decreasing on $[0,\infty)$.
Hence, $\int_{B_{j-1}}^{B_j} f(x) dx \ge b_j f(B_j)$. Summing these up,
\begin{align}
[\log(\lambda+x)]_0^{B_n} = \int_0^{B_n} f(x)dx = \sum_{j=1}^n \int_{B_{j-1}}^{B_j} f(x) dx \ge
 \sum_{j=1}^n b_j f(B_j) = \sum_{j=1}^n \frac{b_j}{\lambda + \sum_{i=1}^j b_i}\,.
\end{align}
Evaluating the left-hand side and noting that $B_n \le n$ holds because $b_i\le 1$ gives the first result. For the second sum, we use a similar argument with $g(x) = 1/(\lambda+x)^2$:
\begin{align}
\sum_{j=1}^n \left(\frac{b_j}{\lambda + \sum_{i=1}^j b_i}\right)^2 
&= 
\sum_{j=1}^n \frac{b_j^2}{(\lambda + B_j)^2 }\\
& \le 
\sum_{j=1}^n \frac{b_j}{(\lambda + B_j)^2 } \tag{because $0\le b_j\le 1$}\\
&\le 
\int_0^{n} g(x) dx \tag{$g$ is decreasing on $[0,\infty)$, $B_n\le n$}\\
&= \left[\frac{-1}{\lambda+x}\right]_0^{n} = \frac{n}{\lambda(\lambda+n)}\,. \tag*{\qedhere}
\end{align}
\end{proof}

\begin{proof}[Proof of \cref{claim:sum-Q-RQ}]%
    Noting that since $\|u\|_2=1$ and $\lambda\geq 5$, by \cref{lem:sasha-bound},
    \begin{equation}
        \sum_{i=\tau}^{t} Q_i
        \leq  \sum_{i=0}^{T-1}\frac{\langle u, X_{i+1}\rangle^2}{\lambda+ \sum_{j=0}^{i+1}\langle u, X_{j}\rangle^2}
        \leq
        \log(1+T/5) \le T
    \end{equation}
    and 
    \begin{equation}\label{eq:sum-Q-bound}
            \sum_{i=\tau}^t Q^2_i  \leq  \sum_{i=0}^{T-1} Q^2_i =
            \sum_{i=0}^{T-1}\left( \frac{\langle u, X_{i+1}\rangle^2}{\lambda+ \sum_{j=0}^{i+1}\langle u, X_{j}\rangle^2}\right)^2
            \leq
            \frac{1}{\lambda} \le 1\,.
    \end{equation}
    Using these, we have
    \begin{equation}
        1 + \sum_{i=\tau}^t \sigma^2_i = 1 + 2\sum_{i=\tau}^t Q_i^2 + \sum_{i=\tau}^t R_iQ_i \leq 3 + \sum_{i=\tau}^t R_iQ_i \leq 3 + T\max_{\tau \leq i \leq t} R_i\,. \tag*{\qedhere}
    \end{equation}
\end{proof}

\begin{proof}[Proof of \cref{claim:sum-Q-RQ2}]%
        Since $(a+b)^2 \leq 2a^2 + 2b^2$ and by symmetry,
    \begin{align}\label{eq:bound-R-as}
        R_i^j = \frac{\langle u, S_0^j + \sum_{i=1}^i U^j_\ell X_\ell \rangle^2}{\lambda + \sum_{\ell=1}^i \langle u, X_\ell \rangle^2} \leq 2 R^j_0 + 2\frac{\left(\sum_{\ell=1}^i \langle u, X_\ell \rangle\right)^2}{\lambda + \sum_{\ell=1}^i \langle u, X_\ell \rangle^2}
        &\leq 2 R^j_0 + 2\max_{b \in [0,1]} \frac{(ib)^2}{\lambda + ib^2}\\
        &\leq 2R^j_0 + 2i.
    \end{align}
    By definition, $R_i = \frac{1}{m} \sum_{j=1}^m R^j_i$, and by assumption $R_0 \leq 2$ and $i \leq T-1$, so $R_i \leq 4 + 2i \leq 2 + 2T$. And so,
    \begin{equation}
        3 + T\max_{\tau \leq i \leq t} R_i \leq 3 + T (2 + 2T) \leq (3+2T)^2\,. \tag*{\qedhere}
    \end{equation}
\end{proof}

\end{document}